\DeclareRobustCommand\widecheck[1]{{\mathpalette\@widecheck{#1}}}
\def\@widecheck#1#2{%
    \setbox\z@\hbox{\m@th$#1#2$}%
    \setbox\tw@\hbox{\m@th$#1%
       \widehat{%
          \vrule\@width\z@\@height\ht\z@
          \vrule\@height\z@\@width\wd\z@}$}%
    \dp\tw@-\ht\z@
    \@tempdima\ht\z@ \advance\@tempdima2\ht\tw@ \divide\@tempdima\thr@@
    \setbox\tw@\hbox{%
       \raise\@tempdima\hbox{\scalebox{1}[-1]{\lower\@tempdima\box
\tw@}}}%
    {\ooalign{\box\tw@ \cr \box\z@}}}
\newlength{\mylength}
\newcommand{\mycfs}[1]{%
  \normalsize
  \@defaultunits\mylength=#1pt\relax\@nnil
  \edef\@tempa{{\strip@pt\mylength}}%
  \ifx\protect\@typeset@protect
     \edef\@currsize{\noexpand\mycfs\@tempa}
  \fi
  \mylength=1.2\mylength
  \edef\@tempa{\@tempa{\strip@pt\mylength}}%
  \expandafter\fontsize\@tempa
  \selectfont
}
\theoremstyle{plain} 
\newtheorem{theorem}{Theorem}[section]
\newtheorem*{thA}{Theorem~A}
\newtheorem*{thB}{Theorem~B}
\newtheorem{lemma}[theorem]{Lemma}
\newtheorem{proposition}[theorem]{Proposition}
\theoremstyle{definition} 
\theoremstyle{definition} 
\theoremstyle{remark} 
\theoremstyle{remark} 
\newtheorem{remark}[theorem]{Remark}
\numberwithin{equation}{section}
\newcommand{\texorpdfstring}[2]{#1}
\newcommand{\hide}[1]{}
\newcommand{\pred}[1]{\ii{#1}}
\newcommand{\E}{\operatorname{\mathsf{E}}}
\renewcommand{\P}{\operatorname{\mathsf{P}}}
\newcommand{\A}
{\mathop{\operatorname{ave}}}
\newcommand{\Ax}
{\mathop{\operatorname{ave}'_x}}
\newcommand{\cc}{\mathsf{c}}
\newcommand{\emp}{\mathsf{
ERM}}
\newcommand{\erm}{\mathsf{ERM}}
\newcommand{\LB}{\mathsf{LB}}
\newcommand{\UB}{\mathsf{UB}}
\newcommand{\plo}{P_\mathsf{low}}
\newcommand{\erf}{\operatorname{erf}}
\newcommand{\sign}{\operatorname{sgn}}
\newcommand{\card}{\operatorname{card}}
\newcommand{\tsuml}{\mathop{\textstyle{\sum}}\limits}
\newcommand{\Rad}
{R}
\newcommand{\opt}{\operatorname{bayes}}
\newcommand{\vc}[1]{{\operatorname{
VC(#1)}}}
\newcommand{\hatopt}{\widecheck{\opt}}
\newcommand{\vbias}{\boldsymbol{\bias}}
\newcommand{\hopt}{h_{\vbias}}
\def\bias
\renewcommand{\vbias}{\gamma}
\renewcommand{\vbias}{\g}
\newcommand{\g}
{\beta}
\newcommand{\al}{\alpha}
\newcommand{\De}{\Delta}
\newcommand{\ka}{\kappa}
\newcommand{\la}{\lambda}
\newcommand{\si}{\sigma}
\newcommand{\intr}[2]{\overline{{#1},{#2}}}
\newcommand{\ii}[1]{\operatorname{I}\{#1\}}
\newcommand{\basicspace}{\calX}
\newcommand{\X}{\basicspace}
\newcommand{\Y}{\calY}
\newcommand{\Z}{\calZ}
\renewcommand{\H}{\calH}
\newcommand{\RR}{\mathfrak{R}}
\newcommand{\D}{\mathcal{D}}
\newcommand{\calH}{\mathcal{H}}
\newcommand{\J}{\mathcal{J}}
\newcommand{\calX}{\mathcal{X}}
\newcommand{\calY}{\mathcal{Y}}
\newcommand{\calZ}{\mathcal{Z}}
\newcommand{\ZZ}{\mathbb{Z}}
\newcommand{\beq}{\begin{eqnarray*}}
\newcommand{\eeq}{\end{eqnarray*}}
\newcommand{\beqn}{\begin{eqnarray}}
\newcommand{\eeqn}{\end{eqnarray}}
\newcommand{\bepf}{\begin{pf}}
\newcommand{\enpf}{$\Box$ \end{pf}}
\newcommand{\paren}[1]{\left( #1 \right)}
\newcommand{\ben}{\begin{enumerate}}
\newcommand{\een}{\end{enumerate}}
\newcommand{\bit}{\begin{itemize}}
\newcommand{\eit}{\end{itemize}}
\def\clap#1{\hbox to 0pt{\hss#1\hss}}
\newcommand{\err}{\operatorname{err}}
\newcommand{\vp}{\varepsilon}
\renewcommand{\L}{\mathcal{L}}
\newcommand{\Lrand}{\L_{\mathsf{rand}}}
\newcommand{\Lrandm}[1]{\L_{\mathsf{rand},#1}}
\newcommand{\tB}{\tilde{B}}
\newcommand{\tc}{\tilde{c}}
\newcommand{\tC}{\tilde{C}}
\newcommand{\tD}{{\tilde{D}}}
\newcommand{\ts}{\tilde{s}}
\newcommand{\tS}{\tilde{S}}
\newcommand{\tX}{\tilde{\X}}
\newcommand{\citep}{\cite}
\newcommand{\citet}{\cite}
\begin{document}

\pagenumbering{Alph}

\begin{frontmatter}

\title{Exact Lower Bounds for the Agnostic Probably-Approximately-Correct (PAC) Machine Learning Model
}
\runtitle{Exact Agnostic PAC Lower Bounds}

\begin{aug}
\author{\fnms{Aryeh} \snm{Kontorovich}
\ead[label=e1]{karyeh@cs.bgu.ac.il}
} 
\and
\author{\fnms{ Iosif} \snm{Pinelis}
\ead[label=e2]{ipinelis@mtu.edu}
}

\thankstext{t1}{\today}
\runauthor{A. Kontorovich and I. Pinelis}

\affiliation{Ben-Gurion University\thanksmark{m1} and Michigan Technological University\thanksmark{m2}}

\address{Department of Computer Science\\
Ben-Gurion University\\
Beer Sheva, Israel 84105\\
\printead{e1}\\
}

\address{Department of Mathematical Sciences\\
Michigan Technological University\\
Houghton, Michigan 49931-1295 
U.S.A.\\
\printead{e2}\\
}
\end{aug}

\begin{abstract} 
We provide an exact non-asymptotic lower bound on the minimax expected excess risk (EER) in the agnostic probably-ap\-proximately-correct (PAC) machine learning classification model and identify 
minimax learning algorithms as certain maximally symmetric and minimally randomized ``voting'' procedures. 
Based on this result, an exact asymptotic lower bound on the minimax EER is provided.  
This bound is of the simple form $c_\infty/\sqrt{\nu}$ as $\nu\to\infty$, where $c_\infty=0.16997\dots$ is a universal constant, $\nu=m/d$, $m$ is the size of the  
training sample, and $d$ is the Vapnik--Chervonenkis dimension of the hypothesis class. 
It is shown that the differences between these asymptotic and non-asymptotic bounds, as well as the differences between these two bounds and the maximum EER of any learning algorithms that minimize the empirical risk, are asymptotically negligible, and all these differences are due to ties in the mentioned ``voting'' procedures. A few easy to compute non-asymptotic lower bounds on the minimax EER are also obtained, which are shown to be close to the exact asymptotic lower bound $c_\infty/\sqrt{\nu}$ even for rather small values of the ratio $\nu=m/d$. 
As an application of these results, we substantially improve existing lower bounds on the tail probability of the excess risk. 
Among the tools used are  
Bayes estimation and apparently new identities and inequalities for binomial distributions.  
\end{abstract}

\setattribute{keyword}{AMS}{AMS 2010 subject classifications:}
\begin{keyword}[class=AMS]
\kwd[Primary ]{68T05}
\kwd{62C20}
\kwd{62C10}
\kwd{\break 62C12}
\kwd{62G20}
\kwd{62H30}
\kwd[; secondary ]{62G10}
\kwd{62C20}
\kwd{91A35}
\kwd{60C05}
\end{keyword}


		
		

 	

\begin{keyword}
\kwd{PAC learning theory}
\kwd{classification}
\kwd{generalization %
error}
\kwd{minimax decision rules}
\kwd{Bayes decision rules}
\kwd{empirical estimators}
\kwd{binomial distribution}
\end{keyword}

\end{frontmatter}

\pagenumbering{arabic}



\section{Introduction}
\label{sec:intro}

The Probably Approximately Correct (PAC) model aims at providing a clean, plausible
and minimalistic
abstraction of the supervised learning process~\citep{
MR0288823,valiant84}.
In this paper
we are concerned with the version of this model most commonly appearing in modern literature,
{\em agnostic PAC}
\citep{
haussler92,kearns-etal,183126}.

\label{p_intro}
Let $\X$ be an arbitrary nonempty set. 
The objective is to classify the elements of the set $\X$ 
into two classes, by attaching a label $1$ or $-1$ to each $x\in\X$. 
Let $\Y:=\{-1,1\}$, the set of labels. 
Then a possible classification 
rule may be identified with a map $h\colon\X\to\Y$, called a
{\em hypothesis}.
Usually,
hypotheses
are restricted to be elements of 
a specified subset $\H$ of the set $\Y^\X$ of all maps of $\X$ to $\Y$; this subset $\H$ is called the \emph{hypothesis class}. 

It is assumed that there exists a true (but unknown to us) probability distribution, say $D$, on the set $\X\times\Y$ of all pairs $(x,y)$ with $x\in\X$ and $y\in\Y$. 
To avoid tedious matters of measurability, let us just assume that the set $\X$ is finite. 

In the agnostic PAC model, considered in this paper, it is assumed that the distribution $D$ may be of completely arbitrary form, and the only information about it is provided to us by the ``sample'' values of a {\em labeled sample} $(X^{D}_1,Y_1^{D}),\dots,(X_m^{D},Y_m^{D})$ of $m$ independent copies of a random pair \break 
$(X^{D},Y^{D})$; here and in what follows, the superscript indicates the distribution of the random pair.  

%
The classification error probability for
a
hypothesis
$h\in\H$ is 
\begin{equation}\label{eq:err-def}
	\err(h,D):= \P(h(X^D)\neq Y^D).  
\end{equation}  
It should 
be clear that 
the least possible error probability
\begin{equation}\label{eq:min-err-def}
	\err_{\min}(D)
	:=\err_{\min,\H}(D):=\min_{h\in\H}\err(h,D)
\end{equation}
will 
usually be strictly greater than $0$, even when the true distribution $D$ is known. 


In the agnostic PAC model, considered here, the only information about the unknown distribution $D$ is provided by the  values of the sequence of $m$ independent random pairs 
\begin{equation}\label{eq:Z_m}
Z_m^D:=\big((X^{D}_1,Y_1^{D}),\dots,(X_m^{D},Y_m^{D})\big). 
\end{equation}
\label{L}
Therefore, the available ``learning'' strategies are the mappings 
\begin{equation*}
L\colon(\X\times\Y)^m\to\H, 	
\end{equation*}
called \emph{learning algorithms}. 

Let $h_D$ denote any minimizer of $\err(h,D)$
over
$h\in\H$. Of course, $h_D$ is unknown, since the distribution $D$ is unknown. However, it may be reasonable to use the
plug-in 
estimator $h_{\hat D_m}$ of $h_D$, obtained by substituting for $D$ the empirical distribution $\hat D_m=\hat D_m(z_m)$ based on a ``realization''
\begin{equation}\label{eq:z_m}
z_m:=\big((x_1,y_1),\dots,(x_m,y_m)\big)\in(\X\times\Y)^m 	
\end{equation}
of the ``random sample'' $Z_m$ from the distribution $D$. That is, 
\begin{equation*}
h_{\hat D_m}=
L_\emp\big(Z_m^D\big), 	
\end{equation*}
\noindent 
where $L_\emp$ is 
\label{L_emp} %
an empirical risk minimizer, that is, 
any learning algorithm such that for each
given sequence 
$z_m\in(\X\times\Y)^m$, the corresponding value
$L_\emp(z_m)$ of $L_\emp$ is a minimizer in $h\in\H$ of the ``empirical risk'' 
\begin{equation*}
	\err(h,\hat D_m)=\frac1m\,\sum_{i=1}^m\pred{h(x_i)\ne y_i}.
\end{equation*}
Such a minimizer
need not 
be unique, and so, the ``empirical minimization'' learning algorithm $L_\emp$ does not have to be unique. 

A
nontrivial 
question to ask here is how well
the empirical risk minimizer $h_{\hat D_m}$
performs
compared to the best possible
hypothesis,
$h_{D}$ --- that is,
how large the \emph{excess risk} $\De(h_{\hat D_m},D)$ is, where  
\begin{equation}
  \label{eq:Delta}
	\De(h,D) := \err(h,D)-\err_{\min}(D)=\err(h,D)-\err(h_D,D).
\end{equation}

This question has been
to a large extent resolved.
In particular, Theorem~4.9 from \citep{MR1741038} (slightly restated here) provides the following upper bound on the tail probabilities for the excess risk. 
\begin{thA}
  \label{thm:upper}
  There is a universal 
  real constant $c>0$ such that
  for all finite sets $\X$, 
  all distributions $D$ on $\X\times\Y$,
  all sample sizes $m
  $,
  and all hypothesis classes 
  $\H\subseteq\Y^\X$ of VC dimension $d
  $, 
  we have 
\begin{equation}\label{eq:upper}
  \P\paren{ \De\big(
  L_\emp(Z_m^D),D\big) \ge cu} \le 
  \exp\{-(mu^2-d)_+\}	
\end{equation} 
for all real $u\ge0$,   
where $z_+:=0\vee z$ for real $z$.   
\end{thA}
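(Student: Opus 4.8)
The plan is to reduce Theorem~A to a uniform deviation bound for the empirical error over $\H$, and then to control that deviation by the standard VC-theoretic machinery: symmetrization, covering numbers for VC classes, chaining, and a bounded-differences concentration inequality. Write $\hat h:=L_\emp(Z_m^D)$ and
\[
W:=\sup_{h\in\H}\abs{\err(h,D)-\err(h,\hat D_m)}.
\]
The starting point is the elementary empirical-risk-minimization comparison: with $h_D$ a minimizer of $\err(\cdot,D)$,
\[
\err(\hat h,D)-\err(h_D,D)=\bigl[\err(\hat h,D)-\err(\hat h,\hat D_m)\bigr]+\bigl[\err(\hat h,\hat D_m)-\err(h_D,\hat D_m)\bigr]+\bigl[\err(h_D,\hat D_m)-\err(h_D,D)\bigr],
\]
in which the middle bracket is $\le0$ because $\hat h$ minimizes the empirical risk; hence $\De(\hat h,D)\le2W$. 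So it suffices to produce a universal constant $K\ge1$ for which $\P(2W\ge cu)\le\exp\{-(mu^2-d)_+\}$ with $c:=4K$; and since the right-hand side equals $1$ whenever $mu^2\le d$, only the range $u>\sqrt{d/m}$ needs attention.

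Next I would bound $\E W$. To each $h\in\H$ associate the $\{0,1\}$-valued function $g_h(x,y):=\pred{h(x)\ne y}$ on $\X\times\Y$ and set $\calG:=\{g_h:h\in\H\}$; since a finite subset of $\X\times\Y$ is shattered by $\calG$ iff its first coordinates are distinct and shattered by $\H$, the class $\calG$ has VC dimension $d$. By the symmetrization inequality, $\E W\le2\,\mathfrak{R}_m(\calG)$, where $\mathfrak{R}_m(\calG)$ is the Rademacher complexity of $\calG$ on an i.i.d.\ $D$-sample of size $m$. The quantitative heart of the matter is that a VC class of dimension $d$ has $L_2(Q)$ covering numbers bounded by $\log N(\eps,\calG,L_2(Q))\le C_1d\log(C_2/\eps)$ for $0<\eps\le1$, uniformly in the probability measure $Q$ (Haussler's bound on the covering numbers of VC classes); the Dudley entropy-integral bound then gives
\[
\mathfrak{R}_m(\calG)\le\frac{C}{\sqrt m}\int_0^1\sqrt{C_1d\log(C_2/\eps)}\,\d\eps=K_0\sqrt{\frac dm}
\]
with $K_0$ a universal constant, the integral being finite. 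Hence $\E W\le2K_0\sqrt{d/m}$, and we put $K:=\max\{2K_0,1\}$.

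Finally I would upgrade this in-expectation bound to a tail bound. Replacing one of the $m$ sample points changes $\err(h,\hat D_m)$ by at most $1/m$ for every $h$, hence changes $W$ by at most $1/m$, so McDiarmid's bounded-differences inequality gives $\P(W\ge\E W+t)\le\exp(-2mt^2)$ for every $t\ge0$. For $u>\sqrt{d/m}$ one has $\E W\le K\sqrt{d/m}<Ku$, so with $c=4K$,
\[
\P\bigl(\De(\hat h,D)\ge cu\bigr)\le\P(W\ge2Ku)\le\exp\!\bigl(-2m(2Ku-\E W)^2\bigr)\le\exp(-2mK^2u^2)\le\exp\{-(mu^2-d)_+\},
\]
using $2Ku-\E W>Ku>0$ and $2K^2\ge2\ge1$; for $u\le\sqrt{d/m}$ the asserted bound is trivial. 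This gives Theorem~A.

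Everything here except one step is routine: the ERM decomposition, symmetrization, McDiarmid, and the final bookkeeping of absolute constants are all standard. The substantive step, and the place I expect the real difficulty, is getting the $\sqrt{d/m}$ rather than $\sqrt{(d\log m)/m}$ bound on the Rademacher complexity: the naive route through the Sauer--Shelah lemma and a union bound over the projection of $\H$ onto the sample loses a spurious $\log m$ factor, and removing it requires the polynomial covering-number bound for VC-subgraph classes together with chaining --- or, equivalently, one of the sharper combinatorial arguments of Haussler or Talagrand --- which is precisely the machinery developed in the monograph from which Theorem~A is quoted.
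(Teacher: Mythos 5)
Your proposal is essentially correct. The paper itself contains no proof of Theorem~A --- it is quoted (slightly restated) from Theorem~4.9 of the cited monograph of Anthony and Bartlett, with the key ingredients attributed to Vapnik--Chervonenkis, Talagrand, Haussler, and Long --- so there is no in-paper argument to compare against; but your derivation is a sound standard route to the stated bound. The ERM decomposition $\De(\hat h,D)\le 2W$, the identification of the loss class $\calG$ as a VC class of dimension $d$, the symmetrization step, McDiarmid with increments $1/m$, and the final bookkeeping (noting $2mK^2u^2\ge (mu^2-d)_+$ since $K\ge1$, and that the claim is vacuous for $mu^2\le d$) are all correct as written. You also correctly locate the one genuinely non-trivial input: the log-free bound $\E W=O(\sqrt{d/m})$, which you import via Haussler's uniform $L_2$ covering-number bound for VC classes plus Dudley chaining rather than prove. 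That is a legitimate (and nowadays standard) way to obtain it; the route underlying the cited Theorem~4.9 goes instead through the Talagrand/Long-type combinatorial and chaining estimates, but both deliver the same $\sqrt{d/m}$ rate without the spurious $\log m$ of the naive Sauer--Shelah union bound, so your proof is consistent in spirit with the source the paper relies on. The only caveat, which you yourself flag, is that your argument is complete only modulo those quoted results (Haussler's covering bound and the Dudley entropy integral), exactly as the paper treats Theorem~A itself as a known, highly non-trivial result.
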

See \citet{MR1741038}
for an account of the intermediate steps leading up to the 
highly non-trivial result presented in Theorem~A; 
milestones here
include 
the seminal paper \citet{MR0288823} by Vapnik and Chervonenkis, 
\label{VC71}
followed, notably, by 
work of 
Talagrand~\citet{talagrand1994}, 
Haussler~\citet{MR1313896}, and 
Long~\citet{Long1999}.


Recall that the VC dimension (that is, the Vapnik--Chervonenkis dimension) of a set $\H\subseteq\Y^\X$ is the largest nonnegative integer $k$ such that there is a subset of $\X$ of cardinality $k$ that is shattered by $\H$; and a subset $\X_0$ of $\X$ is said to be shattered by $\H$ if the set of the restrictions to $\X_0$ of all the functions $h\in\H$ coincides with the entire set $\Y^{\X_0}$ of all functions from $\X_0$ to $\Y$. 

In what follows, $d$ will always denote $\vc\H$, the VC dimension of $\H$. 
\label{VC}
The case $d=0$ may occur only if the cardinality of $\H$ is at most $1$, so that there is at most one hypothesis to choose. This trivial case will be excluded in the sequel; that is, we shall assume that 
$d=1$. 
Then, in particular, one can introduce the
fundamental 
ratio 
\begin{equation}\label{eq:nu:=}
	\nu:=
	m/d  
\end{equation}
of the sample size $m$ to the VC dimension $d$.

Lower bounds 
matching, up to constant factors, the upper bound given in Theorem~A 
are also known. The one with the apparently best
currently 
known
numerical constants was given in
\citep[Theorem 5.2]{MR1741038}, which can be restated as follows. 

\begin{thB}
  \label{thm:lower}
  If $\nu=m/d\ge64^2/320=12.8$, then
  for 
  any finite set
  $\X$, 
any
hypothesis class
  $\H\subseteq\Y^\X$ of VC dimension $d$,
  and
  any learning algorithm $L\colon(\X\times\Y)^m\to\H$,
  there is a distribution $D$ on $\X\times\Y$ such that 
\begin{equation}\label{eq:lower}
	\P\Big(\De\big(%
	L(Z_m^D),D\big) > \frac1{\sqrt{320\nu}}\Big)\ge\frac1{64}. 
\end{equation}
\end{thB}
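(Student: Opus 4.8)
\emph{Setup and reduction to a sign‑estimation problem.} The plan is an Assouad‑type (hypercube) argument. Since $\vc\H=d$, fix a set $\set{x_1,\dots,x_d}\subseteq\X$ shattered by $\H$; by shattering, for each $\sigma=(\sigma_1,\dots,\sigma_d)\in\set{-1,1}^d$ we may fix some $h_\sigma\in\H$ with $h_\sigma(x_i)=\sigma_i$ for all $i$. For a bias $b\in(0,1)$ to be chosen of order $1/\sqrt\nu$, let $D_\sigma$ be the distribution on $\X\times\Y$ under which $X$ is uniform on $\set{x_1,\dots,x_d}$ and $\P(Y=\sigma_i\mid X=x_i)=\tfrac{1+b}2$. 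A one‑line computation gives, for every $h\in\H$,
\begin{equation*}
\err(h,D_\sigma)=\tfrac{1-b}2+\tfrac bd\,\rho(h,h_\sigma),\qquad \rho(h,h_\sigma):=\#\set{i\le d:h(x_i)\ne\sigma_i},
\end{equation*}
so that $\err_{\min}(D_\sigma)=\tfrac{1-b}2$ is attained at $h_\sigma\in\H$ and $\De(h,D_\sigma)=\tfrac bd\,\rho(h,h_\sigma)\in[0,b]$. Thus learning well under the worst $D_\sigma$ forces the algorithm to recover most coordinates of $\sigma$.

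\emph{Coordinate‑wise lower bound.} It suffices to treat deterministic $L$ (for randomized $L$, condition on its internal randomness and average, since the bound below is on an expectation). Put $\widehat h:=L(Z_m^{D_\sigma})$ and average over $\sigma$ uniform on $\set{-1,1}^d$:
\begin{equation*}
\E_\sigma\,\E_{Z_m^{D_\sigma}}\De(\widehat h,D_\sigma)=\frac bd\sum_{i=1}^d\P_{\sigma,Z}\big(\widehat h(x_i)\ne\sigma_i\big).
\end{equation*}
Fix $i$ and condition on the $x$‑coordinates of the sample; let $N_i\sim\bin(m,1/d)$ count those equal to $x_i$, so $\E N_i=\nu$. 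The coordinates $\sigma_1,\dots,\sigma_d$ are i.i.d.\ and independent of the $x$‑coordinates, and given $N_i=n$ the labels observed at $x_i$ depend on $\sigma$ only through $\sigma_i$, their number of $+1$'s being $\bin(n,\tfrac{1\pm b}2)$ according as $\sigma_i=\pm1$. Hence, by the Bayes‑error formula for binary testing and then the Hellinger bound on total variation,
\begin{equation*}
\P_{\sigma,Z}\big(\widehat h(x_i)\ne\sigma_i\mid N_i=n\big)\ge\tfrac12\Big(1-\TV{\bin(n,\tfrac{1+b}2)-\bin(n,\tfrac{1-b}2)}\Big)\ge\tfrac12\big(1-b\sqrt n\big).
\end{equation*}
Averaging over $N_i$ and using $\E\sqrt{N_i}\le\sqrt{\E N_i}=\sqrt\nu$ gives $\P_{\sigma,Z}(\widehat h(x_i)\ne\sigma_i)\ge\tfrac12(1-b\sqrt\nu)$ for every $i$.

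\emph{Aggregation and the tail bound.} Summing over $i$ yields $\E_\sigma\E_Z\De(\widehat h,D_\sigma)\ge\tfrac12(1-b\sqrt\nu)\,b$, so there is a fixed $\sigma^\ast$ with $\E_Z\De\big(L(Z_m^{D_{\sigma^\ast}}),D_{\sigma^\ast}\big)\ge\tfrac12(1-b\sqrt\nu)\,b$. Since $\De(\cdot,D_{\sigma^\ast})$ is $[0,b]$‑valued, the reverse Markov inequality gives, for every $\la\in(0,1)$,
\begin{equation*}
\P\big(\De(L(Z_m^{D_{\sigma^\ast}}),D_{\sigma^\ast})>\la b\big)\ge\frac{\tfrac12(1-b\sqrt\nu)-\la}{1-\la}.
\end{equation*}
Taking $b=\tfrac12/\sqrt\nu$ (so $b<1$ automatically and $b\sqrt\nu=\tfrac12$) and $\la=\tfrac5{21}$ makes the right side equal to $\tfrac1{64}$ while $\la b=\tfrac5{42}/\sqrt\nu>1/\sqrt{320\nu}$, which is \eqref{eq:lower}. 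In fact this works for every $\nu\ge1$; the hypothesis $\nu\ge64^2/320=12.8$ is far more than enough, and stems from using a cruder binomial estimate than the Hellinger bound above.

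\emph{Main obstacle.} The first and third steps are routine; the crux is the second — estimating the binomial total variation \emph{with an explicit constant} and \emph{after averaging over the random counts $N_i\sim\bin(m,1/d)$} — since the numerical constants $320$ and $64$ and the admissible range of $\nu$ in \eqref{eq:lower} are entirely dictated by how sharp that estimate is made.
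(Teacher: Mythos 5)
Your proof is correct: the hypercube construction, the coordinate-wise Bayes-error bound via $\mathrm{TV}\le b\sqrt n$, Jensen over $N_i\sim\bin(m,1/d)$, and the reverse-Markov step all check out, and with $b=\tfrac1{2\sqrt\nu}$, $\la=\tfrac5{21}$ the bound is exactly $\tfrac1{64}$ at threshold $\tfrac5{42\sqrt\nu}>1/\sqrt{320\nu}$, so \eqref{eq:lower} follows (even for all $\nu\ge1$). The paper itself only quotes Theorem~B from \cite{MR1741038}, but your argument is essentially the same route the paper's own machinery takes — averaging over $\g\in\{-b,b\}^{[d]}$ with uniform marginal (Theorem~\ref{th:average}), the binomial inequality $s_k(b)\le b\sqrt k$ plus Jensen (as in Proposition~\ref{prop:0.125}), and the indicator/reverse-Markov conversion used in the proof of Theorem~\ref{th:P low} — except that the paper replaces the crude $b\sqrt n$ estimate by the exact Bayes risk $\opt$ and its convex minorant to obtain sharper constants.
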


This lower bound is also the culmination of a notable historical development \cite{MR0288823,devr-lug95,simon96},
detailed in~\citet{MR1741038}.

\begin{remark}
  In~\citet[Section 28.2.2]{shwartz2014understanding} a much better constant factor, $1/8$, was claimed in place of
  $320$
  in (\ref{eq:lower}).
However, there is a mistake
  in the calculation; the actual value of the constant furnished by the proof is $512$.
\end{remark}

Introduce the \emph{expected excess risk} 
(EER) 
\begin{equation}\label{eq:R}
	\RR(L,D):=\RR_m(L,D):=\E\De\big(L(Z_m^D),D\big). 
\end{equation}

%

Let $\D:=\D_{\X
}$ and $\L:=\L_{\X
;m,\H}$ denote, 
\label{LL}
respectively, the set of all distributions on $\X\times\Y$ and the set of all learning algorithms $L\colon(\X\times\Y)^m\to\H$; recall here that $\Y=\{-1,1\}$. 
Let then 
\begin{equation}\label{eq:ub,lb}
\begin{aligned}
	c_{m,d}^\UB:=\sqrt{
	m/d}\,\sup_\X\sup_{
	\vc\H=d}\;\inf_{L\in\L_{\X
	;m,\H}}\,\sup_{D\in\D_{\X
	}}\RR_m(L,D), \\ 
	c_{m,d}^\LB:=\sqrt{
	m/d}\,\inf_\X\inf_{
	\vc\H=d}\;\inf_{L\in\L_{\X
	;m,\H}}\,\sup_{D\in\D_{\X
	}}\RR_m(L,D),
\end{aligned}	
\end{equation}
where $\sup_\X$ and $\inf_\X$ are taken over all finite sets $\X$, 
and $\sup_{\vc\H=d}$ and $\inf_{\vc\H=d}$ are taken over all hypothesis classes $\H$ of VC dimension $d$. 
The quantity 
$\inf_{L\in\L} \sup_{D\in\D}\RR_m(L,D)$ may be referred to as the \emph{minimax 
}EER. 

Integrating both sides of inequality \eqref{eq:upper} in $u\ge0$, one sees that 
\begin{equation}\label{eq:cub<infty}
	c^\UB:=\sup_{m,d}c_{m,d}^\UB
	\le
        \sup_{m,d} \, \sqrt{
        m/d} \,
        \sup_\X\sup_{
        \vc\H=d}\,\sup_
        {D\in\D
        _\X}
	\RR_m(L_\emp,D)<\infty,  
\end{equation} 
where $\sup_{m,d}$ is taken over all natural $m$ and $d$;  
an exact calculation of $c^\UB$
seems to be beyond the reach of current methods. 

It is also clear that inequality \eqref{eq:lower} implies 
\begin{equation}\label{eq:clb>0.0008}
	c_{\nu\ge12.8}^\LB>\frac1{64\sqrt{320}}=0.000873\ldots>0,    
\end{equation}
where
$c_{\nu\ge\nu_*}^\LB:=\inf\{c_{m,d}^\LB\colon m/d\ge\nu_*\}$. 
for any real $\nu_*>0$. 
A
remarkable fact that follows from \eqref{eq:cub<infty} and \eqref{eq:clb>0.0008} is that 
\begin{equation*}
	0<\liminf_{m/d\to\infty}c_{m,d}^\LB\le\limsup_{m/d\to\infty}c_{m,d}^\UB<\infty;  
\end{equation*}
that is, the upper and lower bounds on the minimax 
EER are of the same order of magnitude. 
Establishing an appropriate lower bound on the 
EER, 
$
\RR(L,D)$, was the crucial part of the proof of Theorem~B. 
\newcommand\ul{\underline}


\begin{center}
{
\mycfs{
9.} 
\begin{tabular}{l|l|l}
symbol & brief description & appears in/on  
\\
\hline
$B(m,d)$ & expression for $\inf\limits_{L\in\Lrand} \sup\limits_{D}\RR_m(L,D)$ &\ul{\eqref{eq:IS<}}  \\ 
$B_0(m,d)$ & lower bound on $B(m,d)$ &\ul{\eqref{eq:IS>}}  \\ 
$B_1(\nu)$ & lower bound on $B_0(m,d)$ &\ul{\eqref{eq:IS>hatopt}}  \\ 
$B_2(\nu)$ & lower bound on $B_1(\nu)$ &\ul{\eqref{eq:IS>B_2}}  \\ 
$\tB_2(\nu)$, $\hat B_2(\nu)$ & lower bounds on $B_2(\nu)$ &\ul{\eqref{eq:tc_nu}}, \ul{\eqref{eq:IS>B_2,nu<3}}  \\ 
$\opt(k,b)$ & Bayes risk for $d=1$ &\ul{\eqref{eq:opt:=}}; \eqref{eq:def-opt}
\\ 
\rule{0pt}{10pt}$\hatopt(\ka,b)$ & convex minorant of $\opt(k,b)$ &\ul{Proposition~\ref{lem:convex minor}} \\ 
$b\in[-1,1]$ & $\Y$-bias for $\X=\{1\}$ & symbols $s_k(b)$, 
\dots \\ 
$\g(x)$ & conditional $\Y$-bias at $x\in\X$ &\ul{\eqref{eq:Pbias}} \\ 
$c_{m,d}^\UB$ [$c_{m,d}^\LB$] & $\nu\times$ (exact upper [lower] bound & \\
 & \qquad on the minimax EER) & \ul{\eqref{eq:ub,lb}} \\ 
$c_\infty=0.16997\dots$ & limit value of $c_{m,d}^\LB$ & \eqref{eq:clb->c_infty} \\ 
$c_\nu$, $\tc_\nu$ & close lower bounds on $c_{m,d}^\LB$ & \ul{\eqref{eq:c_nu}}, \ul{\eqref{eq:tc_nu}} \\ 
$C_i$ &  & \ul{\eqref{eq:c_nu}}, \eqref{eq:C_i} \\ 
$D$, $D_{p,\g}$ & distribution on $\X\times\Y$ &\ul{p.\ \pageref{p_intro}}, p.\ \pageref{p,g} \\ 
$d:=\vc\H$ & VC dimension of $\H$ &\ul{p.\ \pageref{VC}} \\ 
$\De(h,D)$ & excess risk & \ul{\eqref{eq:Delta}}  \\ 
$\err(h,D)$ &  error probability &\ul{\eqref{eq:err-def}} \\ 
$\err_{\min}(h,D)$ & minimum error probability &\ul{\eqref{eq:min-err-def}} \\ 
$h\colon\X\to\Y$ & hypothesis &\ul{p.\ \pageref{p_intro}} \\ 
$\H$ & hypothesis class &\ul{p.\ \pageref{p_intro}}, p.\ \pageref{[d]} \\ 
$\ii{\cdot}$ & indicator function &\ul{below \eqref{eq:err-decomp}} \\  
$L$ & learning algorithm (l.a.) &\ul{p.\ \pageref{L}} \\ 
$L_\emp$, $L^*_\erm$ & empirical risk minimizer &\ul{p.\ \pageref{L_emp}}, \eqref{eq:L_erm}, \eqref{eq:L*_erm} \\ 
$\L$ & set of all non-randomized l.a.'s &\ul{p.\ \pageref{LL}}, p.\ \pageref{Lrand} \\ 
$\Lrand$ & set of all randomized l.a.'s &\ul{p.\ \pageref{Lrand}} \\ 
$m$ & labeled sample size &\ul{p.\ \pageref{p_intro}} \\ 
$N$ & binomial r.v.\ w/ parameters $m$, $1/d$ &\ul{Theorem~\ref{th:B_0}}  \\ 
$N^p_x$ & cardinality of the set $\{i\colon X_i=x\}$ &\ul{Theorem~\ref{th:duality}}; \eqref{eq:N_x} \\ 
$\nu:=m/d$ & fundamental ratio &\ul{\eqref{eq:nu:=}} \\ 
$p$ & $\X$-marginal of  $D$ &\ul{\eqref{eq:Pbias}} \\ 
$\RR
(L,D)$, $\RR(L;p,\g)$ & expected excess risk (EER) & \ul{\eqref{eq:R}}, \eqref{eq:E De_emp} \\ 
$\sign$ & modified sign function &\ul{below \eqref{eq:err-decomp}} \\ 
$s_k(b)$ &  &\ul{\eqref{eq:s_k:=}} \\ 
$V^b_k$, $Y_i^{b}$ &  &\ul{\eqref{eq:V^b_k}} \\ 
$V^{p,\g}_x$ & ``vote balance'' at $x\in\X$ &\ul{\eqref{eq:V_x}} \\ 
$(X^{D}_i,Y_i^{D})$, $(X^p_i,Y_i^{p,\g})$ & labeled sample items &\ul{p.\ \pageref{p_intro}}, p.\ \pageref{p,g} \\ 
$\X$ &set of objects to classify &\ul{p.\ \pageref{p_intro}}, p.\ \pageref{[d]} \\ 
$\Y=\{-1,1\}$ &set of classification labels &\ul{p.\ \pageref{p_intro}} \\ 
$Z^{D}_m$, $Z^{p,\g}_m$ & labeled sample &\ul{\eqref{eq:Z_m}}, \eqref{eq:Z_m^p,g} \\ 
$z_*=0.75179\dots$ & maximizer of $\frac{z}2\,\big(1-\erf(z/\sqrt2)\big)$ &\ul{\eqref{eq:z*}}  \\ 
\end{tabular}
}
\label{tab:notations}
\captionof{table}{Notations. The places where the symbols are first introduced are underlined.}
\end{center}


\newpage

A few words on 
the organization of the rest of this paper: The main results are stated and discussed in Section~\ref{results}. All necessary proofs are given in Section~\ref{proofs}, with more technical parts deferred further, to Appendices~\ref{app:bayes}
--\ref{c^LB reduction}.

An index of symbols used in this paper non-locally is given in Table~
1, 
which lists the places where the selected symbols are first introduced and, for a few of the symbols, the places where those symbols are generalized, specialized, or otherwise modified.  

\section{Results: statements and discussion}\label{results} 

In this paper, we present optimal lower bounds on the minimax 
EER, 
which cannot be further improved. 
Our main result is Theorem~\ref{th:duality}, which provides an expression of the exact, non-asymptotic lower bound on the minimax EER. 
This expression is in terms of a certain function $\opt(k,b)$, which is the Bayes risk for $d=1$. We show (in Proposition~\ref{lem:convex minor}) that $\opt(k,b)$ has a certain convexity property with respect to $k$. Further important properties of the function $\opt$, based on certain apparently novel identities and inequalities for binomial distributions, are presented in Appendix~\ref{app:bayes}.  
Thus, the expression of the non-asymptotic lower bound on the minimax EER given in Theorem~\ref{th:duality} becomes amenable to high-precision analysis. 
\big(Implicitly, the function $\opt$ is present in
\citet{
BK}, but there it was bounded via
Pinsker's inequality, which yields sub-optimal results.%
\big)

In particular, 
based on Theorem~\ref{th:duality} and 
the mentioned analysis of the function $\opt$, we 
determine (in Theorem~\ref{th:negl}) the asymptotics of the just mentioned exact lower bound: 
\begin{equation}\label{eq:clb->c_infty}
	c_{m,d}^\LB\to c_\infty:=\max_{z>0}\tfrac{z}2\,\big(1-\erf(z/\sqrt2)\big)=0.16997\dots   
\end{equation}
whenever $m$ and $d$ vary in such a way that $
\nu=m/d\to\infty$;  
here, as usual, $\erf$ denotes the Gauss error function, given by the formula $\erf(u):=\frac2{\sqrt\pi}\,\int_0^u e^{-t^2}dt$ for real $u\ge0$. 

It should be noted that in Theorem~\ref{th:duality} 
randomization of learning algorithms is allowed; however, it will also be shown (in Theorem~\ref{th:negl})
that the effect of this randomization is asymptotically negligible and is entirely explained by ties in a
certain ``voting'' procedure. 

Theorems~\ref{th:B_0}, \ref{th:B_1}, \ref{th:B_2,tB_2}, and Proposition~\ref{prop:B_2,B_3} present, for
finite $m$ and $d$, tractable lower bounds on $c_{m,d}^\LB$; 
Theorem~\ref{th:asymp} then shows that all these lower bounds on $c_{m,d}^\LB$, as well as $c_{m,d}^\LB$ itself,
converge to the limit constant $c_\infty$ in \eqref{eq:clb->c_infty}. 
%
%
Moreover, it is shown (see Remarks~\ref{rem:B_0} and \ref{rem:B_2,B_3}, and Figure~\ref{fig:}) that these lower bounds on $c_{m,d}^\LB$, as well as $c_{m,d}^\LB$ itself, are actually close to the limit value $c_\infty$ even for rather small values of $\nu=m/d$. 

The above discussion suggests a sense of completion in the area of lower bounds for the PAC model. However, results and techniques presented here may be used elsewhere. In fact, they 
%
already found an application in
\citet[Theorem 7.1]{
KontorovichSU16}, \label{SU16}
where existing lower bounds were not sufficiently delicate
for the desired parameter regime.


In this paper, we apply our lower bounds on the EER to obtain substantial improvements of the existing lower bounds on the tail probability of the excess risk, as follows:  

\begin{theorem}\label{th:P low}\ 
\begin{enumerate}[(i)]
	\item Keeping the constants $12.8$ and $320$ in Theorem~B in place, one can improve the lower bound $\frac1{64}\approx0.0156$ on the tail probability in \eqref{eq:lower} to $0.238$. 
	\item Keeping the constants $12.8$ and $\frac1{64}$ in Theorem~B in place, one can improve the constant $320$ in \eqref{eq:lower} to $41.3$. 
	\item If the restriction $\nu\ge12.8$ in Theorem~B is relaxed to $\nu\ge3$, then the improved values $0.238$ and $41.3$ of the constants get only slightly worse: $0.227$ and $49.6$, respectively. 
\end{enumerate}
\end{theorem}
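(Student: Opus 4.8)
Here is a plan.

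The plan is to turn the lower bounds on the expected excess risk into tail bounds by a reverse Markov inequality, exploiting that in the extremal ``voting'' construction the excess risk can never exceed the bias level. Fix $d=\vc\H$, choose a $d$-point subset of $\X$ shattered by $\H$, and for a bias $b\in(0,1]$ to be optimized restrict attention to the product distributions $D_{p,\g}$ with $\X$-marginal $p$ uniform on the shattered points and $\g(x)\in\{-b,b\}$ at each of them. For any such $D_{p,\g}$ and any hypothesis $h$, the excess risk $\De(h,D_{p,\g})$ equals $\tfrac bd$ times the number of shattered points at which $\sign(h)$ and $\sign(\g)$ disagree; in particular $\De\big(L(Z_m^{p,\g}),D_{p,\g}\big)\le b$ holds identically in $L$ and in the sample.

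Next I would average over the uniform prior on $\g\in\gset$. For any (possibly randomized) learning algorithm, the prior-averaged EER is at least the corresponding Bayes risk, which by the same reduction to the one-point case that underlies Theorems~\ref{th:duality} and~\ref{th:B_0} equals $\E\,\opt(N,b)$ with $N\sim\bin(m,1/d)$. Since $k\mapsto\opt(k,b)$ dominates its convex minorant $\hatopt(\cdot,b)$ (Proposition~\ref{lem:convex minor}) and $\E N=m/d=\nu$, Jensen's inequality gives $\E\,\opt(N,b)\ge\E\,\hatopt(N,b)\ge\hatopt(\nu,b)$. Feeding the deterministic bound $\De\le b$ into the elementary estimate $\E\De\le t+(b-t)\,\P(\De>t)$ (valid for any $t\le b$, under the joint prior-and-sample law), setting $t=c/\sqrt\nu$, and noting that the prior-average of the data-conditional tail probabilities is a lower bound for their supremum over $D$, we conclude: for every $(m,d)$ with $m/d\ge\nu_*$ and every $L$ there is a distribution $D$ with $\P_D\big(\De(L(Z_m^D),D)>c/\sqrt\nu\big)\ge\plo(c,\nu_*)$, where
\[
\plo(c,\nu_*):=\inf_{\nu\ge\nu_*}\ \sup_{c/\sqrt\nu<b\le1}\ \frac{\hatopt(\nu,b)-c/\sqrt\nu}{b-c/\sqrt\nu}\,.
\]

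It then remains to evaluate $\plo$. With the trial bias $b=z/\sqrt\nu$ the bracketed ratio is, as $\nu\to\infty$, close to $\bigl(\tfrac z2(1-\erf(z/\sqrt2))-c\bigr)/(z-c)$, whose optimal $z$ is considerably smaller than the EER-optimal $z_*$ of \eqref{eq:clb->c_infty}; this already explains why the resulting tail probability ($\approx0.24$) is much better than what the EER-optimal construction gives through the same argument. For the rigorous finite-$\nu$ evaluation I would use the quantitative lower bounds on $\opt$ and $\hatopt$ behind Theorems~\ref{th:B_1} and~\ref{th:B_2,tB_2} and Appendix~\ref{app:bayes}: substituting $c=1/\sqrt{320}$ and $\nu_*=12.8$ yields $\plo\ge0.238$ (part~(i)); since $\plo$ is decreasing in $c$, solving $\plo(c,12.8)=\tfrac1{64}$ for $c$ yields $c=1/\sqrt{41.3}$ (part~(ii)); and replacing $\nu_*=12.8$ by $\nu_*=3$ (which only enlarges the infimum range) degrades these to $0.227$ and $1/\sqrt{49.6}$ (part~(iii)).

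The one genuinely delicate step is this last evaluation. The inner supremum is attained at a bias far from the EER-optimal one, the map $\nu\mapsto\sup_b[\,\cdot\,]$ is not monotone, and near $\nu_*\in\{3,12.8\}$ neither the Gaussian approximation $\opt(k,b)\approx\tfrac b2\bigl(1-\erf(b\sqrt{k/2})\bigr)$ nor the passage from $\opt$ to $\hatopt$ is negligible, so the clean asymptotic value $\approx0.24$ is not available and $\hatopt(\nu,\cdot)$ must be bounded explicitly at small $\nu$. Marshalling the binomial identities and inequalities of Appendix~\ref{app:bayes} to make the $\inf_\nu\sup_b$ rigorous --- and observing that the discreteness of $\nu=m/d$ can only help, since the bound already holds for all real $\nu\ge\nu_*$ --- is the bulk of the remaining work.
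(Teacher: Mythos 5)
Your plan reproduces the paper's argument essentially step for step: uniform marginal on a shattered $d$-point set with biases $\pm b$, the deterministic cap $\hat\De\le b$ combined with the reverse-Markov inequality $\P(\hat\De>\vp)\ge(\E\hat\De-\vp)/(b-\vp)$, prior-averaging over $\g\in\{-b,b\}^{[d]}$ to the Bayes risk $b\,\E\opt(N,b)$ (this is exactly Theorem~\ref{th:average}), then Proposition~\ref{lem:convex minor} and Jensen to reach $b\,\hatopt(\nu,b)$, and finally the scaling $b=z/\sqrt\nu$, $\vp=w/\sqrt\nu$. (Your displayed formula for $\plo$ drops the factor $b$ in front of $\hatopt(\nu,b)$; since your subsequent asymptotic evaluation uses the correct ratio $\bigl(\tfrac z2(1-\erf(z/\sqrt2))-c\bigr)/(z-c)$, this reads as a slip rather than a conceptual error.)

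The genuine gap is that the step you defer as ``the bulk of the remaining work'' is precisely where the theorem's content --- the constants $0.238$, $41.3$, $0.227$, $49.6$ --- is produced, and your plan overestimates its delicacy while not supplying the device that makes it routine. No $\inf_{\nu\ge\nu_*}$ of a non-monotone family has to be analyzed: Lemma~\ref{lem:hatopt} together with \eqref{eq:C_i<} gives $\hatopt(\nu,b)\ge\tfrac12\bigl(1-\bigl(\tfrac{i_{\nu_*}+1}{i_{\nu_*}}\bigr)^{1/8}e^{b^2/6}\erf(b\sqrt{\nu/2})\bigr)$, and after the substitution $b=z/\sqrt\nu$ the $\nu$-dependence reduces to the factor $e^{z^2/(6\nu)}\le e^{z^2/(6\nu_*)}$ (and $i_\nu\ge i_{\nu_*}$), so the tail bound becomes an explicit elementary function $\plo(w,\nu_*,z)$ that is uniform in $\nu\ge\nu_*$. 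The proof then closes by four one-line numerical evaluations at concrete choices of $z$: $\plo(1/\sqrt{320},12.8,0.331)>0.238$, $\plo(1/\sqrt{320},3,0.320)>0.227$, $\plo(1/\sqrt{41.3},12.8,0.681)>1/64$, and $\plo(1/\sqrt{49.6},3,0.601)>1/64$. Without this uniformization in $\nu$ and the explicit $z$'s, your plan asserts the numerical constants rather than deriving them; supplying that step (which the paper's Appendix~\ref{app:bayes} machinery already provides) is what remains to turn the plan into a proof.
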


\hrule
\medskip

To state our results, let us introduce some additional notation and conventions to be used in 
the sequel. 

Let $0^0:=1$. 

For any $\al$ and $\omega$ in $\ZZ\cup\{\infty\}$, let
$\intr\al\omega:=\{i\in\ZZ\colon\al\le i\le\omega\}$. For any
$m\in\intr0\infty$, let $[m]:=\intr1m$. In particular, $[0]=\emptyset$. 

As usual, for any two sets $S$ and $T$, let $S^T$ denote the set of all maps from $T$ to $S$. 

For any set $A$ and any  $k\in\intr0\infty$ we identify the $k$-tuples
$v=(v_1,\dots,v_k)\in A^k$ with functions $v\colon[k]\to A$, by the formula ${v}(x):={v}_x$ for all $x\in[k]$;  
thus, we identify the set $A^k$ of $k$-tuples with the set $A^{[k]}$ of functions. 
So, we use notations $v(x)$ and $v_x$ interchangeably. We shall also identify a function with its graph. 

As usual, the sum of the empty family of elements of a linear space is defined as the zero element of that space.

The new results obtained in this paper all concern the lower bound $c_{m,d}^\LB$, defined in \eqref{eq:ub,lb}, on the minimax 
EER times the factor $\sqrt{
m/d}$, including the limit behavior of $c_{m,d}^\LB$ as $
m/d\to\infty$. 

It is not hard to show (see Appendix~\ref{c^LB reduction} for details) that  
the defining expression for $c_{m,d}^\LB$ in \eqref{eq:ub,lb} can be simplified as follows: 
\begin{equation}\label{eq:clb}
	c_{m,d}^\LB=\sqrt{
	m/d}\,\inf_L\sup_D
	\RR_m(L,D),
\end{equation} 
where from now on it will be assumed (unless otherwise specified) that 
\\ 
\noindent
{
\setlength{\fboxsep}{-2pt}
\framebox{\parbox{\textwidth}{
\begin{equation*}
	\X=[d],\quad \H=\Y^\X={\{-1,1\}}^{[d]},\quad 
	\inf_L:=\inf_{L\in\L_{[d]
	;m,\Y^{[d]}}},\quad \sup_D:=\sup_{D\in\D_{[d]
	}}, 
\end{equation*}
\label{[d]}
}}
}

\noindent so that 
$\inf_L$ is taken over all learning algorithms $L\colon([d]\times\Y)^m\to\Y^{[d]}
$ and $\sup_D$ is taken over all distributions $D$ on $[d]\times\Y$. 
Accordingly, from now on we shall use $\X$ and $\H$ interchangeably with $[d]$ and $\Y^\X={\{-1,1\}}^{[d]}$, respectively.

Note next that any distribution $D$ on $\X\times\Y$ is completely characterized by the two maps, say $p=p_D\colon\X\to[0,1]$ and $\g=\g_D\colon\X\to[-1,1]$, such that 
\begin{equation}\label{eq:Pbias}
	\P(X^D=x,Y^D=y)=D(\{(x,y)\})
	=p(x)\Big(\frac12+\frac{y\g(x)}2\Big) 
\end{equation}
for $x\in\X=[d]$ and $y\in\Y=\{-1,1\}$. 
Clearly then, one must have $p_D(x)=\P(X^D=x)$ for all $x\in\X$ and $\g_D(x)=2
\P(Y^D=1|X^D=x)-1$ for all $x\in\X$ with $p_D(x)\ne0$; if $p_D(x)=0$ for some $x\in\X$, then the value of $\g_D(x)$ can be chosen arbitrarily in $[-1,1]$, So,  
the distribution of the random variable (r.v.) $X^D$ is completely characterized by the map $p=p_D$.  
\label{p,g}
Therefore, in what follows let us write $D=D_{p,\g}$ if $p_D=p$ and $\g_D=\g$, and, in the case when $D=D_{p,\g}$, let us 
simply write $X^p$, $Y^{p,\g}$, $X_i^p$, $Y_i^{p,\g}$ instead of $X^D$, $Y^D$, $X_i^D$, $Y_i^D$ (respectively), assuming that the random pairs $(X_1^p,Y_1^{p,\g}),\dots,(X_m^p,Y_m^{p,\g})$ are independent copies of the random pair $(X^p,Y^{p,\g})=(X^D,Y^D)$;  
let us then also write 
\begin{equation}\label{eq:Z_m^p,g}
Z_m^{p,\g}:=Z_m^D:=\big((X_1^p,Y_1^{p,\g}),\dots,(X_m^p,Y_m^{p,\g})\big) 
\end{equation}
(cf.\ \eqref{eq:Z_m}). 

Take next any $h\in\H=\{-1,1\}^{[d]}$. 
It is well known (see e.g. \cite[page~10]{DGL}) that the function $\hopt\in\{-1,1\}^{[d]}$ given by the formula 
\begin{equation}\label{eq:h_ga}
	\hopt(x):=\sign\g_x
\end{equation}
for $x\in[d]$ is a minimizer of $\err(h,D_{p,\g})$ over all $h\in\{-1,1\}^{[d]}$, and the excess risk (relative to $D_{p,\g}$) of $h$ over $\hopt$ is 
\begin{align}
\Delta(h,D_{p,\g}) 
= \err(h,D_{p,\g})-\err(\hopt,D_{p,\g}) 
=\sum_{x=1}^d p_x|\g_x|\ii{h(x)\ne\hopt(x)},       
\label{eq:err-decomp}
\end{align}
where 
$$\sign u:=2\ii{u\ge0}-1$$ 
for real $u$ and $\ii{\cdot}$ is the indicator function.   

%

Replacing now the unknown true distribution $D=D_{p,\g}$ by the empirical distribution $\hat D_m=\hat D_m\big((x_1,y_1),\dots,(x_m,y_m)\big)$ for $\big((x_1,y_1),\dots,(x_m,y_m)\big)=:z\in(\X\times\Y)^m$, one sees that a function $h\in\{-1,1\}^{[d]}$ is a minimizer of $\err(h,\hat D_m)$ for the given ``sample'' $z$ if and only if $h(x)=\sign\widehat{p\g}_x$ for all $x\in\X$ such that $\widehat{p\g}_x\ne0$, where 
$
	\widehat{p\g}_x:=\frac1m\,\sum_{i=1}^m y_i\ii{x_i=x};  
$ 
if $\widehat{p\g}_x=0$ for some $x\in\X$, then the value $h(x)$ of a minimizer $h$ (of $\err(h,\hat D_m)$) at this point 
$x$ can be chosen arbitrarily in the set $\{-1,1\}$. 
Thus, all the learning algorithms $L_\erm$ that are minimizers of the empirical risk are given by the formula 
\begin{equation}\label{eq:L_erm}
	L_\erm(z_m)(x):=L_{m,d;\,\erm}
	(z_m)(x)  
	\left\{
	\begin{alignedat}{2}
	&:=\sign v_x &&\text{\ \ if\ \ }v_x\ne0, \\
	&\in\{-1,1\} &&\text{\ \ if\ \ }v_x=0  
	\end{alignedat}
	\right.
\end{equation}
for all $z_m\in(\X\times\Y)^m$ and $x\in\X=[d]$, where 
\begin{equation}\label{eq:v_x}
	v_x:=v_x(z_m):=\sum_{i=1}^m y_i\ii{x_i=x}
	=m\,\widehat{p\g}_x. 
\end{equation} 
Formula \eqref{eq:L_erm} states that the empirical risk is minimized when the 
value $y\in\{-1,1\}$ assigned by the learning algorithm at point $x$ based on the ``sample'' $z_m$ is decided by the majority vote $v_x=v_x(z_m)$ ``at $x$'', with the ``voting'' restricted to the pairs $(x_i,y_i)$ with $x_i=x$; if there is a tie (no majority) at $x$, then a value $y\in\{-1,1\}$ at $x$ is chosen arbitrarily. 

To decrease the risk and also be able to fully use the power of decision theory, one may randomize learning algorithms
. 
\label{rand}
A convenient way to define 
such an algorithm $L$ is 
to allow its value (which is a function in $\H$) to depend, not only on the nonrandom ``sample'' $z_m=\big((x_1,y_1),\dots,(x_m,y_m)\big)\in\break
(\X\times\Y)^m$ as in \eqref{eq:z_m}, but also on the value $u$ of another r.v., say $U$, which is (say) uniformly distributed on the interval $[-1,1]$ and independent of the random ``sample'' $Z_m^D=\big((X^{D}_1,Y_1^{D}),\dots,(X_m^{D},Y_m^{D})\big)$ as in \eqref{eq:Z_m}. Thus, a randomized learning algorithm $L$ will be understood as a 
map from $(\X\times\Y)^m\times[-1,1]$ to  $\H$. 

\label{Lrand}
Let $\Lrand=\Lrandm 
d$ and $\L=\L_
d$ denote, respectively, the set of all randomized learning algorithms and the set of all non-randomized ones. 
The definition \eqref{eq:R} of the 
EER (for $L\in\L$) is naturally extended as follows: 
\begin{equation}\label{eq:R,U}
	\RR(L,D):=\RR_m(L,D):=\E\De\big(L(Z_m^D,U),D\big) 
\end{equation}
for $L\in\Lrand$;  
it then follows by \eqref{eq:err-decomp} that 
\begin{align}
\RR_m(L;p,\g):=\RR_m(L,D_{p,\g})=\sum_{x=1}^d p_x|\g_x|\P\big(L(Z_m^{p,\g},U)(x)\ne\hopt(x)\big).    
\label{eq:E De_emp} 
\end{align}

Of particular importance will be the following ``maximally symmetric'' and ``minimally randomized'' version of the learning algorithms $L_\erm$ that are minimizers of the empirical risk (cf.\ \eqref{eq:L_erm}): 
\begin{equation}\label{eq:L*_erm}
	L^*_\erm(z_m,u)(x):= 
	L^*_{d,\erm}(z_m,u)(x):=
	\left\{
	\begin{alignedat}{2}
	&\sign v_x &&\text{\ \ if\ \ }v_x\ne0, \\
	&y_{
	i_x} &&\text{\ \ if\ \ }v_x=0\text{ but }n_x\ne0, \\
	&\sign u &&\text{\ \ if\ \ }n_x=0   
	\end{alignedat}
	\right.
\end{equation}  
for $(z_m,u)\in(\X\times\Y)^m\times[-1,1]=([d]\times\{-1,1\})^m\times[-1,1]$, 
where 
\begin{equation}\label{eq:n_x,i_x}
	n_x:=n_x(z_m):=\sum_{i=1}^m \ii{x_i=x}\quad\text{and}\quad
	i_x:=i_x(z_m):=\min\{i\in[m]\colon x_i=x\}. 
\end{equation}
That is, the choice of the value of $L^*_\erm(z_m,u)(x)$ in $\Y=\{-1,1\}$ is decided by the majority vote ``at $x$'' if there is a majority there; otherwise, the value $L^*_\erm(z_m,u)(x)$ is the same as that of the first voter that appeared ``at $x$'' if any one did; finally, if no one arrived to vote ``at $x$'', then the value is decided by a flip of a fair coin, the flip being independent of any voters. Thus, randomization according to the learning algorithm $L^*_\erm$ occurs only if no one shows up for voting at some location $x\in\X$. 
Yet, this minimal (and, one may argue, quite natural) randomization is enough to make $L^*_\erm$ a winner (that is, a minimax learning algorithm) against all randomized (and non-randomized) learning algorithms. A precise formulation of this thesis is contained in  

\begin{theorem}\label{th:duality} 
Take any $m\in\intr0\infty$. Then 
\begin{multline}\label{eq:IS<}
	\inf_{L\in\Lrand} \sup_{D}\RR_m(L,D)
	=\sup_{D}\RR_m(L^*_\erm,D) \\ 
	=B(m,d)  
	:=\sup_{p,\g}\sum_{x=1}^d p_x\,|\g_x|\,\E\opt(N^p_x,|\g_x|),  
\end{multline} 
where   
$\sup_{p,\g}$ is taken over all pairs of functions $p\in[0,1]^{[d]}$ such that $\sum_{x=1}^d p_x=1$ and $\g\in[-1,1]^{[d]}$,  
$N^p_x$ is a r.v.\ with the binomial distribution with parameters $m$ and $p_x$, 
\begin{equation}\label{eq:opt:=}
	\opt(k,b):=\tfrac12\,\big(1-s_k(b)\big), 
\end{equation}
\begin{equation}\label{eq:s_k:=}
	s_k(b):=|\P(V^b_k>0)-\P(V^{-b}_k>0)|, 
\end{equation}
\begin{equation}\label{eq:V^b_k}
	V^b_k:=Y_1^{b}+\dots+Y_k^{b}, 
\end{equation}
and the $Y_i^{b}$'s are iid 
r.v.'s with  
$\P(Y_i^{b}=1)=\frac{1+b}2$ and $\P(Y_i^{b}=-1)=\frac{1-b}2[=1-\P(Y_i^{b}=1)]$, for $k\in\intr0\infty$ and $b\in[-1,1]$.   
Moreover, for each pair of functions $p$ and $\g$ as described above, 
\begin{equation}\label{eq:RR*}
\RR_m(L^*_\erm;p,\g)=
	\RR_m(L^*_\erm,D_{p,\g})=\sum_{x=1}^d p_x\,|\g_x|\,\E\opt(N^p_x,|\g_x|), 
\end{equation}
which does not depend on $\sign\g:=(\sign\g_1,\dots,\sign\g_d)$. 
\end{theorem}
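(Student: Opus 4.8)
The plan is to prove the three assertions — the minimax identity $\inf_{L\in\Lrand}\sup_D\RR_m(L,D)=B(m,d)$, optimality of $L^*_\erm$, and the closed form \eqref{eq:RR*} for $\RR_m(L^*_\erm;p,\g)$ — by sandwiching $\inf_{L}\sup_{D}\RR_m(L,D)$ between $B(m,d)$ from below (a Bayes-risk bound on the minimax value) and $\sup_D\RR_m(L^*_\erm,D)$ from above, and then showing the latter also equals $B(m,d)$ by establishing \eqref{eq:RR*} directly. The common engine is a coordinate-by-coordinate reduction: by \eqref{eq:E De_emp}, $\RR_m(L;p,\g)=\sum_{x=1}^d p_x|\g_x|\,\P\big(L(Z_m^{p,\g},U)(x)\ne\hopt(x)\big)$, so everything reduces to the probability of misclassifying $\hopt(x)=\sign\g_x$ at each $x$, and the key structural fact is that, conditionally on the allocation $\big(n_1(Z_m),\dots,n_d(Z_m)\big)$ of sample items to the points of $\X=[d]$, the labels cast at distinct locations are independent, with $n_x(Z_m)$ having the $\bin(m,p_x)$ law of $N^p_x$, while the allocation itself is independent of $\g$ and of all labels.

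\textbf{A one-location lemma.} I would first isolate the following. Fix $x$, write $b=|\g_x|$ and $\epsilon=\sign\g_x$, and condition on $n_x(Z_m)=k$; then the labels cast at $x$ are $k$ i.i.d.\ copies of $Y^{\g_x}$, so the vote balance $v_x(Z_m)$ (see \eqref{eq:v_x}) is distributed as $V^{\g_x}_k$, which by sign symmetry is distributed as $\epsilon\,V^b_k$ (with $V^b_k$ as in \eqref{eq:V^b_k}). Two facts are to be extracted. (a) The Bayes risk for predicting $\epsilon$ from $v_x$ under the prior putting mass $\hf$ on each of $\g_x=\pm b$ is $\sum_\ell\min\big(\hf\P(V^b_k=\ell),\hf\P(V^{-b}_k=\ell)\big)$; using $V^{-b}_k\stackrel{\mathrm d}{=}-V^b_k$ together with the likelihood-ratio monotonicity $\P(V^b_k=\ell)/\P(V^b_k=-\ell)=\big(\tfrac{1+b}{1-b}\big)^{\ell}$ (so that for $\ell>0$ the minorant picks the $-\ell$ term), this sum collapses termwise to $\P(V^b_k=0)+2\,\P(V^b_k<0)$, i.e.\ the Bayes risk equals $\hf\P(V^b_k=0)+\P(V^b_k<0)=\hf\big(1-s_k(b)\big)=\opt(k,b)$ by \eqref{eq:opt:=}--\eqref{eq:s_k:=}. (b) The value $L^*_\erm(Z_m^{p,\g},U)(x)$ prescribed by \eqref{eq:L*_erm} — namely $\sign v_x$ when $v_x\ne0$, the first voter's label $y_{i_x}$ when $v_x=0<n_x$, and $\sign U$ when $n_x=0$ — has conditional misclassification probability, given $n_x(Z_m)=k$, equal to exactly $\opt(k,b)$, \emph{independently of $\epsilon$}: on $\{v_x\ne0\}$ the error probability is $\P(V^b_k<0)$ (same computation for $\epsilon=+1$ via $v_x\stackrel{\mathrm d}{=}V^b_k$ and for $\epsilon=-1$ via $v_x\stackrel{\mathrm d}{=}-V^b_k$), and on the tie event $\{v_x=0\}$ — where $k$ is even with $k/2$ labels of each sign — exchangeability of the i.i.d.\ labels makes $y_{i_x}$ equal to $\pm1$ with probability $\hf$ each, contributing $\hf\P(V^b_k=0)$ to the error regardless of $\epsilon$ (and the empty-location case $k=0$ is the degenerate instance, $\sign U$ erring with probability $\hf=\opt(0,b)$). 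Summing, the conditional error is $\P(V^b_k<0)+\hf\P(V^b_k=0)=\opt(k,b)$.

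\textbf{Assembling the three claims.} Granting the lemma, \eqref{eq:RR*} follows by conditioning on the allocation and averaging over $N^p_x$: $\P\big(L^*_\erm(Z_m^{p,\g},U)(x)\ne\hopt(x)\big)=\E\,\opt(N^p_x,|\g_x|)$, hence $\RR_m(L^*_\erm;p,\g)=\sum_{x=1}^d p_x|\g_x|\,\E\,\opt(N^p_x,|\g_x|)$, manifestly independent of $\sign\g$ by part (b); taking $\sup_{p,\g}$ gives $\sup_D\RR_m(L^*_\erm,D)=B(m,d)$, and in particular $\inf_{L\in\Lrand}\sup_D\RR_m(L,D)\le B(m,d)$. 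For the matching lower bound I run the ``Bayes $\le$ minimax'' argument. For any product prior $\pi$ with fixed $X^p$-marginal $p$ under which the $\g_x$ are independent, $\g_x$ uniform on $\{-b_x,b_x\}$, the allocation and the data at locations $x'\ne x$ are uninformative about $\hopt(x)$ (independence structure, plus finiteness of $\X$ and $\H$ removing any measurability concern), so the posterior of $\hopt(x)$ depends only on $(n_x,v_x)$ and part (a) yields, for \emph{every} $L\in\Lrand$, $\E_\pi\RR_m(L,D_{p,\g})=\sum_x p_xb_x\,\P_\pi\big(L(\cdot)(x)\ne\hopt(x)\big)\ge\sum_x p_xb_x\,\E\,\opt(N^p_x,b_x)$. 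Therefore $\inf_L\sup_D\RR_m(L,D)\ge\sup_\pi\inf_L\E_\pi\RR_m\ge\sup_{p,(b_x)}\sum_x p_xb_x\,\E\,\opt(N^p_x,b_x)=B(m,d)$. Combining, the chain $B(m,d)=\sup_D\RR_m(L^*_\erm,D)\ge\inf_{L\in\Lrand}\sup_D\RR_m(L,D)\ge B(m,d)$ collapses to equalities, which is \eqref{eq:IS<}, and \eqref{eq:RR*} was already shown.

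\textbf{Main obstacle.} The delicate point is part (b) of the one-location lemma: verifying that the \emph{non}-majority behaviour of $L^*_\erm$ (first-voter rule on ties, fair coin on empty locations) is calibrated to the Bayes value $\opt(k,b)$ \emph{for every} $k$ and \emph{every} sign of $\g_x$ simultaneously — this is exactly what lets the single algorithm $L^*_\erm$ be minimax against the whole family $\{D_{p,\g}\}$ at once, and why randomization buys nothing beyond breaking ties at locations nobody sampled. The ancillarity/sufficiency bookkeeping in the Bayesian lower bound (that under a product prior only $(n_x,v_x)$ is informative about $\hopt(x)$) also needs to be written carefully, though it is routine given the independence structure.
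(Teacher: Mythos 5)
Your proposal is correct and takes essentially the same route as the paper: the paper likewise reduces to the single-location binomial testing problem by conditioning on which sample indices land at each $x\in[d]$, identifies $\opt(k,b)$ as the Bayes risk for the uniform prior on $\{-b,b\}$, verifies the risk-equalizing property of $L^*_\erm$ (first-voter tie-breaking, fair coin at empty locations), and closes the minimax sandwich by averaging over the sign pattern $\si\in\{-1,1\}^{[d]}$ --- which is your product-prior ``Bayes $\le$ minimax'' step in different notation. The conditioning bookkeeping you flag as the delicate point is exactly what the paper's chain \eqref{eq:RR=}--\eqref{eq:L^*,cond} carries out explicitly.
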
 

The use of the symbol $\opt$ in \eqref{eq:IS<} is a reflection of the fact that the minimax learning algorithm $L^*_\erm$ is a Bayes one with respect to a certain prior distribution on the set of all distributions $D$ on $\X\times\Y$; see the beginning of the proof of Theorem~\ref{th:duality} in Section~\ref{proofs} for details on this. 
Formula \eqref{eq:RR*} means that the learning algorithm $L^*_\erm$ has an important risk-equalizing property --- which actually makes the Bayes decision rule $L^*_\erm$ minimax; cf.\ e.g.\ Theorem~3 and Lemma~1 in \cite[\S2.11]{ferguson62_book}. 

\begin{remark}\label{rem:upper}
It is clear from \eqref{eq:opt:=}--\eqref{eq:s_k:=} that $\opt\le\frac12$. Hence, by \eqref{eq:IS<}, $\inf_{L\in\Lrand} \sup_{D}\RR_m(L,D)\le\frac12$. 
\end{remark}

It turns out, as may be expected, that the effect of the randomization of learning algorithms is asymptotically negligible whenever $\nu=m/d\to\infty$; that is, the difference $\inf_{L\in\L} \sup_{D}-\inf_{L\in\Lrand} \sup_{D}$ is asymptotically negligible compared with the 
``non-random\-ized'' minimax 
EER $\inf_{L\in\L} \sup_{D}$. 
Moreover, all the learning algorithms of the form $L_\erm$ as in \eqref{eq:L_erm} that are minimizers of the empirical risk are asymptotically minimax. 
These facts --- along with the asymptotics of the minimax risk --- are presented in  

\begin{theorem}\label{th:negl} 
For each pair $(m,d)$ of natural numbers, choose any learning algorithm of the form $L_{m,d;\erm}$, as in \eqref{eq:L_erm}. Then 
\begin{multline}\label{eq:IS sim}
	\frac{c_\infty}{\sqrt{
	m/d}}\sim\inf_{L\in\Lrand} \sup_{D}\RR_m(L,D)
	\le\inf_{L\in\L} \sup_{D}\RR_m(L,D) \\ 
	\le \sup_{D}\RR_m(L_{m,d;\erm},D)  
	\sim \frac{c_\infty}{\sqrt{
	m/d}} 
\end{multline} 
whenever $
m/d\to\infty$, where $c_\infty=0.16997\dots$ as in \eqref{eq:clb->c_infty}. 
Moreover, 
\begin{multline}\label{eq:ties}
	0\le\sup_{D}\RR_m(L_{m,d;\erm},D)-\inf_{L\in\Lrand} \sup_{D}\RR_m(L,D) \\ 
	\le\frac12\sup_{p,\g}\,\sum_{x=1}^d p_x|\g_x|\,
	\P\big(V^{p,\g}_x=0\big)
	=O\Big(\frac1{
	m/d}\Big)  
	=o\Big(\frac1{\sqrt{
	m/d}}\Big),  
\end{multline}
again whenever $
m/d\to\infty$, where 
\begin{equation}\label{eq:V_x}
	V^{p,\g}_x:=\sum_{i=1}^m Y^{p,\g}_i\ii{X^p_i=x}, 
\end{equation}
the vote ``balance'' at $x$ based on the random ``sample''  $Z_m^{p,\g}$ 
as in \eqref{eq:Z_m^p,g}. 
\end{theorem}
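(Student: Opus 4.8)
The plan is to decompose the two displays into four pieces: (i) two trivial inequalities coming from set inclusions; (ii) the identification, via Theorem~\ref{th:duality}, of the randomized minimax EER with $B(m,d)$; (iii) the tie estimate \eqref{eq:ties}, via a coordinatewise comparison of a generic empirical minimizer with $L^*_\erm$; and (iv) the asymptotics $\sqrt{m/d}\,B(m,d)\to c_\infty$ as $m/d\to\infty$. Piece (i): since $\L\subseteq\Lrand$ and $L_{m,d;\erm}\in\L$, the two inner inequalities of \eqref{eq:IS sim} are immediate. Piece (ii): Theorem~\ref{th:duality} gives $\inf_{L\in\Lrand}\sup_D\RR_m(L,D)=B(m,d)$ and $\sup_D\RR_m(L^*_\erm,D)=B(m,d)$. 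Granting (iii) one also gets $\sup_D\RR_m(L_{m,d;\erm},D)=B(m,d)+O(1/\nu)$; so, multiplying every member of \eqref{eq:IS sim} by $\sqrt\nu$ and invoking (iv), all of them collapse to $c_\infty/\sqrt\nu+o(1/\sqrt\nu)$. Thus the theorem reduces to (iii) and (iv).

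For (iii), fix $p$ and $\g$ and compare $L:=L_{m,d;\erm}$ with $L^*:=L^*_\erm$ coordinate by coordinate through \eqref{eq:E De_emp}. On the event $\{v_x\ne0\}$ both output $\sign v_x$, so the difference $\P\bigl(L(\cdot)(x)\ne\hopt(x)\bigr)-\P\bigl(L^*(\cdot)(x)\ne\hopt(x)\bigr)$ is carried by $\{v_x=0\}=\{V^{p,\g}_x=0\}$, where $L$ errs with probability at most $1$ while $L^*$ errs with conditional probability exactly $\tfrac12$: conditionally on $\{V^{p,\g}_x=0,\ n_x=k\}$ with $k\ge2$, the labels of the $k$ voters at $x$ are exchangeable with exactly $k/2$ of them equal to $\hopt(x)$, so the first voter's label $y_{i_x}$ equals $\hopt(x)$ with probability $\tfrac12$, and on $\{n_x=0\}$ the independent fair coin $\sign u$ does the same. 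Hence $\RR_m(L;p,\g)-\RR_m(L^*;p,\g)\le\tfrac12\sum_{x=1}^d p_x|\g_x|\,\P(V^{p,\g}_x=0)$; taking $\sup_{p,\g}$ and using $\sup_D\RR_m(L^*,D)=B(m,d)=\inf_{L\in\Lrand}\sup_D\RR_m(L,D)$ gives the middle inequality of \eqref{eq:ties}, the left one being trivial since $L\in\L\subseteq\Lrand$.

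To finish (iii) we estimate $\sup_{p,\g}\sum_x p_x|\g_x|\,\P(V^{p,\g}_x=0)$. Conditioning on $n_x$ and using $\P(V^b_k=0)=\binom{k}{k/2}\bigl(\tfrac{1-b^2}{4}\bigr)^{k/2}\le\sqrt{2/(\pi k)}\,e^{-kb^2/2}$ for even $k\ge2$, we get $|\g_x|\,\P(V^{\g_x}_k=0)\le C_0/k$ for $k\ge1$ (maximize $b\mapsto be^{-kb^2/2}$) and $|\g_x|\,\P(V^{\g_x}_0=0)\le1$, hence $p_x|\g_x|\,\P(V^{p,\g}_x=0)\le p_x(1-p_x)^m+C_0\,p_x\,\E\bigl[n_x^{-1}\ii{n_x\ge1}\bigr]\le p_xe^{-mp_x}+\tfrac{2C_0}{m+1}$, where the last step uses $n^{-1}\le2(n+1)^{-1}$ on $\{n\ge1\}$ together with $\E(n_x+1)^{-1}=\bigl(1-(1-p_x)^{m+1}\bigr)/((m+1)p_x)$. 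Summing over the $d$ values of $x$ and using $\sup_{t\ge0}te^{-mt}=1/(em)$, we obtain $\sup_{p,\g}\sum_x p_x|\g_x|\,\P(V^{p,\g}_x=0)\le\tfrac{d}{em}+\tfrac{2C_0 d}{m+1}=O(d/m)=O(1/\nu)=o(1/\sqrt\nu)$, which is \eqref{eq:ties}.

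It remains to prove (iv). The lower bound $\liminf\sqrt\nu\,B(m,d)\ge c_\infty$ comes from testing the defining supremum for $B(m,d)$ against $p\equiv1/d$ and $\g\equiv z/\sqrt\nu$ for fixed $z>0$: then $N^p_x$ is binomial with mean $\nu$, so $\g_x\sqrt{N^p_x}\to z$ in probability and $\opt(N^p_x,z/\sqrt\nu)\to\tfrac12\bigl(1-\erf(z/\sqrt2)\bigr)$ in probability by the central limit theorem (the tie term $\P(V^{\g_x}_{N^p_x}=0)$ vanishes by the bound above), whence, since $\opt\le\tfrac12$, $\sqrt\nu\,B(m,d)\ge z\,\E\opt(N^p_x,z/\sqrt\nu)\to\tfrac z2\bigl(1-\erf(z/\sqrt2)\bigr)$; taking $z=z_*$ gives $\ge c_\infty$. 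The matching upper bound $\limsup\sqrt\nu\,B(m,d)\le c_\infty$ is the crux of the theorem. One argues: the coordinates with $mp_x\le T$ contribute at most $T/(2\nu)$ to $B(m,d)$ (bound $\opt$ by $\tfrac12$ and $|\g_x|$ by $1$), hence $O(T/\sqrt\nu)\to0$ after scaling; on the remaining coordinates a Chernoff bound replaces $N^p_x$ by $\lfloor(1-\eps)mp_x\rfloor$ (using monotonicity of $k\mapsto\opt(k,b)$) at the cost of an $o(1/\sqrt\nu)$ error; then one invokes the per-coordinate inequality $\sqrt k\,\sup_{b\in[0,1]}b\,\opt(k,b)\le c_\infty+\delta(k)$ with $\delta(k)\to0$, and sums using $\sum_x\sqrt{p_x}\le\sqrt d$; finally one lets $T\to\infty$ and then $\eps\to0$. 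The main obstacle — and the place where the paper's fine analysis of the Bayes-risk function enters — is precisely that per-coordinate inequality: a sharp, uniform-in-$b$ refinement of the central limit theorem for $\P(V^b_k<0)+\tfrac12\P(V^b_k=0)$, delicate near $b=1$, resting on the convexity property of Proposition~\ref{lem:convex minor} and on the binomial identities and inequalities of Appendix~\ref{app:bayes}. Note the inequality must be genuinely asymptotic: $b\opt(k,b)\le c_\infty/\sqrt k$ already fails at $k=2$, since $\opt(2,b)=(1-b)/2$ gives $\max_b b\opt(2,b)=\tfrac18>c_\infty/\sqrt2$.
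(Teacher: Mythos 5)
Your handling of \eqref{eq:ties} is complete and correct: on non-tie events any $L_\erm$ and $L^*_\erm$ agree, on a tie $L^*_\erm$ errs with conditional probability exactly $\tfrac12$ (by exchangeability of the labels at $x$, or for $n_x=0$ by the fair coin), so the difference of maximal risks is at most $\tfrac12\sup_{p,\g}\sum_x p_x|\g_x|\P(V^{p,\g}_x=0)$; and your estimate of that term via $b\,\P(V^b_k=0)\le C_0/k$ together with the exact formula for $\E(n_x+1)^{-1}$ gives $O(d/m)$ — in fact a somewhat cleaner route than the paper's, which obtains the same two facts from the identity $\P(V^{p,\g}_x<0)=\E\opt(N^p_x,|\g_x|)-\tfrac12\,\E Q_0(N^p_x,|\g_x|)$ plus a Chernoff splitting. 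Your lower asymptotic bound, via $p\equiv 1/d$, $\g\equiv z_*/\sqrt\nu$ and the CLT, is also a legitimate alternative to the paper's route through $B_0\ge B_1\ge B_2\ge\tB_2\sim c_\infty/\sqrt\nu$.

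The genuine gap is the upper asymptotic bound $\limsup\sqrt\nu\,B(m,d)\le c_\infty$, which is what makes both ``$\sim$'' signs in \eqref{eq:IS sim} true. You reduce it to the per-coordinate inequality $\sqrt k\,\sup_{b\in[0,1]}b\,\opt(k,b)\le c_\infty+\delta(k)$ with $\delta(k)\to0$, and then explicitly leave that inequality unproved, calling it ``the main obstacle'' and attributing it to Proposition~\ref{lem:convex minor} and the Appendix identities. No lemma of the paper states this inequality in citable form; it is exactly what the paper proves inside this theorem's proof, in \eqref{eq:1-s(y)/2}--\eqref{eq:la_k to infty}: from the representation $s_k(b)=s'_k(0)\int_0^b(1-u^2)^{q}\,du$ of \eqref{eq:s_k=} one gets $\opt(k,b)=\tfrac12\,s'_k(0)\int_b^1(1-u^2)^{q}\,du\le A_k\big(1-\erf(b\sqrt q)\big)$ with $A_k=\tfrac{\sqrt\pi\,s'_k(0)}{4\sqrt q}\to\tfrac12$ by Lemma~\ref{lem:C_i}, whence $b\,\opt(k,b)\le c_\infty\la_k/\sqrt k$, $\la_k\to1$; Proposition~\ref{lem:convex minor} plays no role in this step (it serves the nonasymptotic bounds $B_1,B_2$). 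As your own remark that $b\,\opt(2,b)$ exceeds $c_\infty/\sqrt2$ shows, the correction $\delta(k)$ genuinely has to be produced, not postulated, so without this step \eqref{eq:B sim} — and hence the two asymptotic equivalences in \eqref{eq:IS sim} — remains unestablished. The surrounding splitting argument you sketch (discarding coordinates with small $mp_x$, Chernoff replacement of $N^p_x$, summation via $\sum_x\sqrt{p_x}\le\sqrt d$, then $T\to\infty$, $\eps\to0$) is the right plan and parallels the paper's $S_{01},S_{02},S_{11},S_{12}$ decomposition, but it too is only outlined.
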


Here, as usual, the asymptotic equivalence $A\sim B$ means $A/B\to1$. 

Display \eqref{eq:ties} shows that the (asymptotically negligible) pairwise differences between (i) the minimax 
EER $
\inf_{L\in\L} \sup_{D}\RR_m(L,D)$, (ii) its ``randomized'' version $
\inf_{L\in\Lrand} \sup_{D}\RR_m(L,D)$, and (iii) the maximum risk $
\sup_{D}\RR_m(L_{m,d;\erm},D)$ of any empirical-risk-minimizing learning algorithms of the form $L_\erm$ are entirely explained by ties in the mentioned ``voting'', when the ``no-majority'' event $V^{p,\g}_x=0$ occurs for at least one $x\in\X=[d]$. 

It is 
obvious from \eqref{eq:IS<} 
that 
\begin{equation*}
	B(m,d)
\ge\sum_{x=1}^d \frac1d\,b\,\E\opt(N_x,b)
=\,b\,\E\opt(N_1,b)
\end{equation*}
for any $b\in[0,1]$, where $N_x$ stands for $N^p_x$ with $p_x=\frac1d$ for all $x\in[d]$. 
Thus, in view of \eqref{eq:IS sim}, one immediately obtains 

\begin{theorem}\label{th:B_0}  
\begin{equation}\label{eq:IS>}
	\inf_{L\in\L}\sup_{D}\RR_m(L,D)\ge \inf_{L\in\Lrand}\sup_{D}\RR_m(L,D)\ge B_0(m,d):=\sup_{b\in[0,1]} b\,\E\opt(N,b),  
\end{equation} 
where $N$ is
a binomial r.v.
with parameters $m$ and $1/d$. 
\end{theorem}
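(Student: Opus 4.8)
The plan is to obtain Theorem~\ref{th:B_0} as an immediate corollary of Theorem~\ref{th:duality}, by specializing the supremum over $(p,\g)$ appearing in the duality formula \eqref{eq:IS<} to one convenient family of distributions, namely those with uniform $\X$-marginal and constant conditional bias.

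First I would dispatch the leftmost inequality $\inf_{L\in\L}\sup_{D}\RR_m(L,D)\ge\inf_{L\in\Lrand}\sup_{D}\RR_m(L,D)$: by the definition of a randomized learning algorithm (p.~\pageref{Lrand}), every non-randomized $L$ is a randomized one whose value does not depend on the auxiliary variable $u$, so $\L\subseteq\Lrand$, and the infimum over the larger class cannot exceed the infimum over the smaller one. For the remaining inequality, I would invoke \eqref{eq:IS<}, which gives $\inf_{L\in\Lrand}\sup_{D}\RR_m(L,D)=B(m,d)=\sup_{p,\g}\sum_{x=1}^d p_x|\g_x|\,\E\opt(N^p_x,|\g_x|)$, the supremum being over probability vectors $p\in[0,1]^{[d]}$ and $\g\in[-1,1]^{[d]}$. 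Fixing an arbitrary $b\in[0,1]$ and restricting this supremum to the single pair $p\equiv 1/d$ and $\g\equiv b$, each $N^p_x$ becomes binomial with parameters $m$ and $1/d$, hence distributed as $N$, and the corresponding $x$-th summand equals $\tfrac1d\,b\,\E\opt(N,b)$; summing over $x\in[d]$ gives $b\,\E\opt(N,b)$. Since $b\in[0,1]$ was arbitrary, $B(m,d)\ge\sup_{b\in[0,1]}b\,\E\opt(N,b)=B_0(m,d)$, and chaining this with the previous displays yields \eqref{eq:IS>}.

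I do not expect any genuine obstacle: the entire substance of the statement is carried by Theorem~\ref{th:duality}, and the present step is just the evaluation of the generic summand under the uniform-marginal, constant-bias choice together with the observation that each $N^p_x$ then has the $\bin(m,1/d)$ law. The only minor points to record are that, since $b\ge0$, one has $|\g_x|=b$ so that the formula \eqref{eq:IS<} applies verbatim, and that the particular value of $\sign\g$ plays no role here, by the last clause of Theorem~\ref{th:duality}.
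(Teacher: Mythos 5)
Your proposal is correct and follows essentially the same route as the paper: the left inequality is the trivial inclusion $\L\subseteq\Lrand$, and the right one is obtained, exactly as in the text preceding Theorem~\ref{th:B_0}, by specializing the supremum in \eqref{eq:IS<} to the uniform marginal $p_x\equiv 1/d$ and constant bias $\g\equiv b$, so that each $N^p_x$ has the $\bin(m,1/d)$ law and the sum collapses to $b\,\E\opt(N,b)$, after which one takes the supremum over $b\in[0,1]$.
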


Recall \eqref{eq:Z_m^p,g} and let 
\begin{equation}\label{eq:D_ga}
	D_\g:=D_{p,\g}\quad\text{and}\quad Z_m^\g:=Z_m^{p,\g}\quad\text{when $p_x=\tfrac1d$ for all $x\in\X=[d]$}. 
\end{equation}

\begin{theorem}\label{th:average} 
For any $b\in[0,1]$, 
\begin{equation}\label{eq:aver>hatopt}
  \inf_{L\in\Lrand} 
  \;\frac1{2^d}\sum_{\g\in\{-b,b\}^{[d]}}
  \RR(L,D_\g)= b\,\E\opt(N,b).    
\end{equation}
\end{theorem}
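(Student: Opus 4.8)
The plan is to prove Theorem~\ref{th:average} by establishing the two inequalities $\le$ and $\ge$ in \eqref{eq:aver>hatopt} separately, with the $\le$ direction coming almost for free from Theorem~\ref{th:duality} and the $\ge$ direction being the substantive part, a Bayesian (average-risk) lower bound argument.

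First, for the upper bound $\inf_{L\in\Lrand}\frac1{2^d}\sum_{\g\in\{-b,b\}^{[d]}}\RR(L,D_\g)\le b\,\E\opt(N,b)$, I would simply plug in the specific learning algorithm $L^*_\erm$ from \eqref{eq:L*_erm}. By formula \eqref{eq:RR*} in Theorem~\ref{th:duality}, for $p_x=\tfrac1d$ and $\g\in\{-b,b\}^{[d]}$ we get $\RR_m(L^*_\erm,D_\g)=\sum_{x=1}^d\tfrac1d\,b\,\E\opt(N^p_x,b)=b\,\E\opt(N,b)$, where $N=N^p_x$ is $\bin(m,1/d)$; crucially this value does not depend on which $\g\in\{-b,b\}^{[d]}$ we chose (the independence of $\sign\g$ noted in Theorem~\ref{th:duality}), so the average over $\g$ equals $b\,\E\opt(N,b)$ as well, and taking $\inf_L$ can only decrease it.

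The main work is the matching lower bound: for \emph{every} randomized $L$, $\frac1{2^d}\sum_{\g\in\{-b,b\}^{[d]}}\RR(L,D_\g)\ge b\,\E\opt(N,b)$. The natural approach is to recognize the left side as a Bayes risk under the uniform prior $\pi$ on the finite family $\{D_\g:\g\in\{-b,b\}^{[d]}\}$, and to lower-bound it by the Bayes risk, which is attained by the Bayes decision rule. Using \eqref{eq:E De_emp}, the averaged risk is $\frac1{2^d}\sum_\g\sum_{x=1}^d\tfrac1d\,b\,\P\big(L(Z_m^{\g},U)(x)\ne\sign\g_x\big)$. Since under $D_\g$ with uniform $p$ the label sign $\sign\g_x\in\{-1,1\}$ is, a posteriori given the sample, governed only by the votes at $x$, one decomposes the probability coordinate-by-coordinate and fixes all coordinates of $\g$ except $\g_x$. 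For each fixed $x$ and each realization of the sample, the conditional probability of the error event $\{L(\cdot)(x)\ne\sign\g_x\}$, averaged over the two equally likely signs of $\g_x$, is at least the posterior-minimizing value; and by a standard computation this minimal conditional error probability, integrated out, is exactly $\E\opt(N_x,b)$ — because given that $k$ voters appeared at $x$ (i.e.\ $N_x=k$), the two hypotheses $\g_x=b$ and $\g_x=-b$ induce laws on the vote tally $V^b_k$ versus $V^{-b}_k$, and the minimum average error of any (possibly randomized) test between them is $\tfrac12(1-s_k(b))=\opt(k,b)$ by the definition \eqref{eq:opt:=}--\eqref{eq:s_k:=} and the Neyman--Pearson/total-variation characterization of optimal testing. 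Summing $\tfrac1d\,b\,\E\opt(N_x,b)$ over $x=1,\dots,d$ and using that $N_x\overset{d}{=}N$ gives the bound.

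The step I expect to be the main obstacle is making the coordinatewise reduction fully rigorous for a \emph{randomized} learning algorithm whose output coordinate $L(Z_m,U)(x)$ may depend on the entire sample (all coordinates, hence on all the other $\g_{x'}$) and on the shared randomness $U$. The key observation that unblocks this is that, under the uniform marginal $p_x=1/d$ and the product prior on $\g$, the posterior distribution of $\g_x$ given the whole sample depends \emph{only} on the sub-sample of items with $X_i=x$, and in fact only on the vote balance $V_x$ there; so conditioning on everything other than $\g_x$ and on $(Z_m,U)$, the best one can do for coordinate $x$ is the pointwise Bayes rule, and averaging its conditional error over the sample yields $\E\opt(N_x,b)$ regardless of how $L$ couples the coordinates. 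Once this conditional-expectation bookkeeping is set up carefully (conditioning on $N_x$, on the positions $\{i:X_i=x\}$, and on all $\g_{x'}$ with $x'\ne x$), the rest is the routine binomial-testing identity already packaged in $\opt$.
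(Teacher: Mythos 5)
Your proof is correct and is essentially the paper's own argument: the paper proves Theorem~\ref{th:average} by citing \eqref{eq:ave>ave} and \eqref{eq:RR*=} from the proof of Theorem~\ref{th:duality} with $p_x=\tfrac1d$ and $\g\in\{-b,b\}^{[d]}$, and your lower bound is precisely a re-derivation of \eqref{eq:ave>ave} in that special case (symmetrize over the signs of $\g$, condition on $\J^p_x$ and the data off $x$, reduce each coordinate to the $d=1$ binary test whose Bayes risk is $\opt(k,b)$), while your upper bound via $L^*_\erm$ and \eqref{eq:RR*} matches the paper's use of \eqref{eq:RR*=}. The only cosmetic difference is that you invoke the total-variation/Neyman--Pearson characterization of the minimal average testing error where the paper computes it explicitly through the rule $L^*_{1,\erm}$ in \eqref{eq:def-opt}.
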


As we shall see, Theorem~\ref{th:average} follows immediately from the proof of Theorem~\ref{th:duality}. 
On the other hand, Theorem~\ref{th:average} could be viewed 
as 
a refinement of Theorem~\ref{th:B_0}, because clearly 
$
\frac1{2^d}\sum_{\g\in\{-b,b\}^{[d]}}
\RR(L,D_\g)\le\sup_{D}
\RR(L,D)$ for any $L$. 
Even though 
the refinement is slight, 
Theorem~\ref{th:average} will be useful, in particular, in the proof of Theorem~\ref{th:P low}.

\begin{remark}\label{rem:B_0}
Note that,  
by 
\eqref{eq:opt:=}--\eqref{eq:s_k:=}, $\opt(k,b)$ is a polynomial in $b$ of degree $\le k$. Hence, $b\,\E\opt(N,b)$ is a polynomial in $b$ of degree $\le m+1$, and so, the lower bound $B_0(m,d)$ in \eqref{eq:IS>} is an algebraic number, which is not hard to compute unless $m$ is too large. 
For instance, for $c_0(m,d):=B_0(m,d)\sqrt{m/d}$ we find 
\begin{equation}\label{eq:c_0(m,2)}
c_0(5,2)=0.16757\dots,\quad c_0(50,20)=0.17467\dots,\quad\text{and}\quad c_0(50,2)=0.16968\dots 	
\end{equation}
(with the execution times in Mathematica about $0.02$ sec, $1.4$ sec, and $1$ sec, respectively). 
One may note that, even for such a rather small value $5/2=50/20=2.5$ 
of $\nu=m/d$, the values of $c_0(m,d)$ are close to the limit value $c_\infty=0.16997\dots$ --- cf.\ \eqref{eq:clb->c_infty} and \eqref{eq:IS sim}. 
However, more work needs to be done to more fully understand the manner in which the lower bound $B_0(m,d)$ depends on $m$ and $d$. 
\end{remark}

The important first step toward this goal is establishing the following convexity property of the function $k\mapsto\opt(k,b)$. 

\begin{proposition}\label{lem:convex minor}
Take any $b\in[0,1]$. Then the largest convex function $[0,\infty)\ni\ka\mapsto\hatopt(\ka,b)$ such that $\hatopt(k,b)\le\opt(k,b)$ for all $k\in\{0,1,\dots\}$ is given by the formula 
\begin{multline}\label{eq:hatopt:=}
	\hatopt(\ka,b)\\ 
	:= \left\{
	\begin{alignedat}{2}
	&(1-\ka)\opt(0,b)+\ka\opt(1,b)=\tfrac12\,(1-\ka b)&&\text{\ \;if\ \;}0\le\ka\le1, \\ 
	&\tfrac{2i+3-\ka}2\,\opt(2i+1,b)+\tfrac{\ka-2i-1}2\,\opt(2i+3,b)&&\text{\ \;if\ \;}2i+1\le\ka\le2i+3  
	\end{alignedat}
	\right.
\end{multline}
for any $i\in\intr0\infty$. 
\end{proposition}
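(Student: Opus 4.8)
The plan is to reduce the claimed identity to two separate facts: first, that the function $k\mapsto\opt(k,b)$ is ``convex on odd integers'' and monotone in the right way near $0$, so that the piecewise-linear interpolant $\hatopt$ defined in \eqref{eq:hatopt:=} is itself convex on $[0,\infty)$ and lies below $\opt$ at every nonnegative integer; and second, that no convex function lying below $\opt$ at all nonnegative integers can exceed this interpolant. The second point is automatic: any convex $g$ with $g(k)\le\opt(k,b)$ for all $k\in\{0,1,2,\dots\}$ satisfies, on each interval $[2i+1,2i+3]$, the bound $g(\ka)\le\frac{2i+3-\ka}2 g(2i+1)+\frac{\ka-2i-1}2 g(2i+3)\le\frac{2i+3-\ka}2\opt(2i+1,b)+\frac{\ka-2i-1}2\opt(2i+3,b)=\hatopt(\ka,b)$ by convexity and the interpolation inequality, and similarly on $[0,1]$ using $g(0)\le\opt(0,b)=\frac12$ and $g(1)\le\opt(1,b)=\frac12(1-b)$. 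So $\hatopt$ is an upper bound for the family of admissible convex minorants, and it remains only to show $\hatopt$ is itself a member of that family.

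The substantive work is therefore to prove that $\hatopt$ is convex and that $\hatopt(k,b)\le\opt(k,b)$ for every integer $k\ge0$. For the minorant property: at $k=0,1$ equality holds by the first branch; for $k=2i+1$ (odd) equality holds by the second branch; and for $k=2i+2$ (even, $\ge2$) the second branch gives $\hatopt(2i+2,b)=\frac12\opt(2i+1,b)+\frac12\opt(2i+3,b)$, so I need $\opt(2i+2,b)\ge\frac12\big(\opt(2i+1,b)+\opt(2i+3,b)\big)$. Equivalently, in terms of $s_k(b)$ from \eqref{eq:s_k:=}, this is $s_{2i+2}(b)\le\frac12\big(s_{2i+1}(b)+s_{2i+3}(b)\big)$. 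I expect this, together with the convexity of $\hatopt$, to follow from monotonicity and convexity-type relations among the $s_k(b)$ (equivalently among the probabilities $\P(V^b_k>0)-\P(V^{-b}_k>0)$), which are exactly the ``apparently novel identities and inequalities for binomial distributions'' the introduction promises are developed in Appendix~\ref{app:bayes}. Concretely, I would first record closed forms: since $V^b_k$ and $V^{-b}_k$ differ only by sign of $b$, and $V^b_k>0$ means more than $k/2$ of the $Y^b_i$ equal $+1$, one gets $s_k(b)$ as an explicit alternating-type binomial sum in $b$; the key is that adding one more voter changes a ``strict majority'' event in a controlled way, and one can compare $s_{k}$ to $s_{k+1}$ by a coupling (the $(k{+}1)$-st vote is $+1$ or $-1$). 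Establishing the three relations $s_{2i+1}(b)\ge s_{2i+3}(b)$ doesn't directly suffice — I actually need the mid-point inequality $2s_{2i+2}\le s_{2i+1}+s_{2i+3}$ and the slope inequalities ensuring $\hatopt$'s breakpoint slopes are nondecreasing, i.e. $\frac12\big(\opt(1,b)-\opt(0,b)\big)\le\opt(3,b)-\opt(1,b)\le\opt(5,b)-\opt(3,b)\le\cdots$, which in $s$-language reads $\tfrac12\big(s_0-s_1\big)\ge s_1-s_3\ge s_3-s_5\ge\cdots$ (note $s_0=0$? — in fact $s_0(b)=0$ since $V^b_0=0$ so $\P(V^b_0>0)=0$, giving $\opt(0,b)=\tfrac12$, consistent). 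So the real content is: the sequence $(s_{2i+1}(b))_{i\ge0}$ is nonincreasing and convex in $i$, and the even terms sit above the midpoints of consecutive odd terms.

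The main obstacle will be proving these monotonicity/convexity statements for the binomial ``strict-majority gap'' quantities $s_k(b)$ uniformly in $b\in[0,1]$. The cleanest route is probably to write $\P(V^b_k>0)$ via the incomplete beta / binomial partial sums and use the standard recursions $\P(\bin(k+1,\tfrac{1+b}2)>\tfrac{k+1}2)$ vs. $\P(\bin(k,\tfrac{1+b}2)>\tfrac{k}2)$, tracking the difference across $\pm b$; pairing $k\mapsto k+2$ steps should produce a single binomial pmf term (the ``central'' term) times an explicit nonnegative polynomial in $b$, from which convexity in $i$ for the odd subsequence and the even-midpoint bound both drop out. I would expect to invoke the Appendix~\ref{app:bayes} identities rather than re-derive them here, so that the body of this proof is: (1) the trivial upper-envelope argument above; (2) reduction of the minorant property to the stated $s_k$ inequalities; (3) reduction of convexity of $\hatopt$ to the stated slope inequalities; (4) citation of the binomial identities/inequalities that establish (2) and (3). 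The delicate part — and the one I'd flag as the crux — is step (4), verifying that those binomial inequalities hold for \emph{all} $b\in[0,1]$, including the degenerate endpoint $b=1$ (where $s_k(1)=1$ for $k\ge1$, $s_0(1)=0$, and one should check the $k=0,1$ transition separately) and the $b=0$ case (where all $s_k=0$ and $\opt\equiv\tfrac12$, so $\hatopt\equiv\tfrac12$ trivially).
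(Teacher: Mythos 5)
Your skeleton is right as far as it goes: the ``upper envelope'' half (any convex $g$ with $g(k)\le\opt(k,b)$ at integers is $\le\hatopt$ by convexity on $[0,1]$ and on each $[2i+1,2i+3]$) is immediate, and the real content is indeed that $\hatopt(\cdot,b)$ is itself convex and lies below $\opt(\cdot,b)$ at every integer. But that real content is exactly what your proposal does not prove. You reduce it to monotonicity/convexity-type facts about $s_k(b)$ and then defer to binomial identities you hope exist, with only a speculative route sketched (``pairing $k\mapsto k+2$ should produce a single central pmf term\dots''); that is a genuine gap at what you yourself flag as the crux. Worse, your $s$-language transcription of the targets has the directions flipped: since $\opt(k,b)=\tfrac12(1-s_k(b))$ must be nonincreasing and convex along odd $k$, the odd subsequence $s_{2i+1}(b)$ must be \emph{nondecreasing and concave} in $i$, so the needed chain is $s_1-s_3\le s_3-s_5\le\cdots$, not $\ge$; and the first-breakpoint condition is $2\bigl(\opt(1,b)-\opt(0,b)\bigr)\le\opt(3,b)-\opt(1,b)$, equivalently $s_3(b)\le 3s_1(b)$, not the $\tfrac12$-normalized version you wrote. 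Carrying out your ``step (4)'' as stated would therefore be an attempt to prove false inequalities.

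For comparison, the paper supplies precisely the missing ingredients, and by a cleaner route than your midpoint inequality. Lemma~\ref{lem:concave} proves strict convexity of $\opt(\cdot,b)$ on $\{0,1,3,5,\dots\}$ from the closed form \eqref{eq:s_k=}, $s_k(b)=s_k'(0)\int_0^b(1-u^2)^{q_k}\,du$, via a single sign-change argument; monotonicity is then free, because $\opt(k,b)\to0$ as $k\to\infty$ (law of large numbers) and a convergent convex sequence is nonincreasing. The even integers are handled not by your inequality $2s_{2i+2}\le s_{2i+1}+s_{2i+3}$ but by the exact identity \eqref{eq:odd-even}, $s_{2i+2}(b)=s_{2i+1}(b)$, which with monotonicity gives $\opt(2i+2,b)=\opt(2i+1,b)\ge\tfrac12\bigl(\opt(2i+1,b)+\opt(2i+3,b)\bigr)=\hatopt(2i+2,b)$; the first-segment slope condition is \eqref{eq:s_0,s_1,s_3} ($s_3<3s_1$). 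Until you prove these facts (or correct equivalents), your proposal is a correct reduction but not a proof.
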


That is, the largest convex minorant $\hatopt(\cdot,b)$ on $[0,\infty)$ of the function $\opt(\cdot,b)$ on $\{0,1,\dots\}$ is just the linear interpolation of $\opt(\cdot,b)$ at the points $0,1,3,5,\dots$. 
This is illustrated in Figure~\ref{fig:hatbayes}.  

\begin{figure}[htbp]
	\centering
		\includegraphics[width=.60\textwidth]{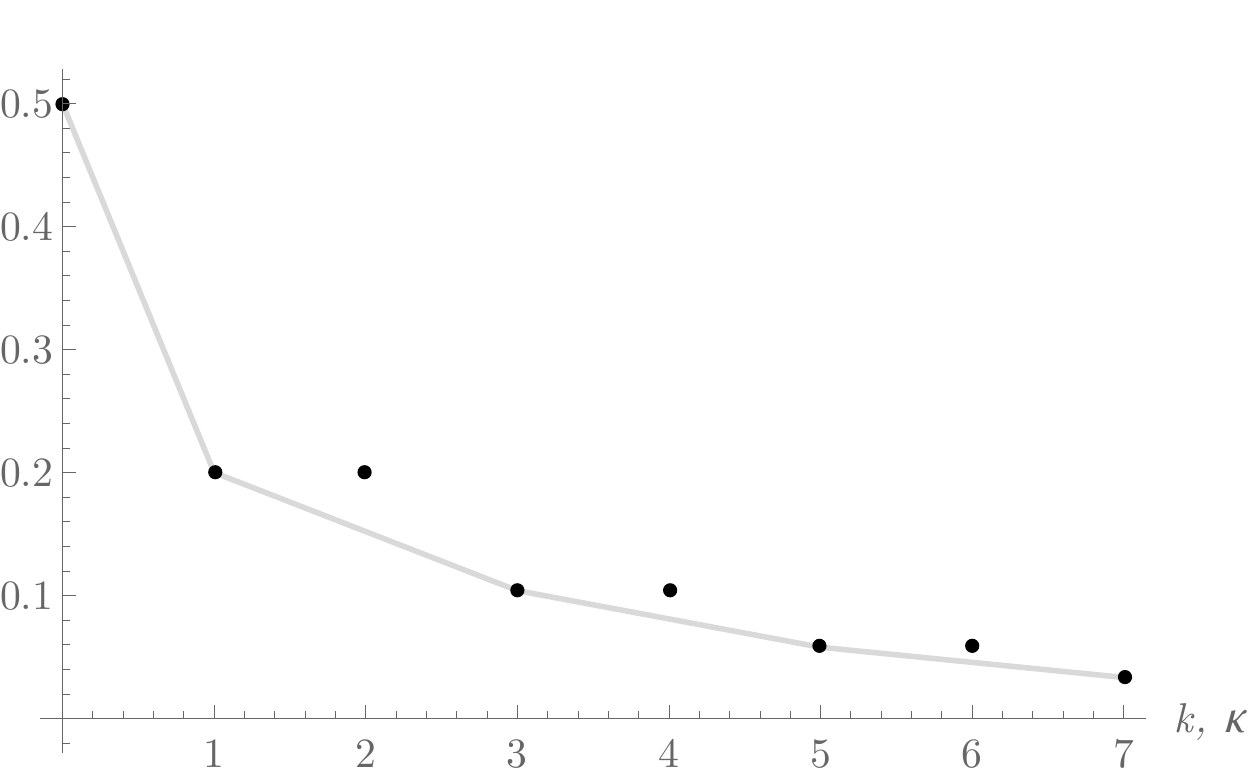}
	\caption{
Graphs of the maps $\{0,1,\dots,7\}\ni k\mapsto\opt(k,b)$ (black dots) and $[0,7]\ni\ka\mapsto\hatopt(\ka,b)$ (gray broken line) for $b=0.6$.  
	}
	\label{fig:hatbayes}
\end{figure}

Recall the definition \eqref{eq:nu:=} of $\nu$. 
Using \eqref{eq:IS>}, Proposition~\ref{lem:convex minor}, Jensen's inequality, and the equality $\E N=\nu$, one immediately obtains

\begin{theorem}\label{th:B_1} 
\begin{equation}\label{eq:IS>hatopt}
  \inf_{L} \sup_{D}
  \RR(L,D)\ge B_0(m,d)\ge B_1(\nu):=
	\sup_{b\in(0,1)}b\,\hatopt(\nu,b).    
\end{equation}
\end{theorem}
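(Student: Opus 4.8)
The plan is to chain together three facts that are, by this point in the paper, either proved or elementary: the lower bound $\inf_L\sup_D\RR(L,D)\ge B_0(m,d)=\sup_{b\in[0,1]}b\,\E\opt(N,b)$ from Theorem~\ref{th:B_0}, the identity $\opt=\hatopt$ on the integers together with the convexity of $\hatopt(\cdot,b)$ from Proposition~\ref{lem:convex minor}, and Jensen's inequality applied to the binomial random variable $N$ with parameters $m$ and $1/d$. First I would fix an arbitrary $b\in(0,1)$. By Proposition~\ref{lem:convex minor}, $\hatopt(k,b)\le\opt(k,b)$ for every $k\in\{0,1,\dots\}$, so $\E\opt(N,b)\ge\E\hatopt(N,b)$ since $N$ takes only nonnegative integer values. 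Then, because $\ka\mapsto\hatopt(\ka,b)$ is convex on $[0,\infty)$, Jensen's inequality gives $\E\hatopt(N,b)\ge\hatopt(\E N,b)$; and since $N$ is $\bin(m,1/d)$ we have $\E N=m/d=\nu$, so $\E\hatopt(N,b)\ge\hatopt(\nu,b)$. Combining, $b\,\E\opt(N,b)\ge b\,\hatopt(\nu,b)$ for each such $b$, and taking the supremum over $b\in(0,1)$ on both sides yields $B_0(m,d)\ge\sup_{b\in(0,1)}b\,\hatopt(\nu,b)=B_1(\nu)$. The leftmost inequality $\inf_L\sup_D\RR(L,D)\ge B_0(m,d)$ is exactly \eqref{eq:IS>} from Theorem~\ref{th:B_0}, so the full chain $\inf_L\sup_D\RR(L,D)\ge B_0(m,d)\ge B_1(\nu)$ follows.

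One small bookkeeping point to handle carefully: Theorem~\ref{th:B_0} states $B_0(m,d)=\sup_{b\in[0,1]}b\,\E\opt(N,b)$ with the supremum over the closed interval $[0,1]$, whereas $B_1(\nu)$ is defined with the supremum over the open interval $(0,1)$. This causes no difficulty, since $\sup_{b\in[0,1]}b\,\E\opt(N,b)\ge\sup_{b\in(0,1)}b\,\E\opt(N,b)$ trivially (the open interval is a subset), and it is the latter quantity that we have just bounded below by $B_1(\nu)$; alternatively one notes the $b=0$ and $b=1$ endpoints contribute nothing beyond what the interior sup already captures, by continuity of $b\mapsto b\,\E\opt(N,b)$. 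Either way the inequality $B_0(m,d)\ge B_1(\nu)$ is immediate.

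There is no real obstacle here — the proposition is explicitly flagged in the text as something one "immediately obtains" from already-established results, and the entire content of the argument is the two-line application of Jensen plus the minorant inequality. The only thing that genuinely requires work is Proposition~\ref{lem:convex minor} itself (the identification of the convex minorant as the linear interpolation through $0,1,3,5,\dots$), but that is a separate statement with its own proof and may be invoked as a black box here. If anything, the one place to be slightly attentive is that Jensen's inequality is being applied to a function convex on all of $[0,\infty)$ and a random variable supported on $\{0,1,\dots,m\}\subseteq[0,\infty)$, so the hypothesis of Jensen is met without any truncation or extension issues; and that $\hatopt(\ka,b)\le\opt(\ka,b)$ is only claimed at integer $\ka$, which is exactly where $N$ lives, so the step $\E\opt(N,b)\ge\E\hatopt(N,b)$ is legitimate pointwise on the support of $N$.
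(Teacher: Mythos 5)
Your argument is exactly the paper's: the text derives Theorem~\ref{th:B_1} "using \eqref{eq:IS>}, Proposition~\ref{lem:convex minor}, Jensen's inequality, and the equality $\E N=\nu$," which is precisely the chain you spell out (minorant inequality on the support of $N$, Jensen for the convex function $\hatopt(\cdot,b)$, then the supremum over $b$). The proof is correct and requires no changes.
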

Here and in the rest of this section, $\inf_{L}$ can 
be replaced by either $\inf_{L\in\L}$ or $\inf_{L\in\Lrand}$.

\begin{remark}\label{rem:B_1}
An advantage of the lower bound $B_1(\nu)$ in \eqref{eq:IS>hatopt} over the bound $B_0(m,d)$ in \eqref{eq:IS>} is that it depends only on $\nu=m/d$; also, $B_1(\nu)$ is not hard to compute unless $\nu$ is too large. 
Yet, the nature of the dependence of $B_1(\nu)$ on $\nu$ may still seem rather obscure. 
Therefore, we are going to present a lower bound on $B_1(\nu)$ that is much easier to grasp and yet is (i) asymptotic to the original lower bound $B_0(m,d)$ for $\nu=m/d\to\infty$ and (ii) 
close to $B_0(m,d)$ even for moderate values of $\nu=m/d$. 
\end{remark}

In Appendix~\ref{app:bayes}, we shall obtain explicit and rather tight lower bounds on the function $\opt$. In view of Theorem~\ref{th:B_1} and Proposition~\ref{lem:convex minor}, this will result in explicit lower bounds on the minimax excess risk 
$\inf_{L} \sup_{D}
\RR(L,D)$, as follows. 

Let 
\begin{equation}
  \label{eq:z*}
	z_*=0.75179\dots
\end{equation}
be the unique maximizer of $\frac{z}2\,\big(1-\erf(z/\sqrt2)\big)$ in real $z>0$, with the maximum value $c_\infty=0.16997\dots$, as in \eqref{eq:clb->c_infty}. 

\begin{theorem}\label{th:B_2,tB_2}
Assume that 
$\nu\ge 
1$. Let $i_\nu:=\lfloor\frac{\nu-1}2\rfloor$. 
Then 
\begin{equation}\label{eq:IS>B_2}
	\inf_{L} \sup_{D}
	\RR(L,D)\ge B_1(\nu)\ge
	B_2(\nu):=\frac{c_\nu}{\sqrt\nu},   
\end{equation}
where 
\begin{equation}\label{eq:c_nu}
	c_\nu:=\frac{z_*}2\,\Big(1-C_{i_\nu}\frac{\erf(z_*/\sqrt2)}{
	\exp\{-z_*^2/(6\nu)\}}\Big)<c_\infty  
\quad\text{and}\quad 
C_i=\frac{\sqrt{\pi(i+1/2)}}{2^{2i}}\,\binom{2i}i	
\end{equation} 
for $i=0,1,\dots$. 
Moreover, for $\nu\ge3$, $B_2(\nu)$ admits a simple lower bound on it: 
\begin{equation}\label{eq:tc_nu}
B_2(\nu)\ge\tB_2(\nu):=\frac{\tc_\nu}{\sqrt\nu},\quad\text{where}\quad 
	\tc_\nu:=\frac{z_*}2\,
	\bigg(1-\Big(\frac{i_\nu +1}{i_\nu}\Big)^{1/8}\frac{\erf(z_*/\sqrt2)}{
	\exp\{-z_*^2/(6\nu)\}}\bigg)\le c_\nu. 
\end{equation}
\end{theorem}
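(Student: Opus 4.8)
The first inequality, $\inf_L\sup_D\RR(L,D)\ge B_1(\nu)$, is exactly Theorem~\ref{th:B_1}, so the plan is to prove $B_1(\nu)\ge B_2(\nu)$ and then $B_2(\nu)\ge\tB_2(\nu)$ for $\nu\ge3$. For the first I would not try to evaluate the supremum $B_1(\nu)=\sup_{b\in(0,1)}b\,\hatopt(\nu,b)$ exactly, but simply estimate it from below at the single point $b:=z_*/\sqrt\nu$. By Proposition~\ref{lem:convex minor}, since $2i_\nu+1\le\nu<2i_\nu+3$ we have $\hatopt(\nu,b)=\lambda\,\opt(2i_\nu+1,b)+(1-\lambda)\,\opt(2i_\nu+3,b)$ with $\lambda:=\tfrac{2i_\nu+3-\nu}2\in[0,1]$, i.e.\ $1-2\hatopt(\nu,b)=\lambda\,s_{2i_\nu+1}(b)+(1-\lambda)\,s_{2i_\nu+3}(b)$. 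Hence the claim $b\,\hatopt(\nu,b)\ge c_\nu/\sqrt\nu$ is equivalent to the upper bound
\[
	\lambda\,s_{2i_\nu+1}(b)+(1-\lambda)\,s_{2i_\nu+3}(b)\;\le\;C_{i_\nu}\,\erf\!\big(z_*/\sqrt2\big)\,e^{z_*^2/(6\nu)} .
\]

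The inputs I would draw on are the sharp estimates of $\opt$ to be established in Appendix~\ref{app:bayes}. The key one is the identity $s_{2i+1}(b)=\tfrac{2i+1}{2^{2i}}\binom{2i}i\int_0^b(1-u^2)^i\,du$ (for all $i\in\intr0\infty$, $b\in[0,1]$); since $\tfrac{2i+1}{2^{2i}}\binom{2i}i=C_i\,\tfrac{2\sqrt{i+1/2}}{\sqrt\pi}$, combining this identity with $1-u^2\le e^{-u^2}$ and the substitution $t=u\sqrt{i+1/2}$ produces a bound of essentially the shape $s_{2i+1}(b)\le C_i\,e^{b^2/6}\,\erf\!\big(b\sqrt{i+1/2}\big)$. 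Plugging in $b=z_*/\sqrt\nu$, the point is that the normalizing weights $2(i_\nu+1/2)=2i_\nu+1\le\nu$ and $2(i_\nu+3/2)=2i_\nu+3>\nu$ make the $\erf$-argument at index $i_\nu$ land at or below $z_*/\sqrt2$ and the exponential factor equal $e^{b^2/6}=e^{z_*^2/(6\nu)}$; the argument of the higher-index term $s_{2i_\nu+3}$ does overshoot $z_*/\sqrt2$, but only by an amount that is absorbed, near the bottom of each interval $[2i_\nu+1,2i_\nu+3)$ by the smallness of its weight $1-\lambda$, and near the top by $C_{i_\nu+1}<C_{i_\nu}$ (the $C_i$'s are strictly decreasing, since $C_{i+1}^2/C_i^2=\frac{(i+3/2)(i+1/2)}{(i+1)^2}<1$). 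Assembling these, and checking that all the remaining finite-$\nu$ discrepancies fit inside the single factor $e^{z_*^2/(6\nu)}$, yields the displayed inequality, hence $B_1(\nu)\ge b\,\hatopt(\nu,b)\ge c_\nu/\sqrt\nu=B_2(\nu)$. The bound $c_\nu<c_\infty$ is then immediate: the Wallis inequality $\binom{2i}i\ge 2^{2i}/\sqrt{\pi(i+1/2)}$ gives $C_i\ge1$, so $C_{i_\nu}e^{z_*^2/(6\nu)}>1$ and the quantity subtracted in $c_\nu$ strictly exceeds $\erf(z_*/\sqrt2)$, the quantity subtracted in $c_\infty=\tfrac{z_*}2\big(1-\erf(z_*/\sqrt2)\big)$.

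For the final assertion, $\nu\ge3$ forces $i_\nu=\lfloor(\nu-1)/2\rfloor\ge1$, so $\big(\tfrac{i_\nu+1}{i_\nu}\big)^{1/8}$ is defined; comparing the two displayed formulas for $c_\nu$ and $\tc_\nu$, the inequality $\tB_2(\nu)\le B_2(\nu)$ is equivalent to $C_i\le(1+1/i)^{1/8}$, i.e.\ $C_i^8\le 1+1/i$, for every integer $i\ge1$. This is a Stirling/Wallis-type estimate for the central binomial coefficient: writing $C_i^8=\pi^4(i+\tfrac12)^4\,2^{-16i}\binom{2i}i^8$, the left side is $1+1/i+O(1/i^2)$ with the correction of the right sign, and the inequality is checked directly for the first few $i$ and via a sufficiently precise bound on $\binom{2i}i$ for the rest; this belongs with the other elementary estimates in Appendix~\ref{app:bayes}.

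I expect the main obstacle to be the middle step: handling the convex combination uniformly in $\nu$ all the way down to $\nu=1$. The $s_{2i_\nu+3}$ term is genuinely awkward because its Gaussian-normalized argument exceeds $z_*/\sqrt2$ for \emph{every} $\nu<2i_\nu+3$, so a crude term-by-term bound is too lossy near the interval bottoms; one needs a joint treatment of the two terms that plays their $\erf$-overshoot against the interpolation weight $1-\lambda$, or a monotonicity-in-$\nu$ argument localizing the worst case, together with the precise calculus that forces exactly the constant $\tfrac16$ in $e^{z_*^2/(6\nu)}$ (and hence the exact form of $c_\nu$). By comparison, the reduction steps — Theorem~\ref{th:B_1}, Proposition~\ref{lem:convex minor}, and the identity for $s_{2i+1}$ — are routine once those ingredients are in hand.
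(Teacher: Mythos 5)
Your overall architecture matches the paper's: the first inequality is Theorem~\ref{th:B_1}, then one plugs the single value $b=z_*/\sqrt\nu$ into $B_1(\nu)=\sup_b b\,\hatopt(\nu,b)$, writes $\hatopt(\nu,b)$ as the linear interpolation of $\opt(\cdot,b)$ at $2i_\nu+1$ and $2i_\nu+3$, and bounds $s_{2i+1}$ by a Gaussian-type integral. But the heart of the matter --- exactly what the paper isolates as Lemma~\ref{lem:hatopt}, namely $\hatopt(\nu,b)\ge\tfrac12\big(1-C_{i_\nu}e^{b^2/6}\erf(b\sqrt{\nu/2})\big)$ --- is precisely what your sketch does not deliver, and you acknowledge as much. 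Your proposed derivation (``$s_{2i+1}(b)=s_{2i+1}'(0)\int_0^b(1-u^2)^i\,du$, then $1-u^2\le e^{-u^2}$ and substitute'') yields $\erf(b\sqrt i)$, not $\erf\big(b\sqrt{i+1/2}\big)$; shifting the exponent crudely via $e^{-iu^2}\le e^{-(i+1/2)u^2}e^{b^2/2}$ costs a factor $e^{b^2/2}$, which would replace $z_*^2/(6\nu)$ by $z_*^2/(2\nu)$ and hence would not give the stated $c_\nu$. The paper obtains the half-step shift together with the sharp factor $e^{b^2/6}$ from the log-convexity (H\"older) of $q\mapsto\tS_q(b)=\int_0^b e^{-qu^2}\,du$, via $\tS_q(b)\le\tS_{q+1/2}(b)\,\tS_0(b)/\tS_{1/2}(b)$ and the elementary inequality $b\le e^{b^2/6}\int_0^b e^{-u^2/2}\,du$. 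Likewise, the ``overshoot'' of the $(2i_\nu+3)$-term that you flag as the main obstacle is dispatched in one line by the concavity of $u\mapsto\erf(b\sqrt{u/2})$: after replacing $C_{i_\nu+1}$ by the larger $C_{i_\nu}$, the interpolated value $\tfrac{2i_\nu+3-\nu}2\,\psi_b(2i_\nu+1)+\tfrac{\nu-2i_\nu-1}2\,\psi_b(2i_\nu+3)$ is at most $\psi_b(\nu)$. Without these two ingredients your middle step remains a plan rather than a proof of the specific bound $B_1(\nu)\ge c_\nu/\sqrt\nu$.

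A secondary point concerns $C_i^8\le1+1/i$ for $i\ge1$, which you propose to get from Stirling-with-remainder plus checking small $i$, suggesting the comparison is settled at order $1/i^2$. In fact the two sides agree through order $1/i^2$: $C_i=1+\tfrac1{8i}-\tfrac7{128i^2}+O(i^{-3})$, so $C_i^8=1+\tfrac1i+O(i^{-3})$, and the inequality is decided only at third order, so a considerably sharper bound on $\binom{2i}i$ than your sketch indicates would be required. The paper sidesteps this entirely: since $C_i\downarrow1$, one writes $C_i=\prod_{\al\ge i}r_\al$ with $r_\al=C_\al/C_{\al+1}=\tfrac{2\al+2}{\sqrt{(2\al+2)^2-1}}$, verifies the one-step inequality $r_\al<\big(1-(\al+1)^{-2}\big)^{-1/8}$ by an explicit rational computation, and telescopes. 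Your remaining pieces are fine: $c_\nu<c_\infty$ does follow from $C_{i_\nu}\ge1$ and $e^{z_*^2/(6\nu)}>1$, and $\tc_\nu\le c_\nu$ is immediate once \eqref{eq:C_i<} is available.
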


\begin{remark}\label{rem:cheb,aver}
To obtain the second inequality in \eqref{eq:IS>B_2} \big($B_1(\nu)\ge B_2(\nu)$\big), 
in the proof of Theorem~\ref{th:B_2,tB_2} we are going to use, in particular, two facts: (i) that $C_i$ decreases in $i$ (as stated in Lemma~\ref{lem:C_i}) and (ii) the concavity of $\erf(b\sqrt{k/2})$ in $k$. If one also uses the obvious fact that $\erf(b\sqrt{k/2})$ increases in $k$, then, by Chebyshev's integral inequality, 
\begin{align*}
	&(1-w_i)C_i\erf\big(b\sqrt{\tfrac{2i+1}2}\big)+w_i C_{i+1}\erf\big(b\sqrt{\tfrac{2i+3}2}\big) \\ 
	&\le\big[(1-w_i)C_i+w_i C_{i+1}\big]\,
	\big[(1-w_i)\erf\big(b\sqrt{\tfrac{2i+1}2}\big)+w_i \erf\big(b\sqrt{\tfrac{2i+3}2}\big)\big] \\ 
	&\le\big[(1-w_i)C_i+w_i C_{i+1}\big]\,\erf(b\sqrt{\nu}), 
\end{align*}
where $i:=i_\nu$ and $w_i:=\tfrac{\nu-2i-1}2\in[0,1)$. 
Thus, one can replace $C_{i_\nu}=C_{i_\nu}\vee C_{i_\nu+1}$ in \eqref{eq:c_nu} by the smaller (and hence better) value $(1-w_i)C_i+w_i C_{i+1}$, with $i=i_\nu$.  
Quite similarly, one can replace $\tC_{i_\nu}:=\big(\frac{i_\nu +1}{i_\nu}\big)^{1/8}=\tC_{i_\nu}\vee \tC_{i_\nu+1}$ in \eqref{eq:tc_nu} by the smaller (and hence better) value $(1-w_i)\tC_i+w_i \tC_{i+1}$, with $i=i_\nu$. 
However, these improvements are comparatively small, especially for larger values of $\nu$, and the resulting expressions will be less easy to perceive.
\end{remark}

It is clear that 
\begin{equation}\label{eq:c_nu,tc_nu}
	c_\nu\to c_\infty\quad\text{and}\quad\tc_\nu\to c_\infty
\end{equation}
as $\nu\to\infty$. In fact, $c_\nu$ and even $\tc_\nu$ are rather close to $c_\infty$ already for rather small values of $\nu$. E.g., one has $c_5=0.1553
6\dots$, $\tc_5=0.1551
4\dots$, $c_{50}=0.16852\dots$, and $\tc_{50}=0.16852\dots$, and indeed all these four values are rather close to $c_\infty=0.16997\dots$. We also see that the values of $\tc_\nu$ are not only simpler to compute than, but also very close to, the corresponding values of $c_\nu$. 
 
Inequality \eqref{eq:IS>B_2} in Theorem~\ref{th:B_2,tB_2} does not cover the case $0<\nu<1$, and inequality \eqref{eq:tc_nu} 
does not cover the case $0
<\nu<3$. These two apparently less important cases are covered, complementarily, by 

\begin{proposition}\label{prop:B_2,B_3}
\begin{multline}\label{eq:IS>B_2,nu<3}
	\inf_{L} \sup_{D}
	\RR(L,D)
	\ge \hat B_2(\nu):= \\ 
	\left\{
	\begin{alignedat}{2}
	  &B_1(\nu)=
	  \tfrac12\,(1-\nu) &&\text{\ \;if\ \;}0<\nu\le\tfrac12, \\
	  &B_1(\nu)=
	  \tfrac1{8\nu} &&\text{\ \;if\ \;}\tfrac12\le\nu\le1, \\
	&\frac{(17 - 2 \nu) \left( 57187-3253 \nu-138 \nu ^2 + 212 \nu ^3 - 8 \nu ^4\right)}{6480000} &&\text{\ \;if\ \;}1\le\nu\le3.  
	\end{alignedat}
	\right.   
\end{multline} 
\end{proposition}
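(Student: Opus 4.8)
The plan is to treat the three regimes of $\nu$ separately, each time invoking Theorem~\ref{th:B_1} to reduce the claim to a statement about $B_1(\nu)=\sup_{b\in(0,1)}b\,\hatopt(\nu,b)$, and then to compute $\hatopt(\nu,b)$ explicitly from Proposition~\ref{lem:convex minor}. For $0<\nu\le\tfrac12$ and $\tfrac12\le\nu\le1$ this is immediate: by the first branch of \eqref{eq:hatopt:=}, $\hatopt(\nu,b)=\tfrac12(1-\nu b)$, so $b\,\hatopt(\nu,b)=\tfrac12 b(1-\nu b)$ is a downward parabola in $b$ with unconstrained maximizer $b^*=\tfrac1{2\nu}$. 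When $\nu\le\tfrac12$ we have $b^*\ge1$, so on $(0,1)$ the supremum is approached at $b\to1$, giving $B_1(\nu)=\tfrac12(1-\nu)$; when $\tfrac12\le\nu\le1$ we have $b^*=\tfrac1{2\nu}\in[\tfrac12,1)$, and plugging in yields $B_1(\nu)=\tfrac12\cdot\tfrac1{2\nu}\cdot(1-\tfrac12)=\tfrac1{8\nu}$. This proves the first two branches of \eqref{eq:IS>B_2,nu<3} exactly (as equalities $\hat B_2(\nu)=B_1(\nu)$), using only Theorem~\ref{th:B_1} and Proposition~\ref{lem:convex minor}.

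The substantive case is $1\le\nu\le3$, where $i_\nu=\lfloor\tfrac{\nu-1}2\rfloor=0$, so $\nu$ lies in the interpolation interval $[1,3]$ and the second branch of \eqref{eq:hatopt:=} gives $\hatopt(\nu,b)=\tfrac{3-\nu}2\,\opt(1,b)+\tfrac{\nu-1}2\,\opt(3,b)$. From \eqref{eq:opt:=}--\eqref{eq:V^b_k} one computes $\opt(1,b)=\tfrac12(1-b)$ and $\opt(3,b)=\tfrac12(1-s_3(b))$ with $s_3(b)=\tfrac32 b-\tfrac12 b^3$ (the probability that a sum of three $\pm1$ Bernoulli$(\tfrac{1+b}2)$ variables is positive, minus its reflection), so $\opt(3,b)=\tfrac12(1-\tfrac32 b+\tfrac12 b^3)$. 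Thus $\hatopt(\nu,b)$ is an explicit cubic polynomial in $b$ with coefficients linear in $\nu$, and $B_1(\nu)=\sup_{b\in(0,1)}b\,\hatopt(\nu,b)$ is the supremum of an explicit quartic in $b$. The next step is to locate the maximizer $b_\nu\in(0,1)$ of this quartic — set the derivative (a cubic in $b$) to zero — and substitute back. The claimed rational function of $\nu$ on the right of the third branch of \eqref{eq:IS>B_2,nu<3} should be exactly this value $B_1(\nu)$ (equivalently, a tractable lower bound on it obtained by substituting a convenient near-optimal $b$), so the verification reduces to a polynomial identity / inequality in $\nu$ on $[1,3]$.

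I expect the main obstacle to be the case $1\le\nu\le3$: the maximizer $b_\nu$ of the quartic $b\mapsto b\,\hatopt(\nu,b)$ is a root of a cubic whose coefficients depend on $\nu$, so an exact closed form for $B_1(\nu)$ is unwieldy, and the clean rational expression in \eqref{eq:IS>B_2,nu<3} is presumably obtained by substituting a specific algebraic function $b=b(\nu)$ (for instance the one making the bracketed quartic factor nicely, or a rational interpolant pinned down by matching values and derivatives at $\nu=1$, $\nu=3$, and the point $\nu=8.5$ suggested by the factor $(17-2\nu)$) and then checking that this choice is legitimate, i.e. $b(\nu)\in(0,1)$ throughout and the resulting lower bound is valid. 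The verification that the displayed degree-four polynomial $57187-3253\nu-138\nu^2+212\nu^3-8\nu^4$ stays positive (and that the whole expression is $\le B_1(\nu)$) on $[1,3]$ is then a finite, if tedious, calculus exercise — bounding a univariate polynomial on a compact interval — which I would dispatch by examining its derivative and endpoint values rather than by any conceptual argument. Consistency with the other branches at $\nu=1$ (both give $\tfrac18$) and with Theorem~\ref{th:B_2,tB_2} at $\nu=3$ provides useful sanity checks.
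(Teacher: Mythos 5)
Your handling of the first two branches is exactly the paper's argument and is complete: by \eqref{eq:hatopt:=}, $\hatopt(\nu,b)=\tfrac12(1-\nu b)$ for $0<\nu\le1$, and maximizing $\tfrac b2(1-\nu b)$ in $b$ gives $\tfrac12(1-\nu)$ (sup attained as $b\to1$) when $\nu\le\tfrac12$ and $\tfrac1{8\nu}$ (at $b=\tfrac1{2\nu}$) when $\tfrac12\le\nu\le1$; combined with Theorem~\ref{th:B_1} this settles those cases, with $\hat B_2(\nu)=B_1(\nu)$ there, just as in the paper.

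For $1\le\nu\le3$ your setup is correct \big($\hatopt(\nu,b)=\tfrac{3-\nu}2\,\opt(1,b)+\tfrac{\nu-1}2\,\opt(3,b)=\tfrac18\big(4+b^3(\nu-1)-b(3+\nu)\big)$\big), but the proof is not finished, and your leading guess --- that the displayed rational function is exactly $B_1(\nu)$, to be obtained by solving the cubic stationarity equation --- is wrong: on $(1,3)$ the third branch is strictly below $B_1(\nu)$ (by Remark~\ref{rem:B_2,B_3} it is only within $2\times10^{-6}$ of it), and the exact $B_1(\nu)$ has an unwieldy algebraic form, which is precisely why the paper avoids it. What the paper does is take the explicit near-optimal value $b=\tfrac1{30}(17-2\nu)$, which lies in $[\tfrac{11}{30},\tfrac12]\subset(0,1)$ for $\nu\in[1,3]$; a direct expansion gives
\begin{equation*}
\frac{17-2\nu}{30}\cdot\frac18\Big(4+\Big(\frac{17-2\nu}{30}\Big)^3(\nu-1)-\frac{17-2\nu}{30}(3+\nu)\Big)
=\frac{(17-2\nu)\big(57187-3253\nu-138\nu^2+212\nu^3-8\nu^4\big)}{6480000},
\end{equation*}
so the third branch is by construction a value of $b\,\hatopt(\nu,b)$ at an admissible $b$, hence $\le B_1(\nu)$, and Theorem~\ref{th:B_1} yields the claim; no separate positivity check or comparison with $B_1(\nu)$ is needed. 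The concrete gap in your write-up is thus the missing identification of $b(\nu)=\tfrac1{30}(17-2\nu)$ and the verification of the polynomial identity above (your reading of the factor $17-2\nu$ as pointing to $\nu=8.5$ misses that it is simply $30\,b(\nu)$).
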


\begin{remark}\label{rem:B_2,B_3}
In particular, $\hat B_2(1)=B_1(1)=0.125$, $\hat B_2(3)=0.087018\ldots=
\frac{0.15072\dots}{\sqrt3}$, and $B_1(3)=0.087019\ldots=\frac{0.15072\dots}{\sqrt3}$ (cf.\ \eqref{eq:IS>B_2}). More generally, the choices $b=1$ for $\nu\in(0,\frac12]$ and $b=\tfrac1{2\nu}$ for $\nu\in[\frac12,1]$ 
in the proof of Proposition~\ref{prop:B_2,B_3} 
are optimal, in the sense that $\hat B_2(\nu)=B_1(\nu)$ for $\nu\in(0,1]$, as indicated in \eqref{eq:IS>B_2,nu<3}. The choice $b=\tfrac1{30}\, (17 - 2 \nu)$ for $\nu\in[1,3]$ in the just mentioned proof is nearly optimal; namely, then 
$\hat B_2(\nu)>B_1(\nu)-2\times10^{-6}$, for all $\nu\in[1,3]$; see details on this remark in Section~\ref{proofs}, right after the proof of Proposition~\ref{prop:B_2,B_3}.  
Of course, one can also rather easily give an exact algebraic expression for $B_1(\nu)$ with $\nu\in[1,3]$; however, that expression (in terms of certain roots of certain polynomials in one variable whose coefficients are polynomials in $\nu$) is complicated and therefore omitted here. 
\end{remark}

Theorem~\ref{th:B_2,tB_2}, Remark~\ref{rem:cheb,aver}, relations \eqref{eq:c_nu,tc_nu}, Proposition~\ref{prop:B_2,B_3}, and 
Re- \break mark~\ref{rem:B_2,B_3} are illustrated in Fig.\ \ref{fig:}. 

\begin{figure}[htbp]
	\centering
		\includegraphics[width=1.00\textwidth]{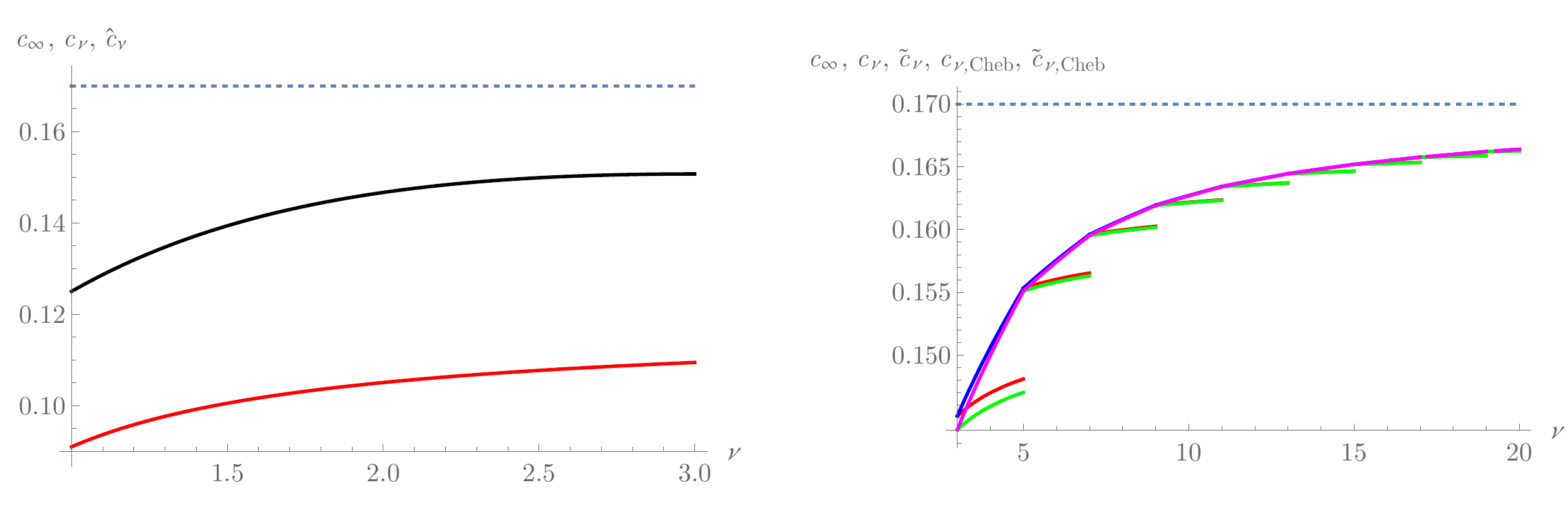}
	\caption{
	Left panel: graphs of $c_\nu$ (red) and $\hat c_\nu:=\sqrt\nu\,\hat B_2(\nu)$ (black) for $\nu\in[1,3]$. 
	Right panel: graphs of $c_\nu$ (red), $\tc_\nu$ (green), \emph{$c_{\nu,\text{Cheb}}$} (blue), and \emph{$\tc_{\nu,\text{Cheb}}$} (magenta) for $\nu\in[3,20]$, where \emph{$c_{\nu,\text{Cheb}}$} and \emph{$\tc_{\nu,\text{Cheb}}$} are obtained from the expressions for $c_\nu$ and $\tc_\nu$ in \eqref{eq:c_nu} and \eqref{eq:tc_nu} by replacing there $C_{i_\nu}$ and $\tC_{i_\nu}=\big(\frac{i_\nu +1}{i_\nu}\big)^{1/8}$ by the ``Chebyshev'' expressions $(1-w_i)C_i+w_i C_{i+1}$ and $(1-w_i)\tC_i+w_i \tC_{i+1}$, with $i=i_\nu$ and $w_i:=\tfrac{\nu-2i-1}2$, as discussed in Remark~\ref{rem:cheb,aver}.  
	The dotted horizontal line in both panels is at the level of $c_\infty=0.16997\dots$.  
	}
	\label{fig:}
\end{figure}

Let us also present the following very simple, but suboptimal, lower bound --- cf.\ e.g.\ \eqref{
eq:IS sim}. 

\begin{proposition}\label{prop:0.125}
If 
$\nu\ge
\frac3{41}$, then 
\begin{equation}\label{eq:0.125}
\inf_{L} \sup_{D}
\RR(L,D)
\ge B_0(m,d)\ge\frac{0.125}{\sqrt\nu}.       
\end{equation} 
\end{proposition}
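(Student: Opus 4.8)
\textbf{Proof proposal for Proposition~\ref{prop:0.125}.}

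The plan is to derive the bound $\frac{0.125}{\sqrt\nu}$ from the already-established chain $\inf_L\sup_D\RR(L,D)\ge B_0(m,d)\ge B_1(\nu)=\sup_{b\in(0,1)}b\,\hatopt(\nu,b)$ of Theorems~\ref{th:B_0} and \ref{th:B_1}, by making one convenient sub-optimal choice of $b$ in the $\sup$ defining $B_1(\nu)$ and bounding the resulting expression from below. Concretely, I would pick $b=b_\nu$ of the natural order $b_\nu\asymp 1/\sqrt\nu$ — the same scaling that produces the constant $z_*$ in Theorem~\ref{th:B_2,tB_2} — say $b_\nu:=z_0/\sqrt\nu$ for a fixed numerical $z_0>0$ to be chosen at the end, restricting to $\nu$ large enough that $b_\nu\in(0,1)$. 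With this choice, $b_\nu\hatopt(\nu,b_\nu)$ depends on $\nu$ only through the arithmetic of which interval $[2i+1,2i+3]$ contains $\nu$; using the explicit piecewise-linear formula \eqref{eq:hatopt:=} for $\hatopt$, together with the lower bounds on $\opt$ promised in Appendix~\ref{app:bayes} (or, more crudely, $\opt(k,b)\ge\tfrac12(1-\erf(b\sqrt{k/2}))$ up to the error terms handled there), one reduces to showing $\tfrac{z_0}{2\sqrt\nu}\,(1-\erf(z_0/\sqrt2))\cdot(\text{explicit correction factor})\ge \tfrac{0.125}{\sqrt\nu}$.

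The second step is purely numerical: since $c_\infty=\max_{z>0}\tfrac z2(1-\erf(z/\sqrt2))=0.16997\dots>0.125$ and the maximand is continuous, there is a whole interval of $z_0$ for which $\tfrac{z_0}2(1-\erf(z_0/\sqrt2))>0.125$ with room to spare; I would choose $z_0$ in that interval (e.g. near $z_*$) so that the leftover margin absorbs the correction factors $C_{i_\nu}/\exp\{-z_0^2/(6\nu)\}$ (which are $\le1$-ish and $\to1$) appearing in Theorem~\ref{th:B_2,tB_2}-type estimates. Tracking exactly how large $\nu$ must be for the correction factor to stay below the margin $0.16997\dots/0.125$ is what pins down the threshold; the claim is that this works down to $\nu\ge 3/41$, i.e. the statement asserts the bound holds for all $\nu$ at least this small. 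For the genuinely small range $\nu\in[3/41,3)$ I would not use the Gaussian asymptotics at all but instead invoke the exact/near-exact piecewise formulas already available: Proposition~\ref{prop:B_2,B_3} gives $\inf_L\sup_D\RR(L,D)\ge\hat B_2(\nu)$ with $\hat B_2(\nu)=\tfrac12(1-\nu)$ for $\nu\le\tfrac12$, $=\tfrac1{8\nu}$ for $\tfrac12\le\nu\le1$, and the explicit quartic-over-$6480000$ expression for $1\le\nu\le3$, and one checks directly that $\sqrt\nu\,\hat B_2(\nu)\ge0.125$ throughout $[3/41,3)$ — indeed $\hat B_2(1)=0.125$ exactly, $\tfrac12(1-\nu)\sqrt\nu\ge0.125$ on a neighborhood of $\nu=1/2$ reaching down past $3/41$, and the quartic piece is monotone-checkable on $[1,3]$.

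The main obstacle is the bookkeeping at the seam: verifying that a \emph{single} constant $z_0$ (or at worst a $\nu$-dependent but uniformly bounded choice) makes $b\,\hatopt(\nu,b)\ge 0.125/\sqrt\nu$ simultaneously for all $\nu\ge3$, given that $\hatopt(\nu,\cdot)$ is only the linear interpolant of $\opt$ at the odd integers $0,1,3,5,\dots$ and hence $\hatopt(\nu,b)$ can be strictly smaller than $\opt(\nu,b)$ when $\nu$ is not an odd integer. One must show this interpolation loss, combined with the $C_{i_\nu}$ correction, never eats the margin $0.16997\dots-0.125$; this is exactly the content of the estimates $c_\nu<c_\infty$ and $c_\nu\to c_\infty$ from \eqref{eq:c_nu}–\eqref{eq:c_nu,tc_nu}, so in practice the cleanest route is simply to quote Theorem~\ref{th:B_2,tB_2} for $\nu\ge3$ — obtaining $\inf_L\sup_D\RR(L,D)\ge B_2(\nu)=c_\nu/\sqrt\nu$ with $c_\nu\ge c_3=0.15072\dots>0.125$ and $c_\nu$ increasing toward $c_\infty$ — and to quote Proposition~\ref{prop:B_2,B_3} for $\nu\in[3/41,3)$, leaving only a finite, elementary verification that $\sqrt\nu\,\hat B_2(\nu)\ge 0.125$ on $[3/41,3)$. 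I expect the whole argument to compress to a few lines once those two earlier results are in hand.
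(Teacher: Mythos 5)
Your final-paragraph route is correct, but it is genuinely different from the paper's proof. The paper does not go through $B_2(\nu)$ or $\hat B_2(\nu)$ at all: it works directly with $B_0(m,d)$, using the inequality $s_k(b)\le b\sqrt k$ (from \eqref{eq:s_k=}, Lemma~\ref{lem:C_i}, $S_q(b)\le b$, and \eqref{eq:odd-even}), so that $\opt(k,b)\ge\tfrac12(1-b\sqrt k)$, then Jensen's inequality $\E\sqrt N\le\sqrt{\E N}=\sqrt\nu$, and finally the exact choice $b=\tfrac1{2\sqrt\nu}$, which gives $B_0(m,d)\ge\tfrac b2(1-b\sqrt\nu)=\tfrac{1}{8\sqrt\nu}$ for all $\nu\ge\tfrac14$ in one stroke --- the constant $0.125=\tfrac18$ comes out exactly, with no asymptotics or numerics; the leftover range $\tfrac3{41}\le\nu<\tfrac14$ is handled by $B_0\ge B_1(\nu)=\tfrac12(1-\nu)>\tfrac{0.125}{\sqrt\nu}$. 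Your plan instead quotes the heavier results $B_2(\nu)=c_\nu/\sqrt\nu$ for $\nu\ge3$ and $\hat B_2(\nu)$ for $\nu\in[\tfrac3{41},3)$, which works but requires extra checks you only gesture at: (i) you need $c_\nu\ge c_3$ for all $\nu\ge3$, which is not stated in the paper but follows in one line since $C_{i_\nu}$ is nonincreasing (Lemma~\ref{lem:C_i}) and $e^{z_*^2/(6\nu)}$ decreases in $\nu$ in \eqref{eq:c_nu}; (ii) your value $c_3=0.15072\dots$ is actually $\sqrt3\,B_1(3)$ from Remark~\ref{rem:B_2,B_3}, not $c_3$ (which is about $0.145$) --- harmless, since $c_3>0.125$ still holds; (iii) the verification $\sqrt\nu\,\hat B_2(\nu)\ge0.125$ on $[\tfrac3{41},3)$ includes a quartic inequality on $[1,3)$ and is razor-thin at $\nu=\tfrac3{41}$ (about $0.1254$ vs.\ $0.125$), so it must be done explicitly rather than waved through. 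Net comparison: your route recycles already-proved machinery at the cost of case analysis, monotonicity of $c_\nu$, and finite numerical verification; the paper's route is more elementary and self-contained, and explains \emph{why} the constant is exactly $\tfrac18$ (it is the maximum of $\tfrac b2(1-b\sqrt\nu)$ over $b$). The speculative first two paragraphs of your proposal (choosing $b=z_0/\sqrt\nu$ and absorbing ``correction factors'') would need the same bookkeeping you defer to Theorem~\ref{th:B_2,tB_2} anyway, so the concrete last paragraph is the version to keep.
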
 

Note that the restriction $\nu\ge\frac3{41}$ in Proposition~\ref{prop:0.125} cannot be dropped, and it is in fact rather close to necessity. Indeed, in view of Remark~\ref{rem:upper}, the lower bound $\frac{0.125}{\sqrt\nu}$ in \eqref{eq:0.125} cannot hold for $\nu<\frac1{16}=\frac3{48}$. 
The lower bound $\frac{0.125}{\sqrt\nu}$ in \eqref{eq:0.125} was obtained by different methods in \citet{audibert2009} (under the condition $\nu\ge\frac14$, which was omitted from the paper but stated in a corrigendum). 

In conclusion of this section,
we summarize
the asymptotic behavior of the lower bounds 
$B_0(m,d), B_1(\nu),B_2(\nu),\tB_2(\nu)$ on the minimax 
EER, as well as the asymptotic behavior of the minimax 
EER itself.  

\begin{theorem}\label{th:asymp} 
\begin{equation}\label{eq:B_0 asymp}
\frac{c_\infty}{\sqrt\nu}\sim 
\inf_{L} \sup_{D}
\RR(L,D)
\ge B_0(m,d)\ge B_1(\nu)\ge B_2(\nu)\ge\tB_2(\nu)\sim\frac{c_\infty}{\sqrt\nu}      
\end{equation}
as $m$ and $d$ vary in any way such that 
$\nu=m/d\to\infty$.  
\end{theorem}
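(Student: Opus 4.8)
The plan is to prove Theorem~\ref{th:asymp} by assembling the chain of inequalities, each link of which has already been established earlier in the paper, and then separately pinning down the two asymptotic equivalences at the ends of the chain. First I would observe that the string of inequalities
\[
	\inf_{L} \sup_{D}\RR(L,D)\ge B_0(m,d)\ge B_1(\nu)\ge B_2(\nu)\ge\tB_2(\nu)
\]
is already in place: the first inequality and $B_0\ge B_1$ come from Theorems~\ref{th:B_0} and \ref{th:B_1}, and $B_1(\nu)\ge B_2(\nu)\ge\tB_2(\nu)$ is exactly \eqref{eq:IS>B_2} together with \eqref{eq:tc_nu} of Theorem~\ref{th:B_2,tB_2} (valid once $\nu\ge3$, which holds eventually as $\nu\to\infty$). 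So the only genuine content is: (a) the upper end, $\inf_L\sup_D\RR(L,D)\sim c_\infty/\sqrt\nu$, and (b) the lower end, $\tB_2(\nu)\sim c_\infty/\sqrt\nu$. By the squeeze principle, once (a) and (b) are proved, every quantity sandwiched between them — namely $B_0(m,d)$, $B_1(\nu)$, $B_2(\nu)$ — is also asymptotic to $c_\infty/\sqrt\nu$, which is the full assertion of \eqref{eq:B_0 asymp}.

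For the lower end (b), I would argue directly from the explicit formula \eqref{eq:tc_nu}: $\tB_2(\nu)=\tc_\nu/\sqrt\nu$ with $\tc_\nu=\frac{z_*}2\big(1-(\tfrac{i_\nu+1}{i_\nu})^{1/8}\,\erf(z_*/\sqrt2)\,e^{z_*^2/(6\nu)}\big)$ and $i_\nu=\lfloor(\nu-1)/2\rfloor$. As $\nu\to\infty$ we have $i_\nu\to\infty$, so $(\tfrac{i_\nu+1}{i_\nu})^{1/8}\to1$ and $e^{z_*^2/(6\nu)}\to1$; hence $\tc_\nu\to\frac{z_*}2\big(1-\erf(z_*/\sqrt2)\big)=c_\infty$, using the definition of $z_*$ and $c_\infty$ in \eqref{eq:z*} and \eqref{eq:clb->c_infty}. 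This is already noted in \eqref{eq:c_nu,tc_nu}, so I would simply cite it. Multiplying by $1/\sqrt\nu$ gives $\tB_2(\nu)\sim c_\infty/\sqrt\nu$.

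For the upper end (a), I would invoke Theorem~\ref{th:negl}: its display \eqref{eq:IS sim} states precisely that $\inf_{L\in\Lrand}\sup_D\RR_m(L,D)\le\inf_{L\in\L}\sup_D\RR_m(L,D)\le\sup_D\RR_m(L_{m,d;\erm},D)$ and that both the leftmost and rightmost terms are asymptotic to $c_\infty/\sqrt{m/d}$ as $m/d\to\infty$; a squeeze then gives $\inf_{L}\sup_D\RR(L,D)\sim c_\infty/\sqrt\nu$ regardless of whether $\inf_L$ means $\inf_{L\in\L}$ or $\inf_{L\in\Lrand}$ (the remark after Theorem~\ref{th:B_1} that $\inf_L$ may be read either way then makes the statement unambiguous). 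Thus the upper asymptotic equivalence is a direct consequence of an earlier theorem.

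Finally I would conclude: combining the chain of inequalities with the two endpoint equivalences (a) and (b), all of $\inf_L\sup_D\RR(L,D)$, $B_0(m,d)$, $B_1(\nu)$, $B_2(\nu)$, $\tB_2(\nu)$ lie between two quantities each $\sim c_\infty/\sqrt\nu$, so each is itself $\sim c_\infty/\sqrt\nu$, establishing \eqref{eq:B_0 asymp}. I do not expect a serious obstacle here, since Theorem~\ref{th:asymp} is essentially a packaging of Theorems~\ref{th:B_0}, \ref{th:B_1}, \ref{th:B_2,tB_2} and \ref{th:negl}; the only mildly delicate point is making sure the tail condition $\nu\ge3$ required for the bound $B_2(\nu)\ge\tB_2(\nu)$ in \eqref{eq:tc_nu} is harmless, which it is because the statement is purely asymptotic in $\nu\to\infty$.
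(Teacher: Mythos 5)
Your proposal is correct and follows essentially the same route as the paper: the inequality chain and the lower-end equivalence are handled exactly as in the paper's proof (via \eqref{eq:IS>}, \eqref{eq:IS>hatopt}, \eqref{eq:IS>B_2}, \eqref{eq:tc_nu}, and \eqref{eq:c_nu,tc_nu}), while the upper-end equivalence you import from \eqref{eq:IS sim} is precisely the substantive content of the paper's argument, which proves Theorems~\ref{th:negl} and \ref{th:asymp} jointly by establishing $B(m,d)\lesssim c_\infty/\sqrt\nu$ together with \eqref{eq:ties}. Citing Theorem~\ref{th:negl} involves no circularity, since that joint proof derives both theorems from these estimates rather than from each other, so your squeeze-based packaging is a faithful reorganization of the paper's own proof.
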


Thus, in view of \eqref{eq:clb}, the limit relation in \eqref{eq:clb->c_infty} holds and, moreover, all the lower bounds 
$B_0(m,d)$, $B_1(\nu)$, $B_2(\nu)$, $\tB_2(\nu)$ on the minimax 
EER are asymptotically equivalent to the minimax 
EER itself whenever $\nu=m/d\to\infty$. 
Clearly, Theorem~\ref{th:asymp} complements Theorem~\ref{th:negl}. 

\section{Proofs}\label{proofs} 

In this section, we shall prove (or provide details for) Theorems~\ref{th:duality} and \ref{th:average}, Proposition~\ref{lem:convex minor}, Theorem~\ref{th:B_2,tB_2}, 
Proposition~\ref{prop:B_2,B_3}, Remark~\ref{rem:B_2,B_3}, Proposition~\ref{prop:0.125},  Theorems~\ref{th:negl} and \ref{th:asymp} (together), and finally Theorem~\ref{th:P low}, in this order.  

\begin{proof}[Proof of Theorem~\ref{th:duality}]
The first equality in \eqref{eq:IS<} can be obtained using the von Neumann minimax duality theorem for bilinear functions on the product of simplexes \cite{von-Neumann} (plus a certain symmetrization argument); more general minimax duality theorems, for convex-concave-like functions, were given in \cite{sion}, and in \cite{pin-games-transl} a necessary and sufficient condition for the minimax duality for such functions was given. 

However, here we are going to offer a more direct and explicit argument, using the explicit form of the to-be-proved
-minimax decision rule $L^*_\erm$, as defined in \eqref{eq:L*_erm}. 

To gain some insight, let us begin with the simple case $d=1$. In that case, $\X=[d]=[1]=\{1\}$, $p=(1)$ (that is, $p_1=1$) and $\g=(b)$ with $b:=\g_1\in[-1,1]$; also, in the just mentioned definition \eqref{eq:L*_erm} of $L^*_\erm$, the terms $x$, $v_x$, $i_x$, and $n_x$ simplify, respectively, to $1$, $v_1=\sum_{i=1}^m y_i$, $1$, and $n_1=m$, in accordance with the definitions of $v_x$, $i_x$, and $n_x$ in \eqref{eq:v_x} and \eqref{eq:n_x,i_x}. 
Here we also have $X_i=1$ for all $i$ and hence, in view of \eqref{eq:Z_m^p,g} 
and \eqref{eq:Pbias}, $Z_m^{p,\g}
$ equals 
\begin{equation}\label{eq:Z_m^b}
	Z_m^{b}:=\big((1,Y_1^{b}),\dots,(1,Y_m^{b})\big)
\end{equation}
in distribution, where 
the $Y_i^{b}$'s are 
as in the statement of Theorem~\ref{th:duality}. 

A standard
argument 
(see e.g.\ \cite[\S1.8]{ferguson62_book})  
shows that $L^*_{1,\erm}$ 
is an optimal 
Bayesian 
decision rule, in the sense of being 
a minimizer 
of the 
average \break 
$\tfrac12\,\tsuml_{y\in\Y}\P\big(L(Z_m^{|b|y},U)(1)\ne y\big)$ 
of the types I and II error probabilities over all $L\in\Lrandm1$, that is, over all randomized learning algorithms $L$ for $d=1$. 
By symmetry, without loss of generality $b\ge0$, and so, $b\in[0,1]$. 
Then for the corresponding Bayes risk, 
given by the expression $\tfrac12 \,\tsuml_{y\in\Y}\P\big(L^*_{1,\erm}(Z_m^{by},U)(1)\ne y\big)$, 
for $m\ge1$ one has 
\begin{equation}
	\label{eq:def-opt}
\begin{aligned}	
  2
  \cdot
  \tfrac12 &\,\tsuml_{y\in\Y}\P\big(L(Z_m^{|b|y},U)(1)\ne y\big) \\ 
  \ge2
\cdot
  \tfrac12 &\,\tsuml_{y\in\Y}\P\big(L^*_{1,\erm}(Z_m^{by},U)(1)\ne y\big) \\ 
=&\P(V^b_m<0)+\P(V^b_m=0,Y^b_1<0)   
+\P(V^{-b}_m>0)+\P(V^{-b}_m=0,Y^{-b}_1>0) \\  
=&\P(V^b_m<0)+\P(V^b_m=0) 
+\P(V^{-b}_m>0) 
=\P(V^b_m\le0)
+\P(V^{-b}_m>0) \\ 
=&1-\big(\P(V^b_m>0)
-\P(V^{-b}_m>0)\big)
=2\opt(m,b),   
\end{aligned} 
\end{equation}
in accordance with 
\eqref{eq:V^b_k} \big(implying, in particular, that $(Y^{-b}_1,V^{-b}_m)$ equals \break 
$(-Y^b_1,-V^b_m)$ in distribution\big), 
\eqref{eq:opt:=}, \eqref{eq:s_k:=}, 
and the assumption $b\in[0,1]$ 
\big(which implies $\P(V^b_m>0)\ge\P(V^{-b}_m>0)$, since $Y_i^{b}$ is stochastically increasing in $b$\big). 
Thus, the Bayes risk $\tfrac12 \,\tsuml_{y\in\Y}\P\big(L^*_{1,\erm}(Z_m^{by},U)(1)\ne y\big)$ equals $\opt(m,b)$ for $b\in[0,1]$  
and $m\ge1$. This conclusion also trivially holds for $m=0$ 
(in which case $\opt(m,b)=\frac12$). 

Moreover, for 
each $m\in\intr0\infty$, the Bayes rule $L^*_{1,\erm}$ is a risk equalizer, in the sense that 
\begin{equation}\label{eq:equaliz}
	\text{for $b\in[0,1]$ and $y\in\{-1,1\}$,}\ \P\big(L^*_{1,\erm}(Z_m^{by},U)(1)\ne y\big)=\opt(m,b),  
\end{equation}
which does not depend on the choice of $y$; this conclusion follows because  
(i) \break
$(L^*_{1,\erm}\big((Z_m^{b})^-,-U\big)=-L^*_{1,\erm}(Z_m^{b},U)$, where $(Z_m^{b})^-:=
\big((1,-Y_1^{b}),\dots,\break 
(1,-Y_m^{b})\big)$,  
and (ii) the distribution of 
$(Y_1^{-b},\dots,Y_k^{-b},
-U)$ is the same as that of $-(Y_1^{b},\dots,Y_k^{b},U)$.

Let us now proceed to the general case of any natural $d$, which
in a sense reduces
to the case $d=1$. 
Take any $m\in\intr0\infty$, any randomized learning algorithm $L\colon(\X\times\Y)^m\times[-1,1]\to\H$, any $p\in[0,1]^{[d]}$ such that $\sum_{x=1}^d p_x=1$, and any $\g\in[-1,1]^{[d]}$. 
For each $x\in[d]$, introduce the random set 
\begin{equation}\label{eq:JJ}
	\J^p_x:=\{i\in[m]\colon X_i^p=x\}
\end{equation}
and its cardinality 
\begin{equation}\label{eq:N_x}
	N^p_x:=\card\J^p_x. 
\end{equation}
%
Then, by 
\eqref{eq:E De_emp},
\begin{align}
\RR_m(L;p,\g)
	&=\sum_{x=1}^d p_x|\g_x| \sum_{k=0}^m\P\big(L\big(Z_m^{p,\g},U\big)(x)\ne h_\g(x),N^p_x=k\big). \label{eq:E De}  
\end{align}
Next, take any $x\in[d]$ and any $k=0,\dots,m$. Then  
\begin{multline}\label{eq:P(..,N_x)} 
	\P\big(L\big(Z_m^{p,\g},U\big)(x)\ne h_\g(x),N^p_x=k\big) \\ 
	=\sum_{J\in\binom{[m]}k}\P\big(L\big(Z_m^{p,\g},U\big)(x)\ne h_\g(x),\J^p_x=J\big), 
\end{multline} 
where $\binom{[m]}k:=\{J\subseteq[m]\colon\card J=k\}$. 

Further, take any set $J\in\binom{[m]}k$. 
Writing $J$ as $\{i_1,\dots,i_k\}$ with $i_1<\dots<i_k$, let $X_J^p:=(X_{i_1}^p,\dots,X_{i_k}^p)$, and similarly define $X_{J^\cc}^p$, $Y_J^{p,\g}$, and $Y_{J^\cc}^{p,\g}$, where $J^\cc:=[m]\setminus J$. 
Let also $Z_{J^\cc}^{p,\g}:=(X_{J^\cc}^p,Y_{J^\cc}^{p,\g})$. 
For any $x\in\X$, let $x^J:=(x,\dots,x)\in\X^k$ and $\Z_x:=(\X\setminus\{x\})\times\Y$.  
Then, in view of \eqref{eq:h_ga}, 
\begin{multline}\label{eq:P(..,J)}
	\P\big(L\big(Z_m^{p,\g},U\big)(x)\ne h_\g(x),\J^p_x=J\big) \\ 
	=\!\!\sum_{z\in\Z_x^{m-k}}
	\P\big(L\big(Z_m^{p,\g},U\big)(x)\ne \sign\g_x|X_J^p=x^J,Z_{J^\cc}^{p,\g}=z\big) 
	\P\big(X_J^p=x^J,Z_{J^\cc}^{p,\g}=z\big). 
\end{multline}
\big(Here and in what follows, to simplify the writing, we neglect the possibility that $\P(X_J^p=x^J,Z_{J^\cc}^{p,\g}=z)$ may equal $0$. Of course, in such cases we may let the corresponding conditional probabilities in \eqref{eq:P(..,J)} take whatever values 
deemed most suitable for us at any given point.\big)

By \eqref{eq:E De}, \eqref{eq:P(..,N_x)}, and \eqref{eq:P(..,J)}, 
\begin{multline}\label{eq:RR=}
	\RR_m(L;p,\g)  
=\sum_{x,k,J,z}p_x|\g_x|\,
	\P\big(L\big(Z_m^{p,\g},U\big)(x)\ne \sign\g_x|X_J^p=x^J,Z_{J^\cc}^{p,\g}=z\big) \\ 
	\times\P\big(X_J^p=x^J,Z_{J^\cc}^{p,\g}=z\big),    
\end{multline}
where $\sum\limits_{x,k,J,z}:=\sum\limits_{x=1}^d\, \sum\limits_{k=0}^m\, \sum\limits_{J\in\binom{[m]}k}\,   
\sum\limits_{z\in\Z_x^{m-k}}$,  
and, moreover, the sum 
\begin{equation}\label{eq:=P(N=k)}
	\sum_{J,z}\P\big(X_J^p=x^J,Z_{J^\cc}^{p,\g}=z\big)=\P(N^p_x=k)  
\end{equation}
\big(where $\sum\limits_{J,z}:=\sum\limits_{J\in\binom{[m]}k} \,   
\sum\limits_{z\in\Z_x^{m-k}}$\big) does not depend on $\g$. To quickly see why identity \eqref{eq:=P(N=k)} holds, look back at \eqref{eq:P(..,N_x)} and \eqref{eq:P(..,J)}, with the event 
$\{L\big(Z_m^{p,\g},U\big)(x)\ne h_\g(x)\}$ replaced there by an event of probability $1$. 

Since $(X_1^p,Y_1^{p,\g}),\dots,(X_m^p,Y_m^{p,\g}),U$ are independent, 
for any $L\in\Lrand$, $x\in[d]$, $k\in\intr0m$, $J\in\binom{[m]}k$, 
$z\in\Z_x^{m-k}$, and $p\in[0,1]^{[d]}$ such that $\sum_{x=1}^d p_x=1$, the conditional probability 
$\P\big(L\big(Z_m^{p,\g},U\big)(x)\ne \sign\g_x|X_J^p=x^J,Z_{J^\cc}^{p,\g}=z\big)$ in \eqref{eq:P(..,J)} depends on $\g$ only through $\g_x$, whereas the unconditional probability 
$\P\big(X_J^p=x^J,Z_{J^\cc}^{p,\g}=z\big)$ in \eqref{eq:P(..,J)} depends on $\g$ only through 
$\g_{\setminus x}:=\g\big|_{\X\setminus\{x\}}$
--- the restriction of the function $\g$ to the subset $\X\setminus\{x\}$ of the set $\X$. 
So, introducing the averaging operators  
\begin{equation*}
	\A_\si:=\frac1{2^d}\sum\limits_{\si\in\{-1,1\}^{[d]}}, \quad 
	\A_{\si_{\setminus x}}:=\frac1{2^{d-1}}\sum\limits_{\si_{\setminus x}\in\{-1,1\}^{\X\setminus\{x\}} }, \quad 
	\A_{\si_x}:=\frac12\sum\limits_{\si_x\in\{-1,1\}}, \quad 
\end{equation*}
in view of 
\eqref{eq:RR=} one has  
\begin{multline}\label{eq:ave=}
	\A_\si\RR_m(L;p,|\g|\si)  \\ 
=\sum_{x,k,J,
z}p_x|\g_x|
\A_{\si_x}\P\big(L\big(Z_m^{p,
|\g|\si},U\big)(x)\ne \si_x|X_J^p=x^J,Z_{J^\cc}^{p,|\g|\si}=z\big) \\
\times\A_{\si_{\setminus x}}\P\big(X_J^p=x^J,Z_{J^\cc}^{p,
|\g|\si}=z\big). 
\end{multline}  

Recall that the random pairs $(X_1^p,Y_1^{p,\g}),\dots,(X_m^p,Y_m^{p,\g})$ are independent copies of the random pair $(X^p,Y^{p,\g})=(X^D,Y^D)$ satisfying condition \eqref{eq:Pbias}, and the r.v.\ $U$ is independent of these pairs. So, for any 
$x\in[d]$, $k\in\intr0m$, $J\in\binom{[m]}k$, 
$z\in\Z_x^{m-k}$, 
and $p\in[0,1]^{[d]}$ such that $\sum_{x=1}^d p_x=1$, 
\label{cond}
the conditional distribution of $(Y_J^{p,\g},
U)$ given $X_J^p=x^J$ 
and $Z_{J^\cc}^{p,\g}=z$ is the same as 
the distribution of $(Y_1^{\g_x},\dots,Y_k^{\g_x},U)$, where 
the $Y_i^{b}$'s are again as in the statement of Theorem~\ref{th:duality}.  
Therefore, in view of \eqref{eq:def-opt}, 
for each $x\in[d]$,  
each $z=\big((x_{k+1},y_{k+1}),\dots,(x_m,y_m)\big)\in\Z_x^{m-k}$, and $J=[k]$, 
\begin{multline}\label{eq:L,cond}
\A_{\si_x}\P\big(L\big(Z_
k^{p,|\g|\si},U\big)(x)\ne \si_x|X_J^p=x^J,Z_{J^\cc}^{p,|\g|\si}=z\big) \\
=\A_{\si_x}\P\big(L_x\big(Z_
k^{|\g_x|\si_x},U\big)(1)\ne \si_x\big) 
\ge\opt(k,|\g_x|),  
\end{multline}
where $Z_
k^{b}$ is as 
in \eqref{eq:Z_m^b} and 
$L_x\colon([1]\times\Y)^
k\times[-1,1]\to\{-1,1\}^{[1]}$ is the random\-ized learning algorithm (for the case $d=1$) defined by the formula \break 
$L_x\big(((1,y_1),\dots,(1,y_
k)),u\big):=h|_{\{x\}}$, the restriction of the function $h$ to the singleton set $\{x\}$, where 
$h:=h_{
w;u}:=L\big(
w,u\big)\in\{-1,1\}^{[d]}$, 
$w:=\big((x,y_1),\dots,(x,y_k),\break 
(x_{k+1},y_{k+1}),\dots,(x_m,y_m)\big)$, and $u\in[-1,1]$. 
Clearly then, \eqref{eq:L,cond} holds for any $J\in\binom{[m]}k$. 

Similarly, 
but now using \eqref{eq:equaliz} instead of \eqref{eq:def-opt}, 
we have 
\begin{equation}\label{eq:L^*,cond}
\P\big(L^*_{m,\erm}\big(Z_m^{p,|\g|\si},U\big)(x)\ne \si_x|X_J^p=x^J,Z_{J^\cc}^{p,|\g|\si}=z\big)
=\opt(k,|\g_x|)   
\end{equation}
for each $x\in[d]$ \emph{and each} $\si_x\in\{-1,1\}$, and hence the average of the left-hand side (l.h.s.) of \eqref{eq:L^*,cond} over $\si_x\in\{-1,1\}$ equals $\opt(k,|\g_x|)$ as well. 
%
%
Note also that $
\P\big(X_J^p=x^J,Z_{J^\cc}^{p,|\g|\si}=z\big)$ does not depend on $L$. So, collecting identity \eqref{eq:ave=}, its counterpart with $L^*_{m,\erm}$ in place of $L$, \eqref{eq:L,cond}, and identity \eqref{eq:L^*,cond} with its l.h.s.\ replaced by the average of that l.h.s.\ over $\si_x\in\{-1,1\}$, we see 
that 
\begin{equation}\label{eq:ave>ave}
	\A_\si\RR_m(L;p,|\g|\si)
	\ge \A_\si\RR_m(L^*_{m,\erm};p,|\g|\si). 
\end{equation}
Moreover, 
by \eqref{eq:RR=} with $L^*_{m,\erm}$ in place of $L$, \eqref{eq:L^*,cond} with 
$\si=\sign\g$, and  \eqref{eq:=P(N=k)}, 
\begin{equation}\label{eq:RR*=}
\begin{aligned}
\RR_m(L^*_{m,\erm};p,\g)  
=&\sum_{x,k,J,z}p_x|\g_x| \opt(k,|\g_x|)\,
\P\big(X_J^p=x^J,Z_{J^\cc}^{p,\g}=z\big)
 \\ 
	=&\sum_{x=1}^d p_x|\g_x| \sum_{k=0}^m\opt(k,|\g_x|) \sum_{J,z}
	\P\big(X_J^p=x^J,Z_{J^\cc}^{p,\g}=z\big)
	\\	
		=&\sum_{x=1}^d p_x|\g_x| \sum_{k=0}^m\opt(k,|\g_x|) \P(N^p_x=k) 
	\\	   
=&\sum_{x=1}^d p_x|\g_x| \E\opt(N^p_x,|\g_x|), 
\end{aligned}   
\end{equation}
which proves \eqref{eq:RR*} 
and the second equality in \eqref{eq:IS<} (here one may recall   
the definition of $\RR_m(L;p,\g)$ in \eqref{eq:E De_emp}).  
We also see that $\RR_m(L^*_{m,\erm};p,\g)$ depends on $\g$ only through $|\g|$. 
This and \eqref{eq:ave>ave} yield  
\begin{multline*}
\max_{\si\in\{-1,1\}^d}\RR_m(L^*_{m,\erm};p,|\g|\si)
	=\A_\si\RR_m(L^*_{m,\erm};p,|\g|\si) \\ 
=\sum_{x=1}^d p_x|\g_x| \E\opt(N^p_x,|\g_x|)	\le\A_\si\RR_m(L;p,|\g|\si)  
	\le\max_{\si\in\{-1,1\}^d}\RR_m(L;p,|\g|\si). 
\end{multline*}
Taking now $\sup_{p,\g}$, 
we see that 
$\sup_D\RR_m(L^*_{m,\erm},D)\le\sup_D\RR_m(L,D)$ for all $L$, which 
proves the first equality in \eqref{eq:IS<}. This 
completes the proof 
of Theorem~\ref{th:duality}. 
\end{proof} 

\begin{proof}[Proof of Theorem~\ref{th:average}] 
This theorem follows immediately from \eqref{
eq:ave>ave} and \eqref{eq:RR*=} by 
taking there $p_x=\frac1d$ for all $x\in[d]$ and any $\g\in\{-b,b\}^{[d]}$. 
\end{proof}

\begin{proof}[Proof of Proposition~\ref{lem:convex minor}] 
By \eqref{eq:opt:=}--\eqref{eq:s_k:=}
, $\opt(k,b)=1/2$ for all $k=0,1,\dots$ if $b=0$. 
Suppose now that $b\in(0,1]$. 
\label{LLN}
Letting $k\to\infty$ and using the law of large numbers, in view of \eqref{eq:V^b_k} we have $\frac1k\,V^b_k\to\E Y^b_1=b>0$ in probability. 
So, $\P(V_k^b>0)\to1$; 
similarly, $\P(V_k^{-b}>0)\to0$. 
Recalling \eqref{eq:opt:=}--\eqref{eq:s_k:=} again, we see that 
$\opt(k,b)\to0$. 
So, by Lemma~\ref{lem:concave} in Appendix~\ref{app:bayes}, $\opt(k,b)$ is convex and 
nonincreasing in $k\in\{0,1,3,5,\dots\}$, for each $b\in[0,1]$. 
It remains to use relations \eqref{eq:s_0,s_1,s_3} and \eqref{eq:odd-even} in Appendix~\ref{app:bayes} and, again, \eqref{
eq:opt:=}.  
\end{proof}

\begin{proof}[Proof of Theorem~\ref{th:B_2,tB_2}]
The first inequality in \eqref{eq:IS>B_2} comes from \eqref{eq:IS>hatopt}. The second inequality in \eqref{eq:IS>B_2} follows immediately from Lemma~\ref{
lem:hatopt} with $\ka=\nu$ and $b=z_*/\sqrt\nu$. 
The inequality in \eqref{eq:c_nu} holds because, by Lemma~\ref{lem:C_i}, $C_i>1$ and, by 
the paragraph containing \eqref{eq:z*}, $c_\infty=\tfrac{z_*}2\,\big(1-\erf(z_*/\sqrt2)\big)$. 
The last equality in \eqref{eq:c_nu} follows by \eqref{eq:C_i}. 
The inequalities in \eqref{eq:tc_nu} follow immediately from \eqref{eq:IS>B_2}, \eqref{eq:c_nu}, and \eqref{eq:C_i<}.  
\end{proof}

\begin{proof}[Proof of Proposition~\ref{prop:B_2,B_3}]
By \eqref{eq:hatopt:=}, \eqref{
eq:opt:=}, and \eqref{eq:s_0,s_1,s_3}, 
\begin{equation*}
	\hatopt(\nu)=
		\left\{
	\begin{alignedat}{2}
	&
	\tfrac12\,(1-\nu b) &&\text{\ \;if\ \;}0<\nu\le1, \\  
	&\tfrac18\, \big(4 + b^3 (\nu-1 ) - b (3 + \nu)\big) &&\text{\ \;if\ \;}1\le\nu\le3.  
	\end{alignedat}
	\right.
\end{equation*}
Recalling now \eqref{eq:IS>hatopt} and using the values 
$b=1$ for $\nu\in(0,\frac12]$, $b=\frac1{2\nu}$ for $\nu\in[\frac12,1]$, and 
and $b=\tfrac1{30}\, (17 - 2 \nu)$ for $\nu\in[1,3]$, one obtains \eqref{eq:IS>B_2,nu<3}.  
\end{proof}

\begin{proof}[Details on Remark~\ref{rem:B_2,B_3}]
The inequality  
$\hat B_2(\nu)>B_1(\nu)-2\times10^{-6}$ for all $\nu\in[1,3]$, mentioned in that remark, can be verified, e.g., by issuing the Mathematica command 
\verb!Reduce[b 1/8 (4 + b^3 (nu - 1) - b (3 + nu)) - hB2[nu]! \break
\verb!>= 2 10^(-6) && 1 <= nu <= 3 && 0 <= b <= 1]!, 
where \verb!hB2[nu]! 
stands for $\hat B_2(\nu)$. 
This command then outputs 
\verb!False!, which means that indeed $B_1(\nu)=\max\limits_{0\le b\le1}b\,\hatopt(\nu)=\max\limits_{0\le b\le1} b\,\tfrac18\, (4 + b^3 (\nu-1 ) - b (3 + \nu))<\hat B_2(\nu) 
+2\times10^{-6}$. 
\end{proof}

\begin{proof}[Proof of Proposition~\ref{prop:0.125}]
The first inequality in \eqref{eq:0.125} holds by \eqref{eq:IS>}. Take now any $b\in(0,1]$. 
From \eqref{eq:s_k=}, Lemma~\ref{lem:C_i}, and inequality $S_q(b)\le b$, it follows that 
\begin{equation}\label{eq:<2b sqrt k}
	s_k(b)\le b\sqrt k
\end{equation}
for odd natural $k$. By \eqref{eq:odd-even}, inequality \eqref{eq:<2b sqrt k} holds for even natural $k$ as well, and it trivially holds for $k=0$. 
Using now the definition of $B_0(m,d)$ in \eqref{eq:IS>} together with \eqref{
eq:opt:=}, \eqref{eq:<2b sqrt k}, and Jensen's inequality, 
noticing that $\frac1{2\sqrt\nu}\in(0,1]$ if $\nu\ge\frac14$, and substituting $\frac1{2\sqrt\nu}$ for $b$, one has 
\begin{multline*}
	B_0(m,d)\ge b\,\E\opt(N,b)\ge\tfrac b2\,\big(1-b\E\sqrt N\big)
	\ge\tfrac b2\,\big(1-b\sqrt{\E N}\big) \\ 
	=\tfrac b2\,\big(1-b\sqrt{\nu}\big)
	=\frac{0.125}{\sqrt\nu},  
\end{multline*}
in the case when $\nu\ge\frac14$. 

It remains to consider the case when $\frac14>\nu\ge\frac3{41}$. Then, by \eqref{eq:IS>hatopt} and \eqref{eq:IS>B_2,nu<3}, 
\begin{equation*}
	B_0(m,d)\ge B_1(\nu)=\tfrac12\,(1-\nu)>\frac{0.125}{\sqrt\nu},   
\end{equation*}
which completes the proof of Proposition~\ref{prop:0.125}. 
\end{proof}

\begin{proof}[Proof of Theorems~\ref{th:negl} and \ref{th:asymp}]
The two inequalities in \eqref{eq:IS sim} are trivial. 
The first, second, third, and fourth inequalities in \eqref{eq:B_0 asymp} were already established as the inequalities in \eqref{eq:IS>}, the second inequality in \eqref{eq:IS>hatopt}, the second inequality in \eqref{eq:IS>B_2}, and the first inequality in \eqref{eq:tc_nu}, respectively. 
The second asymptotic equivalence in \eqref{eq:B_0 asymp} follows immediately from 
\eqref{eq:tc_nu} and \eqref{eq:c_nu,tc_nu}. 

So, in view of \eqref{eq:IS<}, it suffices to show that \eqref{eq:ties} holds and 
\begin{equation}\label{eq:B sim}
	B(m,d)\overset{\text{?}}\lesssim\frac{c_\infty}{\sqrt{\nu}}, 
\end{equation}
where, as usual, $A\lesssim B$ means $\limsup A/B\le1$. 

{
\setlength{\fboxsep}{5pt}
\noindent
\framebox{\parbox{.95\textwidth}{
In this proof, 
\label{framed}
all the limit relations are stated for $\nu=
m/d\to\infty$ and all \break 
the other relations are stated under the condition that $\nu$ is large enough. 
}
}
}

By \eqref{eq:E De_emp}, \eqref{eq:L_erm}, 
\eqref{eq:V_x}, and \eqref{eq:h_ga},     
\begin{equation}\label{eq:Eh_M}
\begin{aligned}
	\RR(L_\erm;p,\g)
	&\le\sum_{x=1}^d p_x|\g_x|
		\big[\P(V^{p,\g}_x\ne0,\sign V^{p,\g}_x\ne\sign\g_x)
	+\P(V^{p,\g}_x=0)\big]. 
\end{aligned}	 
\end{equation}
Take now any $b\in[
0,1]$ and $k\in\intr0\infty$. 
In view of \eqref{eq:V^b_k}, $V^b_k$ equals $-V^{-b}_k$ in distribution, and $V^b_k$ is stochastically greater than $V^{-b}_k$ (since $b\ge0$). 
So,  
by the definition 
\eqref{eq:opt:=}--\eqref{eq:s_k:=} of $\opt(k,b)$,  
\begin{equation}\label{eq:opt,Q__}
\opt(k,b)=Q_-(k,b)+\tfrac12\,Q_0(k,b),  
\end{equation}
where 
\begin{equation}\label{eq:Q_-,Q_0}
	  Q_-(k,b):=\P\big(V^b_k<0\big)\quad\text{and}\quad Q_0(k,b):=\P(V^b_k=0).     
\end{equation}

Recalling the definition of the random pairs $(X_1^p,Y_1^{p,\g}),\dots,(X_m^p,Y_m^{p,\g})$ in the paragraph containing \eqref{eq:Pbias} and \eqref{eq:Z_m^p,g}, the definition of the random set $\J^p_x$ in \eqref{eq:JJ}, and the definition of the $Y_i^{b}$'s in  
Theorem~\ref{th:duality}, we see that, 
for any $J\in\binom{[m]}k$,  
the conditional distribution of $(Y^{p,\g}_i)_{i\in J}$ 
given the event $\{\J^p_x=J\}\big[=\{X_i=x\ \forall i\in J\}\cup\{X_i\ne x\ \forall i\in J^\cc\}\big]$ is the same as the distribution of $(Y_1^{\g_x},\dots,Y_k^{\g_x})$. 

Therefore, 
if $\g_x>0$ for some $x\in[d]$, then, in view of \eqref{eq:N_x}, 
\eqref{eq:V_x}, and \eqref{eq:opt,Q__}, 
\label{comment14,R1R1}
\begin{align*}
&\P(V^{p,\g}_x\ne0,\sign V^{p,\g}_x\ne\sign\g_x)
=\P(V^{p,\g}_x<0) \\ 
&=\sum_{k=0}^m\P(V^{p,\g}_x<0,N^p_x=k) \\
&=\sum_{k=0}^m \sum_{J\in\binom{[m]}k}\P\big(V^{p,\g}_x<0,\J^p_x=J\big) \\  
&=\sum_{k=0}^m \sum_{J\in\binom{[m]}k}\P\Big(\sum_{i\in J} Y^{p,\g}_i<0,\J^p_x=J\Big) \\  
&=\sum_{k=0}^m \sum_{J\in\binom{[m]}k}\P\Big(\sum_{i=1}^k Y^{\g_x}_i<0\Big)\,\P(\J^p_x=J) \\  
&=\sum_{k=0}^m  \P\Big(\sum_{i=1}^k Y^{\g_x}_i<0\Big)\;\P(N^p_x=k) 
=\sum_{k=0}^m Q_-(k,|\g_x|)\P(N^p_x=k) 
\\ 
&=\E Q_-(N^p_x,|\g_x|)=\E\opt(N^p_x,|\g_x|)-\tfrac12\,\E Q_0(N^p_x,|\g_x|).    	
\end{align*}
Similarly, the latter expression, 
$\E\opt(N^p_x,|\g_x|)-\tfrac12\,\E Q_0(N^p_x,|\g_x|)$, for \break 
$\P(V^{p,\g}_x\ne0,\sign V^{p,\g}_x\ne\sign\g_x)$ holds when $\g_x<0$ as well. On the other hand, it is similarly seen that 
\begin{equation*}
	\E Q_0(N^p_x,|\g_x|)=\P(V^{p,\g}_x=0). 
\end{equation*}
Thus, \eqref{eq:Eh_M} yields   
\begin{equation*}
\begin{aligned}
\RR(L_\erm;p,\g)	&\le\sum_{x=1}^d p_x|\g_x|\E\opt(N^p_x,|\g_x|)+\frac12\,\sum_{x=1}^d p_x|\g_x|\E Q_0(N^p_x,|\g_x|). 
\end{aligned}	 
\end{equation*} 
In particular, in view of \eqref{eq:IS<}, this implies the second inequality in \eqref{eq:ties}; the first inequality there is trivial. 

Now, to complete the proof of 
\eqref{eq:ties} and Theorem~\ref{th:asymp}, it remains to verify \eqref{eq:B sim} and 
\begin{equation}\label{eq:EQ_0}
	\sup_{p,\g}\sum_{x=1}^d p_x|\g_x|\E Q_0(N^p_x,|\g_x|)\overset{\text{?}}=O(1/\nu). 
\end{equation}

Take any $b\in(0,1]$ and any natural $k\ge3$, so that, by \eqref{eq:q}  
and \eqref{eq:j}, $q:=q_k\ge1>0$. 
Note that $s_k(1)=1$. Hence, by \eqref{
eq:opt:=} 
and \eqref{eq:s_k=}, 
\begin{multline}\label{eq:1-s(y)/2}
	\opt(k,b)=\tfrac12\big(1-s_k(b)\big)=\tfrac12\big(s_k(1)-s_k(b)\big)
	=\tfrac12\,s_k'(0)\big(S_q(1)-S_q(b)\big) \\ 
	=\tfrac12\,s_k'(0)\int_b^1(1-u^2)^q\,du  
	\le\tfrac12\,s_k'(0)\int_b^1 e^{-qu^2}\,du
	\le\tfrac12\,s_k'(0)\int_b^\infty e^{-qu^2}\,du 
	\\ 
	=A_k\big(1-\erf(b\sqrt{q}\,)\big),    
\end{multline}
where $A_k:=\frac{s_k'(0)\sqrt\pi}{4\,\sqrt q}\to\frac12$ as $k\to\infty$, by 
Lemma~\ref{lem:C_i} and \eqref{eq:s'odd-even}. 
Therefore, for $z:=b\sqrt{2q}$ one has 
\begin{equation}\label{eq:opt<}
	b\opt(k,b)\le\frac{\la_k}{\sqrt k}\,\frac z2\,\big(1-\erf(z/\sqrt{2}\,)\big)
	\le c_\infty\,\frac{\la_k}{\sqrt k} 
\end{equation}
by \eqref{
eq:clb->c_infty}, where 
\begin{equation}\label{eq:la_k to infty}
	\la_k:=2A_k\sqrt{\tfrac{k}{2q}}\to1
\end{equation}
as $k\to\infty$. 

Take now any $\vp\in(0,1)$. In view of \eqref{eq:opt<} and  
the first part of Remark~\ref{rem:upper}, 
\begin{equation}\label{eq:opt<<}
	b\opt(k,b)\le\frac A{\sqrt{k+1}} 
\end{equation}
for some real $A>0$, all $b\in[0,1]$, and all $k=0,1,\dots$.

Since the r.v.\ $N^p_x$ has the binomial distribution with parameters $m$ and $p_x$, one has 
the following well-known inequality: 
\begin{equation*}
	\P(N^p_x\le(1-\vp)mp_x)\le e^{-\vp^2 mp_x/2};   
\end{equation*}
see e.g.\ \cite[Exercise~2.9]{BLM}; 
also, this inequality immediately follows from the more general and precise results in \cite[(1.3) or (2.31)]{left_publ} or \cite[Theorem~7]{pin-utev-exp}. 

Therefore,
\begin{multline}\label{eq:S_01}
	S_{01}:=\sum_{x=1}^d p_x\E\frac1{\sqrt{N^p_x+1}}\ii{N^p_x\le(1-\vp)mp_x} \\ 
	\le\sum_{x=1}^d p_x\P(N^p_x\le(1-\vp)mp_x)
	\le\sum_{x=1}^d p_x e^{-\vp^2 mp_x/2}
		\le\sum_{x=1}^d {a_\vp}\frac1m
	={a_\vp}\frac dm,
\end{multline}
where $a_\vp
:=\frac2{\vp^2}\sup_{u>0}ue^{-u}=\frac2{e\vp^2}$, 
which depends only on $\vp$.
Also,  
\begin{multline}\label{eq:S_02}
	S_{02}:=\sum_{x=1}^d p_x\E\frac1{\sqrt{N^p_x+1}}\ii{N^p_x>(1-\vp)mp_x} \\ 
	\le\sum_{x=1}^d p_x\frac1{\sqrt{(1-\vp)mp_x+1}}
		\le\sum_{x=1}^d \frac{\sqrt{p_x}}{\sqrt{(1-\vp)m}}
		\le\frac1{\sqrt{1-\vp}}\,\sqrt{\frac dm}; 
\end{multline} 
the last inequality here is obtained using the concavity of the square root function together with the condition $\sum_{x=1}^d p_x=1$. 
Thus, by \eqref{eq:S_01} and \eqref{eq:S_02}, 
\begin{equation}\label{eq:S_0<}
	\sum_{x=1}^d p_x\E\frac1{\sqrt{N^p_x+1}}=S_{01}+S_{02}\le\frac1{1-\vp}\,\sqrt{\frac dm}
\end{equation}
(if $m/d$ is large enough, depending on $\vp$; recall the framed convention on page~\pageref{framed}). 

In view of \eqref{eq:opt<<}, 
\begin{multline*}
	S_{11}:=\sum_{x=1}^d p_x|\g_x|\E\opt(N^p_x,|\g_x|) \ii{N^p_x>(1-\vp)mp_x} \ii{p_x\le\vp/d} \\ 
\le\sum_{x=1}^d p_x \E\frac A{\sqrt{N^p_x+1}} \ii{N^p_x>(1-\vp)mp_x} \ii{p_x\le\vp/d} \\ 
%
	\le\sum_{x=1}^d p_x\frac A{\sqrt{(1-\vp)mp_x+1}} \ii{p_x\le\vp/d}
		\le A\sum_{x=1}^d \frac{\sqrt{p_x}}{\sqrt{(1-\vp)m}}\ii{p_x\le\vp/d} \\ 
	\le A\sum_{x=1}^d \frac{\sqrt{\vp/d}}{\sqrt{(1-\vp)m}}
=A\sqrt{\frac\vp{1-\vp}}\,\sqrt{\frac dm}.  
\end{multline*} 

Next, taking into account \eqref{eq:opt<}, \eqref{eq:la_k to infty},  and \eqref{eq:S_02}, one has 
\begin{multline*}
	S_{12}:=\sum_{x=1}^d p_x|\g_x|\E\opt(N^p_x,|\g_x|) \ii{N^p_x>(1-\vp)mp_x} \ii{p_x>\vp/d} \\ 
\le\sum_{x=1}^d p_x|\g_x|\E\opt(N^p_x,|\g_x|) \ii{N^p_x>(1-\vp)mp_x} \ii{N^p_x>(1-\vp)\vp m/d} \\ 
\le\sum_{x=1}^d p_x(1+\vp)c_\infty\E\frac1{\sqrt{N^p_x+1}} \ii{N^p_x>(1-\vp)mp_x} 
=(1+\vp)c_\infty S_{02} \\ 
\le\frac{(1+\vp)c_\infty}{\sqrt{1-\vp}}\,\sqrt{\frac dm}.   
\end{multline*}

So, 
\begin{multline}\label{eq:S_1}
S_1:=\sum_{x=1}^d p_x|\g_x|\E\opt(N^p_x,|\g_x|) \ii{N^p_x>(1-\vp)mp_x}
=S_{11}+S_{12} \\ 
\le \frac{(1+A_1\sqrt\vp)c_\infty}{\sqrt{1-\vp}}\,\sqrt{\frac dm}	
\end{multline}
for some universal real constant $A_1>0$. 

On the other hand, by \eqref{eq:opt<<} and \eqref{eq:S_01}, 
\begin{multline*}
S_2:=\sum_{x=1}^d p_x|\g_x|\E\opt(N^p_x,|\g_x|) \ii{N^p_x\le(1-\vp)mp_x} \\ 
\le AS_{01}\le A{a_\vp}\frac dm\le \vp\sqrt{\frac dm}.  
\end{multline*}

Combining this with \eqref{eq:S_1}, we conclude that 
\begin{equation}\label{eq:sum opt<}
	\sum_{x=1}^d p_x|\g_x|\E\opt(N^p_x,|\g_x|)
	=S_1+S_2\le \frac{(1+A_2\sqrt\vp)c_\infty}{\sqrt{1-\vp}}\,\sqrt{\frac dm} 
\end{equation}
for some universal real constant $A_2>0$. 
Thus, 
letting here $\vp$ be
arbitrarily
small and recalling the definition of $B(m,d)$ in \eqref{eq:IS<}, we complete the proof of the asymptotic relation \eqref{eq:B sim}. 

To complete the proof of Theorems~\ref{th:negl} and \ref{th:asymp}, let us finally verify \eqref{eq:EQ_0}. 
If $k=2j$ is even then, by \eqref{eq:Q_-,Q_0},  
\begin{equation*}
	b Q_0(k,b)=\binom{2j}j\frac1{4^j}\,b(1-b^2)^j\le \frac{A_3}{\sqrt{k+1}}\,b e^{-b^2k/2} 
	\le\frac{A_4}{k+1} 
\end{equation*}
for some universal real constants $A_3>0$ and $A_4>0$ and all $b\in[0,1]$; since $Q_0(k,b)=0$ if $k$ is odd, the above bound in fact holds for all $k=0,1,\dots$. 
So, 
\begin{equation*}
	\sum_{x=1}^d p_x|\g_x|\E Q_0(N^p_x,|\g_x|)
	\le A_4\sum_{x=1}^d p_x\E\frac1{N^p_x+1}
	\le A_4\Big(a_\vp+\frac1{1-\vp}\Big)\frac dm ; 
\end{equation*}
the second inequality in the above display is obtained similarly to the inequality in 
\eqref{eq:S_0<}. 
Thus, \eqref{eq:EQ_0} is verified, and the proof of Theorems~\ref{th:negl} and \ref{th:asymp} is complete. 
\end{proof}

\begin{proof}[Proof of Theorem~\ref{th:P low}]
Take any learning algorithm $L\in\Lrand$. 
Take any $b\in(0,1]$ and then any $\vp\in(0,b)$ 
and any $\g\in\{-b,b\}^{[d]}$. Let 
$D_\g$ and $Z_m^\g$ be as in 
\eqref{eq:D_ga}. 
Recall \eqref{eq:err-decomp} and let $\hat\De^\g:=\De(
L(Z_m^\g,U),D^\g)$,  
so that $\hat\De^\g\le b$. So, $\ii{\hat\De^\g>\vp}\ge\frac1{b-\vp}\,(\hat\De^\g-\vp)$, whence 
$\P(\hat\De^\g>\vp)\ge\frac1{b-\vp}\,(\E\hat\De^\g-\vp)$. 
Therefore, by 
\eqref{eq:R,U}, 
Theorem~\ref{th:average}, Proposition~\ref{lem:convex minor}, and Jensen's inequality, 
\begin{align*}
	\max_{\g\in\{-b,b\}^{[d]}}\P(\hat\De^\g>\vp)
	\ge 
	&\,\frac1{2^d}\sum_{\g\in\{-b,b\}^{[d]}}\P(\hat\De^\g>\vp) \\ 
	&\ge\frac1{b-\vp}\,\Big(\,
	\frac1{2^d}\sum_{\g\in\{-b,b\}^{[d]}}\E\hat\De^\g-\vp\Big) 
	\ge\frac{b\,\hatopt(\nu,b)-\vp}{b-\vp}. 
\end{align*}
Take now any $\nu_*\in[3,\infty)$ and any real $\nu\ge\nu_*$. 
Then, by Lemma~\ref{lem:hatopt} and \eqref{eq:C_i<}, 
$\hatopt(\nu,b)\ge\tfrac12\,\big(1-\big(\frac{i_{\nu_*} + 1}{i_{\nu_*}}\big)^{1/8}\psi_b(\nu)\big)$. 
Further, take any $z\in(0,\sqrt{\nu_*}]$ and $w\in(0,z)$, and then take $b=z/\sqrt\nu$ and $\vp=w/\sqrt\nu$, so that the conditions $b\in(0,1]$ and $\vp\in(0,b)$ assumed in the beginning of this proof do hold. 
Then 
\begin{align*}
	\max_{\g\in\{-1,1\}^{[d]}}\P\Big(\hat\De^\g>\frac w{\sqrt\nu}\Big)
	&\ge\frac1{z - w}\, \Big(\frac z2 \Big[1 - \Big(\frac{i_{\nu_*} + 1}{i_{\nu_*}}\Big)^{1/8} 
	\frac{\erf(z/\sqrt2)}{
	\exp\{- z^2/(6\nu_*)\}}\Big] - w\Big) \\ 
	&=:\plo(w,\nu_*,z).  
\end{align*}
It remains to note that $\plo(\frac1{\sqrt{320}},\frac{128}{10},\frac{331}{1000})>0.238$, $\plo(\frac1{\sqrt{320}},3,\frac{320}{1000})>0.227$, $\plo(\frac1{\sqrt{413/10}},\frac{128}{10},\frac{681}{1000})>0.01563>\frac1{64}$, and $\plo(\frac1{\sqrt{496/10}},3,\frac{601}{1000})>0.0159>\frac1{64}$.  
\end{proof}

\hide{
\begin{theorem}\label{th:asymp} 
One has 
\begin{equation}\label{eq:IS>hatopt}
	\inf_{L} \sup_{D}\E\Delta(\hat h_{m,L}^D,D)\sim\frac{c_\infty}{\sqrt\nu}    
\end{equation}
as $m$ and $d$ vary in any way such that $d\ge1$ and $\nu=m/d\to\infty$. 
\end{theorem}
 }

\section*{Acknowledgments} We are pleased to thank
Peter Gr\"unwald for bringing the result of \citet{audibert2009} to our attention, and
the referees for carefully reading the paper and useful suggestions on the presentation. 

\appendix

\section{Identities and inequalities for binomial distributions: details concerning the function \texorpdfstring{$\opt$}{
}}\label{app:bayes}

Recall the definition of $\opt$ in \eqref{eq:opt:=}. 
Take any $b\in[0,1]$ and $k\in\intr1\infty$. 
By \eqref{eq:s_k:=}, 
\begin{equation}\label{eq:s_k=sum}
	s_k(b)=\frac1{2^k}\sum_{i=0}^j\binom ki\big[(1+b)^{k - i}(1-b)^i - (1-b)^{k - i}(1+b)^i\big], 
\end{equation} 
\begin{equation}\label{eq:j}
j:=j_k:=\lfloor k/2\rfloor. 	
\end{equation}
Using identities $(k-i)\binom ki=k\binom{k-1}i$ and $i\binom ki=k\binom{k-1}{i-1}$, we have 
\begin{align*}
	\frac{2^k}k\,s'_k(b):=&\sum_{i=0}^j\binom{k-1}i\big[(1+b)^{k - i-1}(1-b)^i + (1-b)^{k - i-1}(1+b)^i\big]	\\ 	
	-&\sum_{i=1}^j\binom{k-1}{i-1}[(1+b)^{k - i}(1-b)^{i-1} + (1-b)^{k - i}(1+b)^{i-1}\big]. 
\end{align*}
Making in the second sum the substitution $i=r+1$, then replacing there $r$ back by $i$, and introducing 
\begin{equation}\label{eq:q}
	q:=q_k:=k - j-1, 
\end{equation}
we have 
\begin{equation}\label{eq:s'(y)}
	\frac{2^k}k\,s'_k(b)\Big/\binom{k-1}j=(1+b)^{q}(1-b)^j + (1-b)^{q}(1+b)^j
	=2(1-b^2)^{q},  
\end{equation}
which is non-increasing in $b\in[0,1]$. 
So, the function $s_k$ is concave. 

Moreover, it follows that 
\begin{equation}\label{eq:s_k=}
	s_k(b)=s'_k(0)S_q(b),\quad\text{where}\quad S_q(b):=\int_0^b(1-u^2)^{q}\,du. 
\end{equation}
For all $i\in\intr0\infty$, in view of \eqref{eq:j}, \eqref{eq:q}, and \eqref{eq:s'(y)}, 
\begin{equation}\label{eq:s'odd-even}
q_{2i+2}=q_{2i+1}=i	\quad\text{and}\quad s'_{2i+2}(0)=s'_{2i+1}(0)
\end{equation}
and hence, by \eqref{eq:s_k=}, one has the curious, and useful, identity 
\begin{equation}\label{eq:odd-even}
	s_{2i+2}(b)=s_{2i+1}(b). 
\end{equation}

We also have 

\begin{lemma}\label{lem:concave}
Take any $b\in(0,1)$. Then the function 
\begin{equation}\label{eq:bayes,0,1,3}
	\{0,1,3,5,\dots\}\ni k\mapsto\opt(k,b)
\end{equation}
is strictly convex. 
\end{lemma}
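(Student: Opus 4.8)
Since $\opt(k,b)=\tfrac12\big(1-s_k(b)\big)$ by \eqref{eq:opt:=}, strict convexity of $k\mapsto\opt(k,b)$ on the unequally spaced grid $\{0,1,3,5,\dots\}$ is equivalent to strict \emph{concavity} of $k\mapsto s_k(b)$ on that grid, i.e.\ (checking, on each triple of consecutive grid points, that the middle value lies strictly below the chord) to
\begin{equation*}
	s_1(b)>\tfrac23\,s_0(b)+\tfrac13\,s_3(b)
	\qquad\text{and}\qquad
	s_{2i+1}(b)>\tfrac12\big(s_{2i-1}(b)+s_{2i+3}(b)\big)\ \ \text{for }i\ge1.
\end{equation*}
The plan is to handle the main (second) family by producing a closed form for the consecutive differences $s_{2i+3}(b)-s_{2i+1}(b)$, which turns the concavity statement into an elementary ratio bound, and then to dispose of the single ``boundary'' triple $(0,1,3)$ by direct substitution.

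For the main family, I would first record, from \eqref{eq:s'(y)} together with \eqref{eq:j} and \eqref{eq:q} (which give $j_{2i+1}=q_{2i+1}=i$), the identity $s_{2i+1}'(u)=\al_i\,(1-u^2)^i$, where $\al_i:=\tfrac{2i+1}{4^i}\binom{2i}i$, together with the one-step recursion $\al_{i+1}=\tfrac{2i+3}{2(i+1)}\,\al_i$ (immediate from $\binom{2i+2}{i+1}=\tfrac{2(2i+1)}{i+1}\binom{2i}i$). Hence
\begin{equation*}
	s_{2i+3}'(u)-s_{2i+1}'(u)
	=(1-u^2)^i\big(\al_{i+1}(1-u^2)-\al_i\big)
	=\frac{\al_i}{2(i+1)}\,(1-u^2)^i\big(1-(2i+3)u^2\big),
\end{equation*}
and since $\dfrac{d}{du}\big(u(1-u^2)^{i+1}\big)=(1-u^2)^i\big(1-(2i+3)u^2\big)$, integrating over $u\in[0,b]$ gives the clean formula
\begin{equation*}
	s_{2i+3}(b)-s_{2i+1}(b)=\frac{2i+1}{2(i+1)4^i}\,\binom{2i}i\;b\,(1-b^2)^{i+1}\;=:\;2E_i,
\end{equation*}
which is $>0$ for $b\in(0,1)$; thus $E_i=\opt(2i+1,b)-\opt(2i+3,b)>0$ (so, incidentally, $\opt(\cdot,b)$ is also strictly decreasing along the grid). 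Since $s_{2i+1}(b)-s_{2i-1}(b)=2E_{i-1}$, the required chord inequality $s_{2i+1}(b)>\tfrac12\big(s_{2i-1}(b)+s_{2i+3}(b)\big)$ is precisely $E_i<E_{i-1}$. Using the displayed formula and the cancellation $\binom{2i}i\big/\binom{2i-2}{i-1}=\tfrac{2(2i-1)}i$ one finds
\begin{equation*}
	\frac{E_i}{E_{i-1}}=\frac{2i+1}{2i+2}\,(1-b^2)\in(0,1)\qquad\text{for }b\in(0,1),\ i\ge1,
\end{equation*}
so $(E_i)_{i\ge0}$ is strictly decreasing, which settles the main family.

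For the boundary triple, I would substitute the values $s_0(b)=0$ and $s_1(b)=b$ (both read off from \eqref{eq:s_k:=}--\eqref{eq:V^b_k}) and $s_3(b)=\tfrac{3b}2-\tfrac{b^3}2$ (from $s_3'(u)=\al_1(1-u^2)=\tfrac32(1-u^2)$), equivalently $\opt(0,b)=\tfrac12$, $\opt(1,b)=\tfrac{1-b}2$, $\opt(3,b)=\tfrac12-\tfrac{3b}4+\tfrac{b^3}4$, into $\opt(1,b)<\tfrac23\opt(0,b)+\tfrac13\opt(3,b)$; the inequality collapses to $\tfrac{b}{12}\,(3+b^2)>0$, which holds for $b\in(0,1)$. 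Together with the previous paragraph, this proves strict convexity of $k\mapsto\opt(k,b)$ on $\{0,1,3,5,\dots\}$.

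I expect the only non-routine point to be spotting that $s_{2i+3}'-s_{2i+1}'$ is, up to the scalar $\al_i/\big(2(i+1)\big)$, the exact $u$-derivative of $u(1-u^2)^{i+1}$: this collapses the difference $s_{2i+3}-s_{2i+1}$ to a single term $b(1-b^2)^{i+1}$ times a binomial coefficient, and thereby converts the concavity claim into the transparent bound $\tfrac{2i+1}{2i+2}(1-b^2)<1$; the remaining manipulations (binomial cancellations, the $(0,1,3)$ check) are bookkeeping. An alternative derivation of the same closed form, worth keeping in mind, is probabilistic: by \eqref{eq:def-opt} one has $\opt(2i+1,b)=\P\big(V^b_{2i+1}<0\big)$, and conditioning $V^b_{2i+3}$ on its last two increments yields $\opt(2i+3,b)-\opt(2i+1,b)=-\tfrac{b}{4^{i+1}}\binom{2i+1}i(1-b^2)^{i+1}=-E_i$, matching the formula above.
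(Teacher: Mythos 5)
Your proof is correct, and it takes a somewhat different route from the paper's, although both start from the same two ingredients: the reduction to strict concavity of $k\mapsto s_k(b)$ and the derivative formula \eqref{eq:s'(y)} giving $s'_{2i+1}(b)=s'_{2i+1}(0)(1-b^2)^i$. The paper handles the main family of triples qualitatively: with $\ts_i:=s_{2i+1}$ it studies the second difference $g(b):=\ts_i(b)+\ts_{i+2}(b)-2\ts_{i+1}(b)$, shows via a quadratic in $w=b^2$ that $g'$ changes sign exactly once on $(0,1)$, and concludes $g<0$ from $g(0)=g(1)=0$. You instead notice that $s'_{2i+3}-s'_{2i+1}$ is an exact multiple of $\frac{d}{du}\bigl(u(1-u^2)^{i+1}\bigr)$, integrate to get the closed form $s_{2i+3}(b)-s_{2i+1}(b)=\frac{2i+1}{2(i+1)4^i}\binom{2i}{i}\,b(1-b^2)^{i+1}$, and then reduce strict concavity to the transparent ratio bound $\frac{E_i}{E_{i-1}}=\frac{2i+1}{2i+2}(1-b^2)<1$; the boundary triple $(0,1,3)$ is settled by direct substitution, exactly as in the paper (which uses $3s_1(b)>s_3(b)$ from \eqref{eq:s_0,s_1,s_3}). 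Your computations check out (the recursion $\al_{i+1}=\frac{2i+3}{2(i+1)}\al_i$, the binomial cancellation, and the $(0,1,3)$ identity $\frac{b}{12}(3+b^2)>0$ are all right, and the probabilistic cross-check via $\opt(2i+1,b)=\P(V^b_{2i+1}<0)$ agrees). What your version buys is an explicit formula for the consecutive increments of $\opt(\cdot,b)$ along the grid --- which in particular gives strict monotonicity for free, something the paper obtains separately (via the law of large numbers) in the proof of Proposition~\ref{lem:convex minor}; what the paper's version buys is that it avoids guessing the antiderivative, needing only the sign pattern of a quadratic.
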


\begin{proof}
In view of \eqref{
eq:opt:=}, it is enough to show that the function 
$
\{0,1,3,5,\dots\}\ni k\mapsto s_k(b)
$
is strictly concave. 
By \eqref{eq:s_k=sum} and \eqref{eq:j}, 
\begin{equation}\label{eq:s_0,s_1,s_3}
	s_0(b)=0,\quad s_1(b)=b,\quad s_3(b)=\tfrac12\,(3b-b^3)<3s_1(b). 
\end{equation}
So, the restriction of the function in \eqref{eq:bayes,0,1,3} to the set $\{0,1,3\}$ is strictly concave. 

It remains to show that the restriction of this function to the set $\{1,3,5,\dots\}$ is strictly concave. 
Take any $i\in\intr0\infty$. 
We have to show that 
\begin{equation*}
	g(b):=
	\ts_i(b)+\ts_{i+2}(b)-2\ts_{i+1}(b)<0,   
\end{equation*}  
where 
\begin{equation}\label{eq:ts}
	\ts_i:=s_{2i+1}. 
\end{equation}
By 
\eqref{eq:s_k=}, 
$\ts'_\al(b)=\ts'_\al(0)(1-b^2)^{\al}>0$; here and in the rest of this proof, $\al$ stands for an arbitrary nonnegative integer. 
So, $g'(b)$ has the same sign as     
\begin{equation}
	\frac{g'(b)}{\ts'_{i+1}(b)}\,2(1 - b^2) (i+2) (2 i +3)
	=
	-1 - 2 (3 + 2 i) w + (15 + 16 i + 4 i^2) w^2=:g_1(w) 
\end{equation}
where $w:=b^2$. Since the function $g_1$ is convex, with $g_1(0)=-1<0$ and $g_1(1)=4 (2 + 3 i + i^2)>0$, it follows that $g_1(w)$ switches exactly once in sign,
from $-$ to $+$, as $w$ increases from $0$ to $1$. 
That is, $g(b)$ switches exactly once, from
decreasing to increasing,
as $b$ increases from $0$ to $1$. 
Also, by \eqref{eq:s_k=sum} and \eqref{eq:ts}, 
$\ts_\al(0)=0$ and $\ts_\al(1)=1$, whence $g(0)=0=g(1)$. Thus, indeed $g(b)<0$ for all $b\in(0,1)$. 
\end{proof}

\begin{lemma}\label{lem:C_i}
For $i\in\intr0\infty$, let 
\begin{equation}\label{eq:C_i}
	C_i:=\sqrt{\frac\pi2}\,\frac{s'_{2i+1}(0)}{\sqrt{2i+1}}
	=\frac{\sqrt{\pi(i+1/2)}}{2^{2i}}\,\binom{2i}i,  
\end{equation}
the latter equality following by \eqref{eq:s'(y)} and \eqref{eq:j}.  

Then $C_i$ decreases from $\sqrt{\frac\pi2}$ to $1$ as $i$ increases from $0$ to $\infty$, and for all $i\in\intr1\infty$  
\begin{equation}\label{eq:C_i<}
	C_i<\Big(\frac{i + 1}i\Big)^{1/8}. 
\end{equation}
\end{lemma}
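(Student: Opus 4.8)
The plan is to forget the probabilistic origin of $C_i$ and argue entirely from the closed form $C_i=\dfrac{\sqrt{\pi(i+1/2)}}{4^i}\binom{2i}i$ recorded in \eqref{eq:C_i}. The first step is to compute the ratio of consecutive values. From $\binom{2i+2}{i+1}=\dfrac{2(2i+1)}{i+1}\binom{2i}i$ one gets, after cancellation,
\[
\frac{C_{i+1}}{C_i}=\sqrt{\frac{i+3/2}{i+1/2}}\cdot\frac{2i+1}{2i+2}=\frac{\sqrt{(2i+1)(2i+3)}}{2i+2}=\sqrt{1-\frac1{(2i+2)^2}},
\]
using $(2i+1)(2i+3)=(2i+2)^2-1$. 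This is $<1$, so $(C_i)_{i\ge0}$ is strictly decreasing, and $C_0=\sqrt{\pi/2}$ by inspection. To identify the limit I would telescope: $C_i=\sqrt{\pi/2}\prod_{j=1}^i\sqrt{1-1/(2j)^2}$, and since the Wallis product gives $\prod_{j\ge1}\bigl(1-1/(2j)^2\bigr)=2/\pi$, we obtain $C_i\downarrow1$ (equivalently, one may simply quote $\binom{2i}i\sim4^i/\sqrt{\pi i}$). This settles the first assertion of the lemma.

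For the inequality \eqref{eq:C_i<} the idea is to exhibit an auxiliary quantity built from $C_i$ that is monotone with limit $1$; the inequality then drops out. I would take $t_i:=C_i^{\,8}\,\dfrac{i}{i+1}$, so that \eqref{eq:C_i<} is exactly the assertion $t_i<1$. Putting $u:=i(i+2)$, which is $\ge3$ for $i\ge1$, note $u+1=(i+1)^2$ and $4(u+1)=(2i+2)^2$, so the ratio computed above gives
\[
\frac{t_{i+1}}{t_i}=\Bigl(\frac{C_{i+1}}{C_i}\Bigr)^{8}\cdot\frac{(i+1)^2}{i(i+2)}=\Bigl(1-\frac1{4(u+1)}\Bigr)^{4}\Bigl(1+\frac1u\Bigr).
\]
Now apply the elementary bound $(1-x)^4>1-4x$ valid for $x>0$ (the difference equals $x^2\bigl((x-2)^2+2\bigr)>0$) with $x=\tfrac1{4(u+1)}$: this yields $\bigl(1-\tfrac1{4(u+1)}\bigr)^4>\tfrac{u}{u+1}$, hence $t_{i+1}/t_i>1$. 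So $(t_i)_{i\ge1}$ is strictly increasing, and since $C_i\to1$ forces $t_i\to1$, a strictly increasing sequence with limit $1$ must satisfy $t_i<1$ for every $i\ge1$ — which is \eqref{eq:C_i<}.

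The monotonicity of $C_i$ and the evaluation of its limit are routine; the one genuinely delicate point is \eqref{eq:C_i<}, and specifically the choice of the power $8$: it is picked precisely so that the exponent $4$ in $\bigl(1-\tfrac1{4(u+1)}\bigr)^{4}$ leaves exactly enough room for the crude estimate $(1-x)^4\ge1-4x$ to beat the factor $1+\tfrac1u$. Beyond that I expect only light bookkeeping — the algebraic identities $(2i+1)(2i+3)=(2i+2)^2-1$ and $4i(i+2)=(2i+2)^2-4$, and checking that the base index $i=1$ is covered (it is, since the increasing sequence $(t_i)_{i\ge1}$ starts below its limit $1$).
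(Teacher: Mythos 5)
Your proof is correct and follows essentially the same route as the paper: both hinge on the consecutive-ratio identity $C_{i+1}/C_i=\sqrt{1-1/(2i+2)^2}$, the limit $C_i\to1$, and the one-step inequality $(C_{i+1}/C_i)^8>1-\tfrac1{(i+1)^2}$, your increasing sequence $t_i=C_i^8\,\tfrac{i}{i+1}$ being just the telescoped form of the paper's product bound $C_i=\prod_{\alpha\ge i}(C_\alpha/C_{\alpha+1})<\bigl(\tfrac{i+1}{i}\bigr)^{1/8}$. The only difference is cosmetic but pleasant: you verify the one-step inequality via the elementary bound $(1-x)^4>1-4x$ after the substitution $u=i(i+2)$, whereas the paper checks it by an explicit polynomial-positivity computation.
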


\begin{proof}
In this proof, it is assumed that $i\in\intr0\infty$. Let 
\begin{equation*}
	r_i:=\frac{C_i}{C_{i+1}}=\frac{2i+2}{\sqrt{(2i+2)^2-1}}>1. 
\end{equation*}
So, $C_i$ indeed decreases in $i$. It is easy to check that $C_0=\sqrt{\frac\pi2}$ and $C_i\to C_\infty:=1$ as $i\to\infty$. 

It remains to verify inequality \eqref{eq:C_i<}. Accordingly, assume through the end of this proof that $i\in\intr1\infty$. Then 
\begin{equation*}
	-1+r_i^{-8}\Big/\Big(1 - \frac1{(i + 1)^2}\Big)
	=\frac{96 i^4+384 i^3+560 i^2+352 i+81}{256 i (i+1)^6 (i+2)}>0, 
\end{equation*}
which shows that 
\begin{equation*}
	r_i<\Big(1 - \frac1{(i + 1)^2}\Big)^{-1/8},
\end{equation*}
whence 
\begin{multline*}
	C_i=C_\infty\prod_{\al=i}^\infty r_\al=\prod_{\al=i}^\infty r_\al
	<\prod_{\al=i}^\infty \Big(1 - \frac1{(\al + 1)^2}\Big)^{-1/8} \\ 
	=\prod_{\al=i}^\infty \Big(\frac{\al}{\al+1}\Big/\frac{\al+1}{\al+2}\Big)^{-1/8}
	=\Big(\frac{i + 1}i\Big)^{1/8},  
\end{multline*} 
which completes the proof of Lemma~\ref{lem:C_i}. 
\end{proof}

\begin{lemma}\label{lem:hatopt}
Take any real $\ka\ge1$ and any $b\in[0,1]$. 
Then 
\begin{equation*}
\hatopt(\ka,b)\ge\tfrac12\,\big(1-C_{i_\ka}\psi_b(\ka)\big),  
\end{equation*}
where $i_\ka:=\lfloor\frac{\ka-1}2\rfloor$, $C_i$ as in \eqref{eq:C_i}, and 
\begin{equation*}
	\psi_b(\ka):=
	e^{b^2/6}\erf(b\sqrt{\ka/2}). 
\end{equation*}
\end{lemma}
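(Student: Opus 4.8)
The plan is to reduce the lemma, via the piecewise-linear description of $\hatopt(\cdot,b)$ in Proposition~\ref{lem:convex minor}, to a sharp pointwise estimate of $\opt$ at the odd integers, and then to glue the two endpoints together by a concavity argument. Fix $b\in[0,1]$ and real $\ka\ge1$, put $i:=i_\ka=\lfloor\frac{\ka-1}2\rfloor$, so that $\ka\in[2i+1,2i+3]$, and let $w:=\frac{2i+3-\ka}2\in[0,1]$, so that $w(2i+1)+(1-w)(2i+3)=\ka$. By \eqref{eq:hatopt:=},
\[
  \hatopt(\ka,b)=w\,\opt(2i+1,b)+(1-w)\,\opt(2i+3,b).
\]
Thus it suffices to establish two facts: \emph{(a)} the pointwise bound $\opt(2j+1,b)\ge\tfrac12\big(1-C_j\,\psi_b(2j+1)\big)$ for every integer $j\ge0$ and every $b\in[0,1]$; and \emph{(b)} that the corresponding convex combination of the two right-hand sides (for $j=i$ and $j=i+1$) does not exceed $\tfrac12\big(1-C_{i_\ka}\psi_b(\ka)\big)$.

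For (b): since $e^{b^2/6}$ is a common factor and $C_{i+1}\le C_i=C_{i_\ka}$ (Lemma~\ref{lem:C_i}), combining (a) for $j=i$ and $j=i+1$ it is enough to show $w\,\erf\!\big(b\sqrt{(2i+1)/2}\big)+(1-w)\,\erf\!\big(b\sqrt{(2i+3)/2}\big)\le\erf\!\big(b\sqrt{\ka/2}\big)$. This follows from the concavity in $\ka$ of the map $\ka\mapsto\erf(b\sqrt{\ka/2})$ — immediate since its second derivative equals $-\tfrac{b}{\sqrt{2\pi}}\big(\tfrac12\ka^{-3/2}+\tfrac{b^2}2\ka^{-1/2}\big)e^{-b^2\ka/2}<0$ — together with $w(2i+1)+(1-w)(2i+3)=\ka$. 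Feeding this back in yields $\hatopt(\ka,b)\ge\tfrac12\big(1-C_{i_\ka}\psi_b(\ka)\big)$, which is the lemma. (The boundary cases $\ka=2j+1$, where $\hatopt=\opt$ and $i_\ka=j$, are covered directly by (a).)

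For (a): by \eqref{eq:opt:=} and \eqref{eq:s_k=} with the value $q_{2j+1}=j$ from \eqref{eq:s'odd-even}, one has $\opt(2j+1,b)=\tfrac12\big(1-s_{2j+1}'(0)\,S_j(b)\big)$ with $S_j(b):=\int_0^b(1-u^2)^j\,du$; substituting the value of $s_{2j+1}'(0)$ recorded in \eqref{eq:C_i} and the identity $\erf(b\sqrt{(2j+1)/2})=\sqrt{\tfrac{2(2j+1)}\pi}\int_0^b e^{-(j+1/2)u^2}\,du$, the bound in (a) becomes equivalent to the single one-variable inequality
\[
  \int_0^b(1-u^2)^j\,du\ \le\ e^{b^2/6}\int_0^b e^{-(j+1/2)u^2}\,du
\]
for all $b\in[0,1]$ and $j\in\{0,1,2,\dots\}$; call this $(\star)$. (The exponent $\tfrac16$ is the unique one for which the two sides of $(\star)$ agree through order $b^3$, which is why it appears in $\psi_b$.) I would prove $(\star)$, in fact for every real $q\ge0$ in place of $j$, by a derivative test: set $\Phi_q(b):=\int_0^b e^{-(q+1/2)u^2}\,du-e^{-b^2/6}\int_0^b(1-u^2)^q\,du$, so $\Phi_q(0)=0$, and compute $\Phi_q'(b)=e^{-(q+1/2)b^2}+\tfrac b3 e^{-b^2/6}\int_0^b(1-u^2)^q\,du-e^{-b^2/6}(1-b^2)^q$. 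Bounding $\int_0^b(1-u^2)^q\,du\ge b\,(1-b^2)^q$ (the integrand is $\ge(1-b^2)^q$ on $[0,b]$) gives $\Phi_q'(b)\ge e^{-(q+1/2)b^2}-e^{-b^2/6}(1-b^2)^q\big(1-\tfrac{b^2}3\big)$, and applying $1-x\le e^{-x}$ with $x=b^2$ and with $x=\tfrac{b^2}3$ yields $(1-b^2)^q\big(1-\tfrac{b^2}3\big)\le e^{-qb^2}e^{-b^2/3}=e^{-(q+1/3)b^2}$; hence $\Phi_q'(b)\ge e^{-(q+1/2)b^2}-e^{-(q+1/2)b^2}=0$ on $[0,1]$, so $\Phi_q(b)\ge\Phi_q(0)=0$, which is $(\star)$.

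I do not anticipate a serious obstacle. The only genuine step is the reduction of (a) to the clean scalar inequality $(\star)$ and the observation that $(\star)$ can be dispatched by a one-line derivative test using the two crude bounds $\int_0^b(1-u^2)^q\,du\ge b(1-b^2)^q$ and $1-x\le e^{-x}$, which turn out to be exactly calibrated — all the slack collapses to equality in the exponent $(q+\tfrac12)b^2$. What will need the most care is the routine bookkeeping in (b): keeping the interpolation nodes $2i+1,\,2i+3$ and the weights $w,\,1-w$ consistent, invoking the monotonicity of $C_i$ and the concavity of $\ka\mapsto\erf(b\sqrt{\ka/2})$ in the right order, and checking that the boundary values $\ka\in\{2i+1,2i+3\}$ (where $i_\ka$ jumps by one) are handled consistently.
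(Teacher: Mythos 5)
Your proposal is correct. The gluing step is the same as the paper's: both use the interpolation formula \eqref{eq:hatopt:=}, the monotonicity of $C_i$ from Lemma~\ref{lem:C_i} to replace $C_{i+1}$ by $C_{i_\ka}$, and the concavity of $\ka\mapsto\erf(b\sqrt{\ka/2})$ (equivalently of $\psi_b$) to pass from the two odd nodes to $\ka$ itself. Where you genuinely diverge is in the pointwise bound at odd $k$: the paper proves $S_q(b)\le\sqrt{\pi/2}\,\psi_b(k)/\sqrt k$ by chaining three facts --- $S_q(b)\le\tS_q(b):=\int_0^b e^{-qu^2}\,du$, the log-convexity of $q\mapsto\tS_q(b)$ (a H\"older/mixture argument giving $\tS_q\le\tS_{q+1/2}\,\tS_0/\tS_{1/2}$), and the separate elementary inequality $\tS_0(b)/\tS_{1/2}(b)\le e^{b^2/6}$ --- whereas you collapse all of this into the single scalar inequality $(\star)$, $\int_0^b(1-u^2)^q\,du\le e^{b^2/6}\int_0^b e^{-(q+1/2)u^2}\,du$, dispatched by one derivative test using only $\int_0^b(1-u^2)^q\,du\ge b(1-b^2)^q$ and $1-x\le e^{-x}$; I checked the computation of $\Phi_q'$ and the exponent bookkeeping ($\tfrac16+q+\tfrac13=q+\tfrac12$), and it closes exactly as you say, for all real $q\ge0$. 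Your route is more elementary and self-contained (no log-convexity/H\"older needed), and it subsumes the paper's auxiliary inequality $g(b)=\int_0^b e^{-u^2/2}\,du-be^{-b^2/6}>0$ as the case $q=0$ of $(\star)$; the paper's route is more modular, isolating the log-convexity of $\tS_q(b)$ as a reusable structural fact. Both arguments produce the same constant $e^{b^2/6}$, and your handling of the interval endpoints $\ka\in[2i+1,2i+3)$ and of the trivial case $b=0$ (where the claimed strict negativity of the second derivative degenerates to equality, harmlessly) is consistent.
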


\begin{proof}
For brevity, let $i:=i_\ka=\lfloor\frac{\ka-1}2\rfloor$ and $k:=2i+1$. Then $i\in\intr0\infty$, $k=2i+1\le\ka<2i+3=k+2$, and so, by \eqref{eq:hatopt:=}, 
\begin{equation}\label{eq:hatopt(ka)}
	\hatopt(\ka,b)=\tfrac{k+2-\ka}2\,\opt(k,b)+\tfrac{\ka-k}2\,\opt(k+2,b). 
\end{equation}
In view of \eqref{eq:j} and \eqref{eq:q}, $j_k=i=q_k=q$.
\label{log-conv}
It is well known and easily proved (
cf.\ e.g.\ \cite{kingman-matr}) that all mixtures of log-convex functions are log-convex. Therefore and because the function $q\mapsto e^{-qu^2}$ is log-convex, $\tS_q(b):=\int_0^b e^{-qu^2}\,du$ is log-convex in real $q$; that is, $f(q):=\ln\tS_q(b)$ is convex in $q$. 
\big[For readers' convenience, a quick and direct way to verify the convexity of $f$ is to see that for any real $q_1,q_2$ and any $t\in(0,1)$ the inequality 
$f((1-t)q_1+tq_2)\le(1-t)f(q_1)+tf(q_2)$ can be rewritten as the instance of H\"older's inequality 
$\int_0^b F_1(u)F_2(u)\,du\le(\int_0^b F_1(u)^p\,du)^{1/p}\,(\int_0^b F_2(u)^{\tilde p}\,du)^{1/\tilde p}$
with 
$F_1(u):=e^{-(1-t)q_1u^2}$, $F_2(u):=e^{-tq_2u^2}$, $p:=\frac1{1-t}$, and $\tilde p:=\frac1t$.\big]
So, $\ln\dfrac{\tS_{q+1/2}(b)\tS_0(b)}{\tS_{1/2}(b)\tS_q(b)}
=f(q+1/2)-f(q)-f(1/2)+f(0) 
=\int_0^{1/2}[f'(q+u)-f'(u)]\,du\ge0$ for $q\ge0$, since $f'(u)$ is increasing in $u$ 
\big(instead of the last equality here, one can also use inequality \cite[(3.17.5)]{HLP} with $\phi=f$, $h=x=q/2+1/4$ and $h'=q/2-1/4$ there\big). 
Recalling now \eqref{eq:s_k=} and using the elementary inequality $1-t\le e^{-t}$ for real $t$,  
we have 
\begin{equation}\label{eq:S_k<}
\begin{aligned}
	S_q(b)\le\tS_q(b)
	&\le\tS_{q+1/2}(b)\frac{\tS_0(b)}{\tS_{1/2}(b)} \\ 
	&=\frac{\sqrt\pi}2\,\frac{\erf(b\sqrt{q+1/2})}{\sqrt{q+1/2}}\,\frac{b}{\int_0^b e^{-u^2/2}\,du} \\ 
	&\le\frac{\sqrt\pi}2\,\frac{\erf(b\sqrt{q+1/2})}{\sqrt{q+1/2}}\,
	e^{b^2/6} \\ 
	&=\sqrt{\frac\pi2}\,\frac1{\sqrt k}\,
	e^{b^2/6}\,\erf(b\sqrt{k/2})
	=\sqrt{\frac\pi2}\,\frac1{\sqrt k}\,\psi_b(k);   
\end{aligned}	
\end{equation}
here we used the inequality $g(b):=\int_0^b e^{-u^2/2}\,du-be^{-b^2/6}>0$ for $b>0$, which follows because $g(0)=0$ and $g'(b)=e^{-b^2/6}(e^{-b^2/3}-(1-b^2/3))
>0$ for $b>0$. 
So, in view of \eqref{
eq:opt:=}, \eqref{eq:s_k=}, and \eqref{eq:C_i}, 
$\opt(k,b)\ge\tfrac12\,\big(1-C_i\psi_b(k)\big)$. 
Replacing here $k$ by $k+2$, one has 
\begin{equation*}
\opt(k+2,b)\ge\tfrac12\,\big(1-C_{i+1}\psi_b(k+2)\big)\ge\tfrac12\,\big(1-C_i\psi_b(k+2)\big); 	
\end{equation*}
the last inequality here follows because, by Lemma~\ref{lem:C_i}, $C_i$ decreases in $i$. 
Now \eqref{eq:hatopt(ka)} yields 
\begin{equation*}
	\hatopt(\ka,b)\ge\tfrac12-\tfrac12\,C_i
	\big[\tfrac{k+2-\ka}2\,\psi_b(k)+\tfrac{\ka-k}2\,\psi_b(k+2)\big]
	\ge \tfrac12-\tfrac12\,C_i\psi_b(\ka),  
\end{equation*}
the latter inequality following by the concavity of $\psi_b(u)$ in $u\ge0$. 
This completes the proof of Lemma~\ref{lem:hatopt}. 
\end{proof}


\section{Simplified form of \texorpdfstring{$c_{m,d}^\LB$}{}}
\label{c^LB reduction}
Take any finite set $\X$. Take then any 
hypothesis class $\H$ with $\vc\H=d$. 
Let now $\tX$ be any subset of $\X$ of cardinality $d$ such that $\tX$ is shattered by $\H$. 
Clearly, for any learning algorithm 
$L\in\L_{\X;m,\H}$ we have 
\begin{equation}\label{eq:reduct}
	\sup_{D\in\D_{\X}}\RR_m(L,D)\ge\sup_{D\in\D_{\X;\tX}}\RR_m(L,D), 
\end{equation}
where  
$\D_{\X;\tX}$ is the set of
all distributions $D\in\D_{\X}$ with support contained in the set $\tX\times\Y$ (and, recall, $\D_\X$ is the set of all distributions on $\X\times\Y$). 
Next, any distribution $D\in\D_{\X;\tX}$
may be identified with the corresponding distribution, say $\tilde D$, in $\D_{\tX}$. 
Also, for any $D\in\D_{\X;\tX}$, the value of $\RR_m(L,D)$ depends on $L$ only through its restriction, say $\tilde L$, to $(\tX\times\Y)^k$ --- so that $\RR_m(L,D)=\RR_m(\tilde L,\tilde D)$. 
Moreover, we may identify the finite set $\X$ with the set $[N]=\{1,\dots,N\}$ for some natural $N\ge d$, and then without loss of generality $\tX=[d]=\{1,\dots,d\}$.  
Thus, 
for any finite set $\X$ and any hypothesis class $\H$ with $\vc\H=d$, it follows from \eqref{eq:reduct} that 
\begin{equation}\label{eq:relabel}
\inf_{L\in\L_{\X;m,\H}}\,\sup_{D\in\D_{\X
}}\,\RR_m(L,D)
\ge\inf_{\tilde L\in\L_{[d];m,\Y^{[d]}}}\,\sup_{\tD\in\D_{[d]
}}\RR_m(\tilde L,\tD). 
\end{equation}
 
The right-hand side (RHS) of \eqref{eq:relabel} does not depend on $\X$ or $\H
$, as long as $\vc\H=d$. 
So, by \eqref{eq:ub,lb}, $c_{m,d}^\LB
/\sqrt{m/d}\ge
\text{r.h.s.\ of \eqref{eq:relabel}}$. The reverse of the latter inequality follows trivially from \eqref{eq:ub,lb}, since the set $[d]$ is finite and 
$\vc{\Y^{[d]}}=d$. 
We conclude that $c_{m,d}^\LB
/\sqrt{m/d}=
\text{r.h.s.\ of \eqref{eq:relabel}}$. 
Thus, \eqref{eq:clb} follows.

%
	
\bibliographystyle{imsart-number}

\bibliography{refs1}

\end{document}